
\documentclass[preprint,12pt]{elsarticle}



\usepackage{amssymb}
\usepackage{savesym}
\savesymbol{Bbbk}
\savesymbol{openbox}

\usepackage{xcolor}
\usepackage{url}
\usepackage[utf8]{inputenc}
\usepackage{graphicx}
\usepackage{amsmath}
\usepackage{amssymb}
\usepackage{amsthm}
\usepackage{MnSymbol}
\usepackage{booktabs}
\usepackage{algorithm}
\usepackage{algorithmic}
\urlstyle{same}
\usepackage{caption}
\usepackage{subcaption}
\usepackage{multirow}
\restoresymbol{NEW}{Bbbk}
\restoresymbol{NEW}{openbox}
\usepackage{tcolorbox}

\newcommand{\AR}[1]{\textcolor{black}{#1}}
\newcommand{\JJ}[1]{\textcolor{black}{#1}}

\def\resp{respectively}

\def\Models{\ensuremath{\mathcal{M}}}
\def\model{\ensuremath{\mathcal{M}}}
\def\CFXs{\ensuremath{\mathcal{C}}}
\def\Classes{\ensuremath{\mathcal{L}}}
\def\Props{\ensuremath{\mathcal{P}}}
\def\inn{\ensuremath{\mathbf{x}}}

\def\ProblemFull{\textbf{Recourse-Aware Ensembling}}
\def\Problem{RAE}
\def\AggModels{\ensuremath{\mathcal{M}^{n}}}
\def\method{naive ensembling}

\def\Args{\ensuremath{\mathcal{X}}}

\def\Atts{\ensuremath{\mathcal{A}}}
\def\Supps{\ensuremath{\mathcal{S}}}

\def\arg{\ensuremath{\alpha}}
\def\arga{\ensuremath{\arg_1}}
\def\argb{\ensuremath{\arg_2}}
\def\argc{\ensuremath{\arg_3}}

\def\model{\ensuremath{M}}

\def\cfx{\ensuremath{\mathbf{c}}}

\def\prop{\ensuremath{\pi}}

\def\ppreceq{\preceq_\Props}
\def\psucc{\succ_\Props}

\def\psimeq{\simeq_\Props}

\def\mpreceq{\preceq_\Models}
\def\msucc{\succ_\Models}
\def\msucceq{\succeq_\Models}
\def\msimeq{\simeq_\Models}

\def\class{\ensuremath{\ell}}

\def\rel{\ensuremath{r}}

\newtheorem{definition}{Definition}
\newtheorem{theorem}{Theorem}
\newtheorem{lemma}{Lemma}

\newtheorem{example}{Example}
\newtheorem{proposition}{Proposition}


\newcommand\myprob[3]{%
  \begin{tcolorbox}
  \vspace{-0.2cm}
  {\bfseries Problem: #1}\\
  {\bfseries Input}: #2\\
  {\bfseries Output}: #3\par
  \vspace{-0.2cm}
  \end{tcolorbox}
}


\journal{}

\begin{document}

\begin{frontmatter}



\title{Argumentative Ensembling for Robust Recourse under Model Multiplicity}


\author[a]{Junqi Jiang\corref{cor1}}
\ead{junqi.jiang@imperial.ac.uk}
\author[a,b]{Antonio Rago\corref{cor1}}
\ead{a.rago@imperial.ac.uk}
\author[a]{Francesco Leofante}
\ead{f.leofante@imperial.ac.uk}
\author[a]{Francesca Toni}
\ead{f.toni@imperial.ac.uk}


\cortext[cor1]{These authors contributed equally.}
\affiliation[a]{organization={Department of Computing, Imperial College London},
            UK}
\affiliation[b]{organization={Department of Informatics, King's College London},
            UK}

\begin{abstract}
\JJ{In machine learning, it is common to obtain multiple equally performing models for the same prediction task, e.g., \AR{when} training neural networks with different random seeds. Model multiplicity (MM) \AR{is the situation which arises when} these competing models differ in 
their predictions for the same input, for which ensembling is often employed \AR{to determine an aggregation of the outputs}. 
\AR{Providing recourse recommendations via counterfactual explanations (CEs) under MM thus becomes complex, since the CE may not be valid across all models, i.e., the CEs are not robust under MM.} In this work, we formalise the problem of \AR{providing} recourse under MM, which we name \emph{recourse-aware ensembling (RAE)}. We 
\AR{propose the idea} that under MM, CEs for each individual model should be considered alongside their predictions so that the aggregated prediction and recourse are decided in 
\AR{tandem}. Centred around this intuition, we introduce six desirable properties for solution\AR{s} to this problem. For solving RAE, we first extend existing ensembling methods, and show that they fall short in terms of property satisfaction. Then, we propose a novel \emph{argumentative ensembling} method 
\AR{which guarantees the robustness of CEs under MM}. Specifically, our method leverages computational argumentation to explicitly represent the conflicts between models \AR{and counterfactuals} regarding prediction results and CE validity\AR{. It then uses argumentation semantics} 
\AR{to resolve} the conflicts and obtain the final solution\AR{, in a manner which is parametric to the chosen semantics}. \AR{O}ur method \AR{also} allows \AR{for the specification of} preferences over the models under MM\AR{, allowing further customisation of the ensemble}. In a comprehensive theoretical analysis, we characterise the behaviour\AR{, focusing on the aforementioned properties,} of argumentative ensembling with four different argumentation semantics. We \AR{then} empirically demonstrate\AR{, across 3 datasets,} the effectiveness of our approach in satisfying desirable properties with eight instantiations of our method with different semantics and model preferences.}
\end{abstract}



\begin{keyword}
Argumentation \sep Model Multiplicity \sep Counterfactual Explanations


\end{keyword}

\end{frontmatter}

\section{Introduction}
\label{sec:intro}

\JJ{The phenomenon of Model Multiplicity (MM) \AR{occurs when multiple, equally performing models give conflicting predictions} for the same machine learning (ML) task \AR{\cite{Black_22}. These models may be obtained, e.g., from different random seeds, and may,}
e.g., model architectures, model types \AR{or} high-level properties like fairness and robustness.
This is also known as predictive multiplicity \cite{Marx_20} or the Rashomon effect \cite{breiman2001statistical}. MM has gained increasing attention when considering the trustworthiness of ML models, especially in scenarios where their predictions can substantially influence humans, like healthcare or finance \cite{ganesh2025curious}.} Consider the commonly used scenario of a loan application (Figure \ref{fig:page1_example}), where an individual modelled by input $\inn$ with features \textit{unemployed} status, \textit{33} years of age and \textit{low} credit rating applies for a loan. Assume the bank has trained a set of ML models $\Models=\{\model_1, \model_2, \model_3\}$ to predict whether the loan should be granted or not. Even though each $\model_i$ may exhibit good performance overall, their internal differences may lead to conflicts, e.g., if $\model_1(\inn) = \model_2(\inn) = 0$ (i.e., reject), while $\model_3(\inn) =1 $ (i.e., accept).

\begin{figure}[h]
    \centering
    \includegraphics[width=0.8\textwidth]{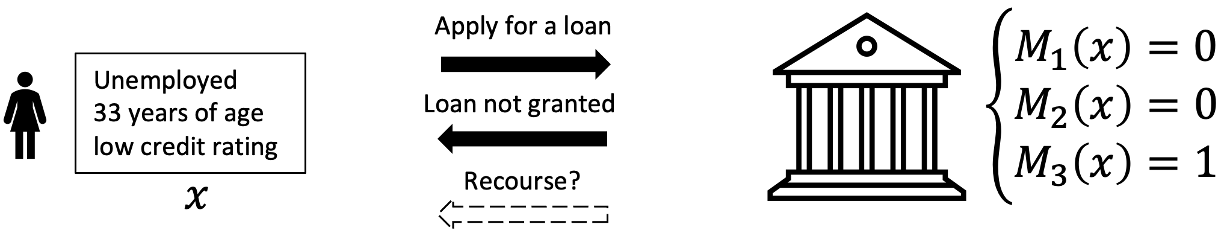}
	\caption{An example where a bank employs three ML models which reject a loan application by naive ensembling. Existing research ignores the need for recourse.
    \label{fig:page1_example}}
\end{figure}

\JJ{Recent research advocates taking into account the predictions from models under MM for inputs and reporting newly proposed metrics for evaluating MM severity, as an additional dimension of uncertainty that relevant stakeholders should be aware of \cite{semenova2022existence,hsu2022rashomon,watson2023predictive}. Additionally,} ensembling techniques are commonly used to deal with MM scenarios \cite{Black_22,black2022selective}. A standard such technique is \textit{\method}~\cite{black2022selective}, where the predictions of several models are aggregated to produce a single outcome that reflects the opinion of {a} majority of models. For instance, \method{} applied to our running example would result in a rejection, as a majority of the models agree that the {loan should not be granted} (Figure \ref{fig:page1_example}). \JJ{These efforts augment the standard ML practice of selecting the single most accurate model on a held-out test set, promoting better-informed predictions. However,} their application to consequential decision-making tasks raises some important challenges.

Indeed, 
these methods tend to ignore
the need to provide avenues for recourse to users negatively impacted by the models’ outputs, which the ML literature typically achieves via the provision of \emph{counterfactual explanations} (CEs) for the predictions (\JJ{see \cite{guidotti2022counterfactual,DBLP:journals/csur/KarimiBSV23,DBLP:conf/ijcai/Jiangsurvey24} for recent overviews}). \AR{This is likely because d}ealing with MM while also taking CEs into account is non-trivial. \JJ{Traditionally, CE generation methods assume access to a fixed model prediction, which is not directly available given the potentially conflicting predictions under MM. One apparent way for extending existing ensembling methods to account for recourse is to first retrieve an aggregated prediction, then compute CEs for this prediction. However,} 
CEs for
single models typically fail to produce recourse recommendations
that are valid across \JJ{the ensembled models \cite{pawelczyk2022uai}, and those that are designed to be robustly valid will be more difficult to achieve for the end user \cite{LeofanteBR23}. Such naively extended ensembling also ignores the rich information carried by the CEs about each model's local decision boundaries, which could further reflect similarities of subsets of models under MM CEs. Instead, as we show later in this work, determining the aggregated prediction and recourse in \AR{tandem} effectively addresses these limitations.}

Another challenge is that naive ensembling ignores other meta-evaluation aspects of models, like fairness, robustness, and interpretability, \AR{ignoring the fact that} models under MM can demonstrate substantial differences in these regards \cite{coston2021characterizing,rudin2019stop,d2022underspecification} 
 and users may have strong preferences for some of these aspects, e.g., they may prefer model fairness to accuracy \cite{Black_22}.

In this paper, we frame the recourse problem under MM formally and propose different approaches to accommodate recourse as well as user preferences \JJ{over models}. After covering related work (Section \ref{sec:related}) and the necessary preliminaries (Section \ref{sec:preliminaries}), we make the following contributions. 

\begin{itemize}
    \item \JJ{We propose the first formalisation of the problem, which we name \textit{recourse-aware ensembling} (RAE), along with six desirable properties which the solutions to RAE should satisfy (Section \ref{ssec:desirable_properties}).}
    \item \JJ{We present two natural extensions of naive ensembling to accommodate CEs for solving RAE, and theoretically show that they violate some important desirable properties (Section \ref{ssec:naive}).}
    \item \JJ{We then propose \textit{argumentative ensembling} \AR{(}Section \ref{sec:main}\AR{)}, a novel technique rooted in computational argumentation (see \cite{AImagazine17,handbook} for overviews). Specifically, we model the RAE problem into Bipolar Argumentation Frameworks (BAF), then apply four argumentation semantics \cite{Cayrol_05} to obtain the solution. Through extensive theoretical analysis, we show that it is able to solve RAE effectively while also naturally incorporating user preferences.}
    \item \JJ{We empirically evaluate eight instantiations of argumentative ensembling against the extended naive ensembling baselines \AR{(}Section \ref{sec:evaluation}\AR{)}. We show that our framework is effective in satisfying the desirable properties and addressing the model preferences without compromising the task performance.}
    \item \JJ{Finally, we conclude with discussion on limitations and future works \AR{(}Section \ref{sec:conclusion}\AR{)}.}
\end{itemize}

\JJ{This paper builds upon our previous work \cite{DBLP:conf/atal/JiangL0T24} with \AR{a number of significant} extensions. In Section \ref{sec:main}, we provide a more generalised definition of argumentative ensembling, parametrising the semantics that can be used in the BAF. 
We \AR{also} instantiate argumentative ensembling with four semantics (Section\AR{s} \ref{ssec:d_extension_theoretical} and \ref{ssec:s_extension_theoretical}), as opposed to one (s-preferred semantics) in the original work. We \AR{then} perform substantial new theoretical analyses throughout Section \ref{sec:main} for each instantiation of our method against the desirable properties we propose. In Section \ref{ssec:aaf}, we additionally discuss theoretical links between our method which uses bipolar argumentation and \AR{introduce} a \AR{novel,} alternative method \AR{which uses} abstract argumentation \cite{Dung_95}. Together with newly added illustrative examples throughout Section \ref{sec:main}, we present a more comprehensive demonstration of argumentative ensembling. 
\AR{Further, w}e restructured the evaluation metrics in Section \ref{sec:evaluation}, and separately analysed the effectiveness of our approach in terms of model preferences and property satisfaction, with improved visualisations. We \AR{then} added a new analysis on scalability with more experiments \AR{in} Section \ref{ssec:scalability}. Finally, we refined much of the paper with substantial details, better contextualising the research.} The implementation is publicly available at \url{https://github.com/junqi-jiang/argumentative_ensembling}. 



\section{Related Work}
\label{sec:related}

\AR{The intersection between explainable AI and solutions to the MM problem has not been given much attention to date. In this section we review work in this area after a brief overview of relevant work in MM at large. We also include work in computational argumentation, the basis for our proposed methodological solution
to recourse under MM.}

{\bf Model Multiplicity.} MM has been shown to affect several 
dimensions of trustworthy ML. In particular, among equally accurate models, there could 
be 
different fairness characteristics \cite{coston2021characterizing,wick2019unlocking,dutta2020there,rodolfa2021empirical}, levels of 
interpretability \cite{chen2018interpretable,rudin2019stop,semenova2022existence}, model robustness evaluations \cite{d2022underspecification} and even inconsistent explanations~\cite{dong2019variable,fisher2019all,mehrer2020individual,blackconsistent,ley2023consistent,marx2023but,LeofanteBR23}. 

Recent attempts have been made to 
address the MM problem. \citet{Black_22} suggested candidate models should be evaluated across additional dimensions other than accuracy (e.g., robustness or fairness evaluation thresholds). They provided a potential solution based on applying meta-rules to filter out undesirable models, then us
{ing} ensemble methods to aggregate them, or randomly select{ing} one of them. Extending these ideas, the selective ensembling method of \cite{black2022selective} embeds statistical testing into the ensembling process, such that when the numbers of candidate models predicting an input to the top two classes are close or equal (which could happen under naive ensembling strategies like majority voting), an abstention signal can be flagged for relevant stakeholders. 
\JJ{\citet{xin2022exploring} \AR{and} \citet{DBLP:conf/nips/Zhong00SR23} looked at efficiently enumerating all models obtainable under MM for decision trees and generalised additive models, respectively. Similarly, \citet{hsudropout,DBLP:conf/nips/HsuBSLC24} investigated obtaining a representative set of models under MM for neural networks and gradient-boosted trees.} \citet{roth2023reconciling} instead proposed a model reconciling procedure to resolve 
conflicts between 
disagreeing models. Meanwhile, a number of works \cite{Marx_20,semenova2022existence,hsu2022rashomon,watson2023predictive,DBLP:journals/corr/cavus24} propose metrics to quantify the severity of MM, characterising its impact in prediction tasks. \JJ{More recently, researchers also discuss MM problems in large language models and propose ways to obtain more consistent responses \AR{therefrom} \cite{hamman2024quantifying,DBLP:conf/atal/PotykaZHKS24,DBLP:conf/emnlp/ZhuPNXHKS24}.}



{\bf Counterfactual Explanations and MM.} 
A CE for a prediction of an input by an ML model is typically defined as another data point that minimally modifies the input such that the ML {model} would yield a desired classification~\cite{Tolomei_17,Wachter_17}. CEs are often advocated as a means to provide algorithmic recourse for data subjects, and as such, modern algorithms have been extended to enforce additional desirable properties such as~\textit{actionability}~\cite{ustun2019actionable}, \textit{plausibility}~\cite{dhurandhar2018explanations} and \textit{diversity}~\cite{mothilal2020explaining}. \JJ{Outside of \AR{the} tabular data domain, they are also widely deployed across image data \cite{DBLP:conf/pkdd/LooverenK21,DBLP:conf/aaai/KennyK21}, textual data \cite{DBLP:conf/acl/WuRHW20,jiang2025interpreting}, etc.}

More central to the MM problem, \citet{pawelczyk2022uai} pointed out that CEs on \AR{the} data manifold are more likely to be robust under MM than minimum-distance CEs. \citet{LeofanteBR23} proposed an algorithm for a given ensemble of feed-forward neural networks to compute robust CEs that are provably valid for all models in the ensemble. Also related to MM is the line of work that focuses on improving the robustness of CEs against model changes, e.g., parameter updates due to model retraining on the same or slightly shifted data distribution.
These studies usually aim to generate CEs that are robust across retrained versions of the same model, e.g., in \JJ{\cite{upadhyay2021towards,pmlr-v162-dutta22a,oursaaai23,hamman2023robust,jiang2023provably,DBLP:journals/ai/JiangLRT24}}, which is different from MM, where several (potentially structurally different) models are targeted together. \JJ{We refer to \cite{DBLP:conf/ijcai/Jiangsurvey24,mishra2021survey,Leofante2025} for recent surveys specifically on the robustness problem of CEs.}


\textbf{Computational Argumentation.}
{This discipline}, 
inspired by the seminal work of 
\cite{Dung_95}, 
{amounts to} a set of formalisms for 
dealing with 
conflicting information
, as demonstrated in numerous application areas, e.g., online debate \cite{Cabrio_13}, scheduling \cite{Cyras_19} and judgmental forecasting \cite{Irwin_22}.
There have also been a broad range of works (see {\cite{Cyras_21,Vassiliades_21,Guo_23}} for overviews) demonstrating 
{its} capability for explaining the outputs of AI models, e.g., neural networks \cite{Potyka_21,Dejl_21}, Bayesian classifiers \cite{Timmer_15} and random forests \cite{Potyka_23}. \JJ{Computational argumentation has also been advocated as a framework to facilitate structured interactions between humans and AI models \cite{DBLP:conf/kr/LeofanteADFGJP024}.} 
To the best of our knowledge, the only work 
{applying computational argumentation} to MM specifically is~\cite{Abchiche-Mimouni_23}, where the introduced method extracts and selects winning rules from an ensemble of classifiers.
This differs from our method as we consider pre-existing CEs computed for single models, rather than 
{rules}, 
{while also accommodating preferences}. 





\section{Preliminaries}
\label{sec:preliminaries}

Given 
a set of classification \emph{labels} $\Classes$, a \emph{model}  is a mapping $M: \mathbb{R}^n \rightarrow \Classes$; we denote that $\model$ classifies an \emph{input} $\inn \in \mathbb{R}^n$ as $\class 
$ iff  $\model(\inn) = \class$. 
{(Note that binary classification amounts to $\Classes = \{ 0, 1 \}$: for simplicity, this will be the focus of all illustrations and of the experiments in Section \ref{sec:evaluation}.)}
Then, a \emph{counterfactual explanation} (CE) for $\inn$, given $\model$, is some $\cfx \in \mathbb{R}^n \setminus \{ \inn \}$ such that $\model(\cfx) \neq \model(\inn)$, which may be optimised by some distance metric between the inputs. 

\JJ{An \emph{abstract argumentation framework} (AAF) \cite{Dung_95} is a tuple $\langle \Args, \Atts\rangle$, where $\Args$ is a set of \emph{arguments}, $\Atts \!\subseteq \!\Args \!\times \!\Args$ is a directed relation of \emph{attack}. Given an AAF $\langle \Args, \Atts\rangle$, for any $\arga \in \Args$, we refer to $\Atts(\arga) = \{\argb | (\argb, \arga) \in \Args\}$ as the \emph{attackers of $\arga$}. \JJ{Let us construct an example AAF for the running loan example in Figure \ref{fig:page1_example}, with each model as an argument. An attack exists between two models if their predictions for the applicant are different. This AAF is visualised in Figure \ref{fig:prelim_example}}. We use the notions of \emph{acceptability} of sets of arguments in AAFs~\cite{Dung_95}\AR{, as follows}.
A set of arguments $X \subseteq \Args$, also called an \emph{extension}, is said to be \emph{conflict-free} iff $\nexists \arga, \argb \in X$ such that $\arga$ attacks $\argb$. A set \AR{$X$} \emph{defends} any $\arga \in \Args$ iff $\forall \argb \in \Args$, if $\argb$ attacks $\arga$ then $\exists \argc \in X$ such that $\argc$ attacks $\argb$. The notion of a set being \emph{admissible} requires that $X$ is conflict-free and defends all its elements. $X$ is said to be \emph{preferred} iff it is the maximal admissible set wrt set-inclusion ($\subseteq$-maximal onwards). We denote the set of all preferred extensions as $P$. \JJ{The admissible sets in our example are $\{M_1\}, \{M_2\}, \{M_3\}$, $\{M_1, M_2\}$, and the preferred extensions are $\{M_1, M_2\}$ and $\{M_3\}$. Note that each preferred extension groups together the models with the same prediction.}}

\begin{figure}[h]
    \centering
    \includegraphics[width=0.3\textwidth]{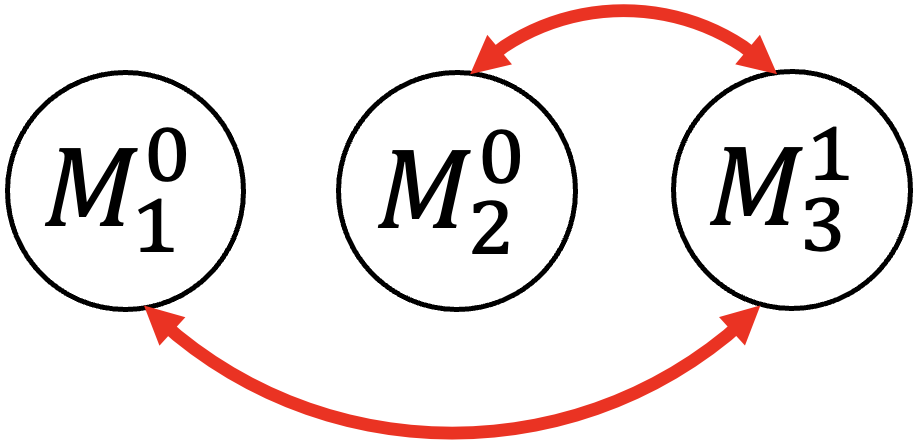}
	\caption{\JJ{AAF for the loan example where model's predictions for the input $\inn$ are given as superscripts, \AR{i.e.}, $M_1(\inn)=0$\AR{, $M_2(\inn)=0$ and} $M_3(\inn)=1$; reciprocal attacks are represented by double-headed red arrows.}
    \label{fig:prelim_example}}
\end{figure}

Similarly, a \emph{bipolar argumentation framework} (BAF) \cite{Cayrol_05} is a tuple $\langle \Args, \Atts, \Supps \rangle$, where 
\AR{$\langle \Args, \Atts \rangle$ is an AAF} and $\Supps\! \subseteq \!\Args \!\times \!\Args$ is a directed relation of \emph{direct support}.
Given a BAF $\langle \Args, \Atts, \Supps \rangle$, for any $\arga \!\in \!\Args$, we refer to 
$\Supps(\arga) \!= \! \{ \argb | (\argb, \arga) \in \Supps \}$ as \emph{$\arga$'s direct supporters}.
Then, an \emph{indirect attack} from $\arg_x$ on $\arg_y$ is a sequence $\arg_1, \rel_1, \ldots, \rel_{n-1}, \arg_n$, where $n \geq 3$, $\arg_1 = \arg_x$, $\arg_n = \arg_y$, $\rel_{1} \in \Atts$ and $\rel_i \in \Supps$ $\forall i \!\in \!\{ 2, \ldots, n - 1 \}$.
Similarly, a \emph{supported attack} from $\arg_x$ on $\arg_y$ is a sequence $\arg_1, \rel_1, \ldots, \rel_{n-1}, \arg_n$, where $n \geq 3$, $\arg_1 = \arg_x$, $\arg_n = \arg_y$, $\rel_{n-1} \in \Atts$ and $\rel_i \in \Supps$ $\forall i \in \{ 1, \ldots, n - 2 \}$.
Straightforwardly, a supported attack on an argument implies 
{a direct attack}. \JJ{Let us construct an example BAF extending the AAF example (Figure \ref{fig:prelim_example}), shown in Figure \ref{fig:prelim_example2}. Assume there are three more arguments 
\AR{$\cfx_1$, $\cfx_2$ and $\cfx_3$}, respectively having bi-directional supports with 
\AR{$M_1$, $M_2$ and $M_3$}. Then, \AR{for example,} there is an indirect attack from $M_3$ to $\cfx_1$, and a supported attack from $\cfx_1$ to $M_3$.}

\begin{figure}[h]
    \centering
    \includegraphics[width=0.3\textwidth]{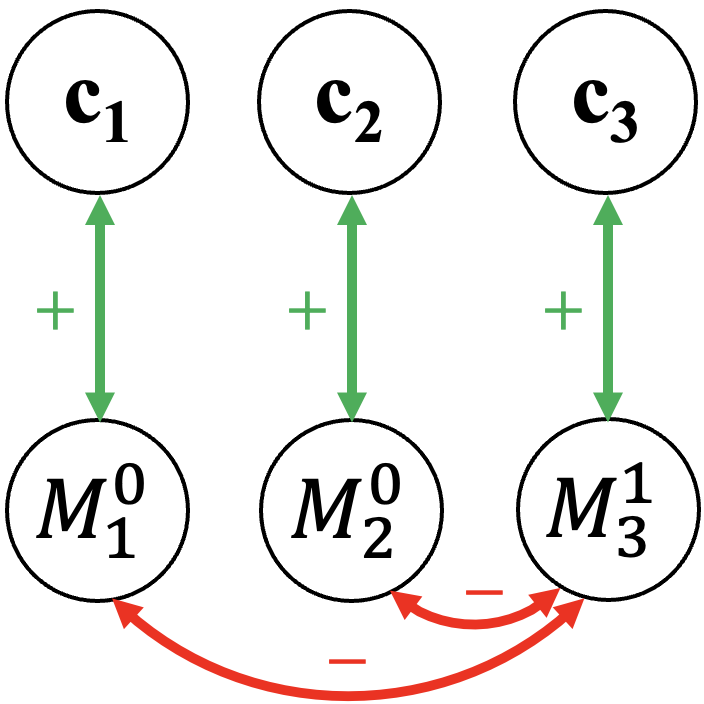}
	\caption{\JJ{An example BAF extending the previous AAF example; reciprocal supports are represented by dual-headed green arrows labelled with $+$ and reciprocal attacks are represented by dual-headed red arrows labelled with $-$.}
    \label{fig:prelim_example2}}
\end{figure}

We will also use notions of acceptability of sets of arguments in BAFs~\cite{Cayrol_05}.
A set of arguments (or extension) $X \subseteq \Args$ is said to \emph{set-attack} any $\arga \in \Args$ iff there exists an attack (whether direct, indirect or supported) from some $\argb \in X$ on $\arga$. 
Meanwhile, $X$ is said to \emph{set-support} any $\arga \in \Args$ iff there exists a direct support from some $\argb \in X$ on $\arga$.\footnote{
In \cite{Cayrol_05}, set-supports are defined 
via 
sequences of supports, 
which we do not use 
here.}
\JJ{For example, in Figure \ref{fig:prelim_example2}, $\{\cfx_1, \cfx_2\}$ set-attacks $M_3$ (because both $\cfx_1$ and $\cfx_2$ supported-attack $M_3$) and set-supports $M_2$.} 
Then, a set $X \subseteq \Args$ \emph{defends} any $\arga \in \Args$ iff $\forall \argb \in \Args$, if $\{ \argb \}$ set-attacks $\arga$ then $\exists \argc \in X$ such that $\{ \argc \}$ set-attacks $\argb$.
Any set $X \subseteq \Args$ is then said to be \emph{conflict-free} iff $\nexists \arga, \argb \in X$ such that $\{ \arga \}$ set-attacks $\argb$, and \emph{safe} iff $\nexists \argc \in \Args$ such that $X$ set-attacks $\argc$ and either: $X$ set-supports $\argc$; or $\argc \in X$. (Note that a safe set is guaranteed to be conflict-free.) \JJ{In the example, the set $\{\cfx_2, \cfx_3\}$ is conflict-free but is not safe, because the two arguments, while not directly attacking each other, simultaneously \AR{supported-}attack and support 
\AR{the same} arguments ($M_2$ and $M_3$)
.} 
\JJ{The notion of a set $X \subseteq \Args$ being \emph{stable} requires that $X$ is conflict-free, and that $X$ set-attacks all arguments which are not in $X$. Similarly, 
the notion of a set $X \subseteq \Args$ being \emph{d-admissible} (based on admissibility in \cite{Dung_95}) 
requires that $X$ is conflict-free and defends all of its elements.}
\JJ{In our running BAF example, the set $\{\cfx_3, M_3\}$ is both stable and d-admissible.}
This notion is extended to account for safe sets: $X$ is said to be \emph{s-admissible} iff it is safe and defends all of its elements. 
({Thus} an s-admissible set is guaranteed to be d-admissible.)
{Further,} $X$ is said to be \emph{c-admissible} iff it is conflict-free, closed for $\Supps$ and defends all of its elements.
Finally, 
$X$ is said to be \emph{d-preferred} (\resp, \emph{s-preferred}, \emph{c-preferred}) iff it is $\subseteq$-maximal d-admissible (\resp, s-admissible, c-\AR{admissible})\JJ{. We denote the set of all \AR{d-preferred (\resp, s-preferred, c-preferred)} extensions as $P^d$ (\resp, $P^s, P^c$). Additionally, we write
the set of all stable extensions as $P^{stable}$. 
Finally, for $\sigma$ amongst any semantics (d-\AR{preferred}, s-\AR{preferred}, c-preferred, and stable), 
we use $P^{\sigma}$ to denote the set of all $\sigma$ extensions.} \JJ{Continuing with the example, the set $\{\cfx_1, \cfx_2, \cfx_3\}$ is stable and d-preferred, but is not s- or c-preferred because it is not safe or closed for support relations. On the contrary, the set $\{\cfx_1, M_1, \cfx_2, M_2\}$ satisfies the requirements for being stable, d-, s-, and c-preferred.}



\section{Recourse under Model Multiplicity}
\label{sec:problem}
 

As mentioned in 
Section \ref{sec:intro}, a common way to deal with MM in practice is to employ ensembling techniques, where the prediction outcomes of several models are aggregated to produce a single outcome. Aggregation can be performed in different ways, as discussed in~\cite{Black_22, black2022selective}. In the following, we formalise a notion of \textit{naive ensembling}, adopted in~\cite{Black_22}, and also known as \textit{majority voting}, which will serve as a baseline for our analysis.\footnote{It should be noted that, in \cite{Black_22}, the case where there is no majority is not discussed.}



\begin{definition}
\label{def:naive_ensembling}
    Given an input $\inn$, a set of models $\Models$ and a set of labels $\Classes$, we define the set of \emph{top labels} $\Classes_{max} \subseteq \Classes$ as:
    
    $$
        \Classes_{max} = argmax_{\class_i \in \Classes} | \{ \model_j \in \Models | \model_j(\inn) = \class_i \} |. \nonumber
   $$
   \\
    Then we then use $\AggModels(\inn) \in \Classes_{max}$ to denote the aggregated classification by \emph{naive ensembling}. 
    In the cases where $| \Classes_{max} | > 1$, we select $\AggModels(\inn)$ from $\Classes_{max}$ randomly
    .
    With an abuse of notation, we also let $\AggModels = \{ \model_j \in \Models | \model_j(\inn) = \AggModels(\inn) \}$ denote the set of models that agree on the aggregated classification.
\end{definition}
%
For illustration, coming back to our loan example where $\model_1$ and $\model_2$ reject the loan ($\model_1(\inn) = \model_2(\inn) = 0$) while $\model_3$ accepts it ($\model_3(\inn) = 1$), we obtain $\AggModels(\inn) = 0$ and $\AggModels = \{ \model_1, \model_2 \}$. 

Naive ensembling is known to be an effective strategy to mediate conflicts between models and is routinely used in practical applications. However, in this paper, we take an additional step and aim to generate CEs providing recourse for a user that has been impacted by $\AggModels(\inn)$. Recent work by~\cite{LeofanteBR23} has shown that standard algorithms designed to generate CEs for single models typically fail to produce recourse recommendations that are robust across $\AggModels$. One natural idea to address this would be to extend naive ensembling to account for CEs. Next, we formalise this idea 
in terms of several properties that 
{we deem} important in this setting. We then 
{analyse two concrete methods extending naive ensembling} in terms of the properties
.

\subsection{Problem Statement and Desirable Properties}
\label{ssec:desirable_properties}

Consider a non-empty set of models $\Models \!\!=\!\!\{\model_1, \ldots, \model_m\!\}$ and, for an input $\inn$, 
assume a set 
$\CFXs(\inn)\!=\!\{\cfx_1, \ldots, \cfx_m\}$ where 
each $\cfx_i \in \CFXs(\inn)$ is 
a CE for $\inn$, given $\model_i$. In the rest of the paper, wherever it is clear that we refer to a given $\inn$, we use $\CFXs$ and omit its dependency on $\inn$ for readability.
Our aim is to solve the problem 
outlined below. 

\myprob{\ProblemFull{} (\Problem{})}{input $\inn$, set  $\Models$ of models, set $\CFXs$ of CEs}{``optimal'' set $S \subseteq \Models \cup \CFXs$ of models and CEs.}

 To characterise 
optimality, we propose a number of desirable properties for the outputs of ensembling methods. We refer to these outputs as \emph{solutions} for \Problem{}. The 
most basic 
requirement 
requires  
that both models and CEs in the output are non-empty. 

\begin{definition}
\label{def:non-emptiness}
    An ensembling method satisfies \emph{non-emptiness} iff for any given input $\inn$, set  $\Models$ of models and set $\CFXs$ of CEs, any 
    solution 
    $S \subseteq \Models \cup \CFXs$ is such that $S \cap \Models \neq \emptyset$ and $S \cap \CFXs \neq \emptyset$. 
\end{definition}

Specifically, 
non-emptiness ensures that 
the \Problem{} 
method returns 
some models and some CEs. 
We then look to ensure that the 
\Problem{} method returns a non-trivial set of models, as formalised 
{next}.

\begin{definition}\label{def:non-triviality}
    An ensembling method satisfies \emph{non-triviality} iff for any given input $\inn$, set  $\Models$ of models and set $\CFXs$ of CEs, any 
    solution $S \subseteq \Models \cup \CFXs$ is such that $| S \cap \Models | > 1$. 
\end{definition}

Clearly, the 
returned models should not disagree amongst themselves on the classification, which leads to our next requirement.



\begin{definition} \label{def:model_agreement}
    An ensembling method satisfies \emph{model agreement} iff for any given input $\inn$, set  $\Models$ of models and set  $\CFXs$ of CEs, any 
    solution $S \subseteq \Models \cup \CFXs$ is such that $\forall \model_i, \model_j \in S \cap \Models$, $\model_i(\inn) = \model_j(\inn)$. 
\end{definition}


The next property, which itself requires model agreement to be satisfied, checks 
whether the 
set of returned models is among the largest of the agreeing sets of models, a motivating property of naive ensembling.

\begin{definition}\label{def:majority_vote}
    An ensembling method satisfies \emph{majority vote} iff it satisfies model agreement and for any given input $\inn$, set  $\Models$ of models, set $\CFXs$ of CEs  and set $\Classes$ of labels, any 
    solution $S \subseteq \Models \cup \CFXs$ is such that, letting $\class_i = \model_j(\inn)$ {for all} $
    \model_j \in S \cap \Models$, $\nexists \class_k \in \Classes \setminus \{ \class_i \}$ such that $| \{ \model_l \in \Models | \model_l(\inn) = \class_k \} | > | \{ \model_
    {l}\in \Models | \model_
    {l}(\inn) = \class_i \} |$.
\end{definition}

Next, we consider the robustness of recourse. Previous work~\cite{LeofanteBR23} considered a very conservative notion of robustness whereby explanations are required to be valid for all models in $\Models$. While this might be desirable in some cases, we highlight that satisfying this property may not always be feasible in practice. We therefore propose a relaxed notion of robustness, which requires that CEs are valid only for the models that support them.

\begin{definition}\label{def:counterfactual_validity}
    An ensembling method satisfies \emph{counterfactual validity} iff for any given input $\inn$, set  $\Models$ of models and set $\CFXs$ of CEs, any 
    solution $S \subseteq \Models \cup \CFXs$ is such that $\forall \model_i \in S \cap \Models$ and $\forall \cfx_j \in S \cap \CFXs$, $\model_i(\cfx_j) \neq \model_i(\inn)$.
\end{definition}

While counterfactual validity is a fundamental requirement for any sound ensembling method, one also needs to ensure that the solutions it generates are coherent, as formalised below. 

\begin{definition}\label{def:explanation-coherence}
    An ensembling method satisfies \emph{counterfactual coherence} iff for any given input $\inn$, set $\Models$ of models and set  $\CFXs$ of CEs, any 
    solution $S \subseteq \Models \cup \CFXs$
    , where $\Models = \{ \model_1, \ldots, \model_m \}$ and $\CFXs = \{ \cfx_1, \ldots, \cfx_m \}$, 
    is such that $\forall i \in \{ 1, \ldots, m \}$, 
    $\model_i \in S$ iff $\cfx_i \in S$.
\end{definition}

Intuitively coherence requires that \textit{(i)} 
a CE is returned only if it is supported by a model and \textit{(ii)} when the 
CE is chosen, then its corresponding model must be part of the support. This ultimately guarantees strong justification as to why a given recourse is suggested since selected models and their reasoning (represented by their CEs) are assessed in tandem. 

The properties defined above may not be all satisfiable at the same time in practice
. Next, we discuss two 
{methods} extending naive ensembling towards solving \Problem{, and explore their satisfaction (or otherwise) of the properties}.

\subsection{
Extending Naive Ensembling for Recourse}
\label{ssec:naive}

We now present two strategies that leverage naive ensembling to solve \Problem{}. In particular, we use the relationship between models in the ensemble $\AggModels$ and their corresponding CEs as follows.

 \begin{definition}
\label{def:baselinecfxs}
\label{def:cfx_selection_methods}
    Consider an input $\inn$, a set $\Models$ of models and a set  $\CFXs$ of CEs. Let $\AggModels \subseteq \Models$ be the set of models obtained by naive ensembling. We define 
        the set of \emph{naive 
        CEs} 
        as:
        
       $$
                    \CFXs^n = \{ \cfx_i \in  \CFXs \mid \model_i \in \AggModels \};
       $$
       \\
        and the set of \emph{valid 
        CEs} as:
        
       $$
            \CFXs^{v} = \{ \cfx_i \in  \CFXs \mid \model_i \in \AggModels 
            \wedge \forall \model_j \in \AggModels, \model_j(\cfx_i) \neq \model_j(\inn) \}.
        $$
        \\
    Then, two possible solutions to \Problem{} are $S^n = \AggModels \cup \CFXs^n$, named \emph{augmented ensembling}, or $S^v = \AggModels \cup \CFXs^{v}$, named \emph{robust ensembling}.
\end{definition}

Intuitively, 
augmented ensembling suggests 
taking all the CEs in $\CFXs$ that correspond to the models 
in $\AggModels$. 
Meanwhile, robust ensembling extends augmented ensembling by enforcing the additional constraint that CEs are selected only if they are valid for all models in $\AggModels$. We now provide an illustrative example to clarify the results produced by the two strategies.

\begin{example}
\label{ex:baselines_example}
    Consider 
    $\Models = \{ \model_1, \model_2, \model_3, \model_4, \model_5 \}$ and an input $\inn$ such that $\model_1(\inn) = \model_2(\inn) = \model_3(\inn) = 0$ and $\model_4(\inn) = \model_5(\inn) = 1$. Let $\CFXs = \{\cfx_1, \cfx_2, \cfx_3, \cfx_4, \cfx_5\}$ be the set of CEs generated for $\inn$, i.e., $\model_1(\cfx_1) = \model_2(\cfx_2) = \model_3(\cfx_3) = 1$, while $\model_4(\cfx_4) = \model_5(\cfx_5) = 0$. Applying naive ensembling to $\Models$ yields $\AggModels = \{\model_1, \model_2, \model_3\}$ and $\AggModels(\inn)=0$. Then, the set of naive CEs is $\CFXs^n = \{\cfx_1, \cfx_2, \cfx_3\}$, and thus augmented ensembling gives $S^n = \{\model_1, \model_2, \model_3, \cfx_1, \cfx_2, \cfx_3\}$. 
    Now, assume that $\cfx_1$ is invalid for $\model_2$ (i.e., $\model_2(\cfx_1) = 0$), $\cfx_2$ is invalid for $\model_1$, $\cfx_3$ is invalid for $\model_2$, and all three CEs are otherwise valid for all models in $\AggModels$. Then, then the set of valid CEs 
    is empty, i.e., $\CFXs^{v} = \emptyset$, and thus robust ensembling gives $S^v = \{\model_1, \model_2, \model_3 \}$.
\end{example}

This example shows that both methods host major drawbacks: augmented ensembling may produce CEs which are invalid and thus it is not robust to  MM, while robust ensembling is prone to returning 
no CEs. 
We now present a theoretical analysis to assess the extent to which augmented and robust ensembling are able to satisfy the properties given in Definitions~\ref{def:non-emptiness} to \ref{def:explanation-coherence}.

\begin{theorem}
\label{thm:s1}
    Augmented ensembling satisfies non-emptiness, model agreement, majority vote and counterfactual coherence. It satisfies non-triviality if $|\Models| \!>\! 2$. It does not satisfy counterfactual validity.
\end{theorem}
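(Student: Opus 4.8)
The plan is to handle the properties one at a time, exploiting that for augmented ensembling we have $S^n \cap \Models = \AggModels$ and $S^n \cap \CFXs = \CFXs^n$, so each property reduces to a statement about these two sets. Most of the positive claims will follow almost immediately from Definitions \ref{def:naive_ensembling} and \ref{def:cfx_selection_methods}.

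For \emph{non-emptiness} (Definition \ref{def:non-emptiness}), I would first observe that $\Classes_{max}$ is a non-empty argmax over the non-empty set $\Models$, so at least one model predicts $\AggModels(\inn)$ and hence $\AggModels \neq \emptyset$. Using the one-to-one correspondence $\model_i \leftrightarrow \cfx_i$ between $\Models$ and $\CFXs$, any $\model_i \in \AggModels$ contributes its $\cfx_i$ to $\CFXs^n$, so $\CFXs^n \neq \emptyset$ as well. For \emph{model agreement} (Definition \ref{def:model_agreement}), every $\model_i \in \AggModels$ satisfies $\model_i(\inn) = \AggModels(\inn)$ by the definition of $\AggModels$, so all returned models agree. \emph{Majority vote} (Definition \ref{def:majority_vote}) then follows because $\AggModels(\inn) \in \Classes_{max}$ is by construction an argmax, so no label is predicted by strictly more models. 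For \emph{counterfactual coherence} (Definition \ref{def:explanation-coherence}), the definition $\CFXs^n = \{\cfx_i \mid \model_i \in \AggModels\}$ gives directly that $\cfx_i \in S^n \iff \model_i \in \AggModels \iff \model_i \in S^n$, which is exactly the required biconditional.

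The one part requiring a genuine (if small) combinatorial argument is \emph{non-triviality} (Definition \ref{def:non-triviality}), which I would treat as the main technical step. Here I need $|\AggModels| > 1$ whenever $|\Models| > 2$. Since $\AggModels(\inn)$ is selected from $\Classes_{max}$, possibly by random tie-break, the care is to ensure the chosen top label is backed by at least two models regardless of which top label is picked. Because every label in $\Classes_{max}$ attains the maximal count, it suffices to show this maximal count is at least $2$; a pigeonhole argument over $|\Models| > |\Classes|$ (in the binary setting $|\Classes| = 2$, so the hypothesis $|\Models| > 2$ is exactly what is needed) gives that some label is predicted by at least $\lceil |\Models|/|\Classes| \rceil \geq 2$ models, forcing the maximal count, and hence $|\AggModels|$, to be at least $2$.

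Finally, for the negative claim that augmented ensembling does \emph{not} satisfy counterfactual validity (Definition \ref{def:counterfactual_validity}), I would simply invoke the counterexample already set up in Example \ref{ex:baselines_example}: there $S^n = \{\model_1,\model_2,\model_3,\cfx_1,\cfx_2,\cfx_3\}$, yet $\cfx_1$ is invalid for $\model_2$, i.e. $\model_2(\cfx_1) = \model_2(\inn)$, with $\model_2 \in S^n \cap \Models$ and $\cfx_1 \in S^n \cap \CFXs$, directly violating the property. I expect no real obstacle beyond the pigeonhole bookkeeping for non-triviality and the need to check that the random choice among tied top labels does not affect the count.
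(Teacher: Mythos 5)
Your proposal is correct and follows essentially the same route as the paper's proof: each positive property is read off directly from Definitions \ref{def:naive_ensembling} and \ref{def:cfx_selection_methods}, counterfactual validity is refuted via Example \ref{ex:baselines_example}, and non-triviality rests on the same observation that in the binary setting a singleton majority class is impossible when $|\Models|>2$ (you phrase this as a pigeonhole bound, the paper as a contradiction, but the content is identical). If anything, your treatment is slightly more careful than the paper's in spelling out the CE half of non-emptiness and the irrelevance of the random tie-break among labels in $\Classes_{max}$.
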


\begin{proof}
    By Definition~\ref{def:naive_ensembling}, it can be seen by inspection that $|\AggModels|\!\!>\!\! 0$. Thus, non-emptiness is satisfied.
    Again 
    by inspection of the same definition, 
    $\forall \model_i, \model_j \in \AggModels$, $\model_i(\inn) = \model_j(\inn)$. Thus, model agreement is satisfied. 
    We can also see that $\nexists \class_i \in \Classes \setminus \{ \AggModels(\inn) \}$ such that $| \{ \model_j \in \Models | \model_j(\inn) = \class_i \} | > | \AggModels |$. Thus, majority vote is satisfied. 
    By Definitions~\ref{def:naive_ensembling} and \ref{def:baselinecfxs}, it can be seen that $\forall \model_i \in \Models$ and $\forall \cfx_i \in \CFXs$, $\model_i \in S$ iff $\cfx_i \in S$. Thus, counterfactual coherence is satisfied.

    Example \ref{ex:baselines_example}
     shows that counterfactual validity is not satisfied by providing a counterexample.
    
    Finally, the partial satisfaction of non-triviality can be proven by contradiction: assume $|\Models| \!=\! n$,  $n \!>\! 2$ but $|\AggModels| \!=\! 1$. By Definition~\ref{def:naive_ensembling}, $\AggModels$ is the largest subset of $\Models$ containing models with the same classification outcome. However, for binary classification, this implies that the remaining $n-1$ all agree on the opposite classification, i.e., $|\Models \!\setminus\! \AggModels| \!>\! |\AggModels| $, which leads to a contradiction.
\end{proof}

\begin{theorem}
\label{thm:s2}
    Robust ensembling satisfies model agreement, majority vote and counterfactual validity. It satisfies non-triviality if $|\Models| > 2$. It does not satisfy non-emptiness or counterfactual coherence.
\end{theorem}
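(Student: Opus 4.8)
The plan is to exploit the structural overlap with Theorem~\ref{thm:s1}: since robust ensembling returns $S^v \cap \Models = \AggModels$, exactly the same model component as augmented ensembling, every property whose statement constrains only the models carries over verbatim. Concretely, I would first observe that model agreement, majority vote and the conditional non-triviality (for $|\Models| > 2$) depend solely on $S^v \cap \Models = \AggModels$, and hence follow by the identical inspection and contradiction arguments already used for $S^n$ in Theorem~\ref{thm:s1}. In particular, model agreement and majority vote are immediate from Definition~\ref{def:naive_ensembling} (as $\AggModels$ is a largest set of mutually agreeing models), and non-triviality reuses the binary-classification counting argument showing $|\AggModels| > 1$ whenever $|\Models| > 2$.

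Second, I would establish counterfactual validity directly from the construction of $\CFXs^{v}$. By Definition~\ref{def:cfx_selection_methods}, every $\cfx_j \in \CFXs^{v}$ satisfies $\model_k(\cfx_j) \neq \model_k(\inn)$ for all $\model_k \in \AggModels$. Since $S^v \cap \Models = \AggModels$ and $S^v \cap \CFXs = \CFXs^{v}$, it follows immediately that for every $\model_i \in S^v \cap \Models$ and every $\cfx_j \in S^v \cap \CFXs$ we have $\model_i(\cfx_j) \neq \model_i(\inn)$, which is exactly Definition~\ref{def:counterfactual_validity}. Thus validity holds by design --- indeed it is the very constraint that distinguishes robust from augmented ensembling.

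Finally, for the two negative claims I would simply appeal to the counterexample already constructed in Example~\ref{ex:baselines_example}, where the valid CE set collapses, $\CFXs^{v} = \emptyset$, while $\AggModels = \{\model_1, \model_2, \model_3\} \neq \emptyset$. This single instance refutes both properties at once: non-emptiness fails because $S^v \cap \CFXs = \emptyset$ violates the requirement $S \cap \CFXs \neq \emptyset$ in Definition~\ref{def:non-emptiness}; and counterfactual coherence fails because, e.g., $\model_1 \in S^v$ yet $\cfx_1 \notin S^v$, breaking the ``iff'' of Definition~\ref{def:explanation-coherence}.

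There is no genuine obstacle here; the proof is essentially bookkeeping. The only point requiring a little care is recognising that the positive model-side properties are insensitive to the switch from $\CFXs^n$ to $\CFXs^{v}$ (so the arguments of Theorem~\ref{thm:s1} transfer unchanged), and that the validity constraint embedded in $\CFXs^{v}$ is precisely what makes counterfactual validity automatic while simultaneously opening the door to emptiness --- the same mechanism that secures one property is what causes the failure of the other two.
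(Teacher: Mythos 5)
Your proposal is correct and follows essentially the same route as the paper's proof: the model-side properties (model agreement, majority vote, conditional non-triviality) are inherited from Theorem~\ref{thm:s1} since $S^v \cap \Models = \AggModels$, counterfactual validity follows by inspection of Definition~\ref{def:baselinecfxs}, and Example~\ref{ex:baselines_example} serves as the counterexample for both non-emptiness and counterfactual coherence. Your write-up is somewhat more explicit than the paper's (which simply says the analogous arguments are ``omitted''), but the substance is identical.
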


\begin{proof}
    The proofs for model agreement, majority vote and non-triviality are analogous to those in Theorem \ref{thm:s1} and so are omitted.

    It can be seen by inspection of Definition \ref{def:baselinecfxs} that $\forall \!\model_i \!\in\! \AggModels$ and $\forall \cfx_j \!\in \!\CFXs^v$, $\model_i(\cfx_j) \!\neq \!\model_i(\inn)$. Thus, counterfactual validity is satisfied.

    Example \ref{ex:baselines_example} shows that non-emptiness and counterfactual coherence are not satisfied by providing a counterexample.
\end{proof}

These results, summarised in Table \ref{table:props},
demonstrate that there may exist cases in which both augmented and robust ensembling fail to solve \Problem{} satisfactorily. This has strong implications on the quality of the results obtained in practice, as we will show experimentally in Section \ref{sec:evaluation}. Further, these methods provide no way to take into account users' preferences over the models. 
As previously mentioned (see Section \ref{sec:intro} and Section \ref{sec:related}), there could be different characteristics among the models in $\Models$ in terms of meta-evaluation aspects, e.g., a model's fairness, robustness, and simplicity. Depending on the task, a model's fairness might be specified as being more important than its robustness or simplicity. In such cases, it would be desirable to have a principled way to rank models according to the preference specification, as promoted in \cite{Black_22}. 
The combination of these deficiencies motivates the need for a richer ensembling framework to solve \Problem{} while incorporating user preferences, 
given next.

\begin{table}[t]
    \begin{center}
    \begin{tabular}{cccc}
    \cline{2-4}
     &
    $S^{n}$ &
    $S^{v}$ &
    $S^{a,s}$ \\
    \hline
    non-emptiness &
    $\checkmark$ &
     &
    $\checkmark$ \\
    non-triviality &
    $\checkmark^*$ &
    $\checkmark^*$ &
    $\checkmark^*$ \\
    model agreement &
    $\checkmark$ &
    $\checkmark$ &
    $\checkmark$ \\
    majority vote &
    $\checkmark$ &
    $\checkmark$ &
     \\
    counterfactual validity &
     &
    $\checkmark$ &
    $\checkmark$ \\
    counterfactual coherence &
    $\checkmark$ &
     &
    $\checkmark$ \\
    \hline
    \end{tabular}
    \end{center}
    \protect\caption{Augmented ($S^n=\AggModels \cup \CFXs^{n}$) and robust ($S^v=\AggModels \cup \CFXs^{v}$) ensembling, as well as our argumentative approach \JJ{with \AR{the} s-preferred extension} (defined later in Section \ref{sec:main}), assessed against the desirable properties defined in Section \ref{ssec:desirable_properties}. Satisfaction of a property is 
    shown
    by $\checkmark$, while partial satisfaction under given conditions is 
    shown
    by $\checkmark^*$.} 
    \label{table:props}
    \end{table}


\section{Argumentative Ensembling}
\label{sec:main}

\JJ{In this section, we introduce our method for ensembling models and CEs which inherently supports specifying preferences over models. We first present general definitions in Section \ref{ssec:general_definition} with illustrative examples, then we instantiate argumentative ensembling with four different argumentation semantics and undertake theoretical analyses of their properties (Section \ref{ssec:d_extension_theoretical} to Section \ref{ssec:ensembling_behaviours}). We finally discuss alternative definitions of argumentative ensembling in Section \ref{ssec:aaf}.} 


\subsection{Definition}
\label{ssec:general_definition}

We start by formalising 
ways to incorporate the 
aforementioned preferences over models.
These preferences could be obtained by any information, e.g., meta-rules over 
models as suggested in \cite{Black_22}, but we will assume that they are extracted wrt 
properties of the models, e.g., their accuracy or (a metric representing) their simplicity.

\begin{definition}
\label{def:prop}
     Given a set  $\Models$ of models, a set $\Props$ of \emph{properties} is such that $\forall \prop \in \Props$, $\prop: \Models \rightarrow \mathbb{R}$ is a total function.
\end{definition} 

We then define a preference over the properties such that users can impose a ranking of priority over them. Here and onwards, for simplicity we use total orders, denoted $\preceq$, over any set $S$ such that, as usual, for any $s_i, s_j \in S$, $s_i \prec s_j$ iff $s_i \preceq s_j$ and $s_i \not{\succeq} s_j$. Also as usual, we say that $s_i \simeq s_j$ iff $s_i \preceq s_j$ and $s_i \succeq s_j$.

\begin{definition}
\label{def:proppref}
    Given a set  $\Props$ of properties, a \emph{property preference} $\ppreceq$ is a total order over $\Props$.
\end{definition}

Model preferences 
can be defined using property preferences.
In the following example, we define one way for 
doing so.

\begin{example}
\label{ex:CEs}
    Consider the same 
    models as in Example \ref{ex:baselines_example} and a set of properties $\Props \!\!=\!\! \{ \prop_1, \prop_2 \}$ 
    where $\prop_1, \prop_2$ represent 
    model accuracy and simplicity, \resp, with a property preference $\ppreceq$ such that $\prop_1 \!\!\psucc\!\! \prop_2$ and values for the {satisfaction of} properties as follows. 

    \begin{table}[h]
    \begin{center}
    \begin{tabular}{cccccc}
    \cline{2-6}
    &
    $\model_1$ &
    $\model_2$ &
    $\model_3$ &
    $\model_4$ &
    $\model_5$ \\
    \hline
    $\prop_1$ (accuracy) &
    0.85 &
    0.87 &
    0.86 &
    0.86 &
    0.87 \\
    $\prop_2$ (simplicity) &
    0 &
    0.75 &
    1 &
    0.5 &
    0.75 \\
    \hline
    \end{tabular}
    \end{center}
    \label{table:example}
    \end{table}
    
    A simple model preference $\mpreceq$ over $\Models$ may be such that, for any $\model_i, \model_j \in \Models$, $\model_i \msucc \model_j$ iff:
        (i) $\prop_1(\model_i) > \prop_1(\model_j)$; or (ii) $\prop_1(\model_i) = \prop_1(\model_j)$ and $\prop_2(\model_i) > \prop_2(\model_j)$.
    This $\mpreceq$ is a total order over $\Models$ and results in 
    $\model_2 \msimeq \model_5 \msucc \model_3 \msucc \model_4 \msucc \model_1$.
\end{example}

Other ways to define preferences over models from preferences over properties of models can be defined, e.g., based on more sophisticated notions of dominance. We will assume some given notion of total preference over models, as follows, ignoring how it is obtained. 

\begin{definition}
\label{def:modelpref}
    Given a set  $\Models$ of models, 
    a \emph{model preference} $\mpreceq$ over $\Models$ is a total order over $\Models$.
\end{definition}

How can we incorporate these model preferences into the ensembling, while still satisfying the properties defined in Section \ref{ssec:desirable_properties}? To tackle this problem, we 
use bipolar argumentation\footnote{\JJ{We discuss the similarities and differences to abstract argumentation in Section \ref{ssec:aaf}.}} as follows.

\begin{definition}
\label{def:BAF}
    The \emph{BAF corresponding to} 
    input $\inn$, 
    set  $\Models 
    $ of models, 
    set  $\CFXs 
    $ of CEs and 
    model preference $\mpreceq\!$ 
     is $\langle \Args, \Atts, \Supps \rangle$ 
    with:
    \begin{itemize}
        \item $\Args = \Models \cup \CFXs$;
        \item $\Atts \subseteq ( \Models \times \Models) \cup ( \Models \times \CFXs) \cup ( \CFXs \times \Models)$ where:
        \begin{itemize}
            %
            %
            \item 
            $\forall \model_i, \model_j \!\in \!\Models$, $(\model_i,\model_j) \!\in\! \Atts$ iff $\model_i(\inn) \!\!\neq \!\!\model_j(\inn)$
            , $\model_i \!\msucceq \!\model_j$;
            %
            %
            \item 
            $\forall \model_i \!\in\! \Models$ and 
            $\cfx_j \!\in\! \CFXs$ 
            where $\model_i(\cfx_j) \!= \!\model_i(\inn)$, $(\model_i, \cfx_j) \in \Atts$ iff $\model_i \msucceq \model_j$ and $(\cfx_j, \model_i) \in \Atts$ iff $\model_j \msucceq \model_i$;
            
            %
            %
        \end{itemize}
        \item $\Supps \subseteq (\Models \times \CFXs) \cup (\CFXs \times \Models)$ where for any $\model_i \in \Models$ and 
        $\cfx_j \in \CFXs$, $(\model_i,\cfx_j), (\cfx_j,\model_i) \in \Supps$ iff $i=j$.
    \end{itemize}
\end{definition}

Here, a model attacks another model if they disagree on the prediction and the latter is not strictly preferred to the former. This means that models which are outperformed with regards to preferences must be defended by more-preferred, agreeing models in order to be considered acceptable. Models and CEs are treated similarly, with the CEs inheriting the preferences from the models by which they were generated, and attacks being present between them when the model considers the CE invalid. This, along with the fact that models and their CEs support one another, ensures that the models are inherently linked to their reasoning, in the form of their CEs, and conflicts are drawn not only when two models' predictions differ, but also when their reasoning differs.

\JJ{Next, we define argumentative ensembling for solving the MM problem described in the corresponding BAF $\langle \Args, \Atts, \Supps \rangle$.}

    

\begin{definition}
\label{def:argensembling}
\JJ{
    Consider an input $\inn$, a set $\Models$ of models, a set  $\CFXs$ of CEs, a set $\Classes$ of labels and a model preference $\mpreceq$.
    Let $S^{a,\sigma}$ be 
    a 
    cardinality-maximal 
    element of $P^{\sigma}$ 
    for the corresponding BAF $\langle \Args, \Atts, \Supps \rangle$, according to some semantics $\sigma$, i.e.,
    \[
        S^{a,\sigma} 
        \in argmax_{X \in P^{\sigma}} |X|. 
    \]
    Then, a solution to \Problem{} by \emph{$\sigma$ argumentative ensembling} is 
    $S^{a,\sigma}$
    . 
    }
\end{definition}

\JJ{The intuition is that, after modelling the conflicts under MM with a BAF, acceptability-based argumentation semantics can be applied to the BAF to obtain subsets of models and CEs which satisfy some argumentative properties, such as conflict-freeness. While these are satisfied by all extensions in $P^\sigma$ according to the semantics $\sigma$, in Definition \ref{def:argensembling} we only return the cardinality-maximal extension in the spirit of better satisfaction of majority vote. Note that there may be multiple extensions which are all cardinality-maximal ($|P^\sigma| > 1$, e.g., when one extension is $\{\model_1, \model_2, \cfx_1, \cfx_2\}$ and the other is $\{\model_3, \model_4, \cfx_3, \cfx_4\}$ under a four-model scenario). In this case, we randomly choose one as the solution to RAE. 
Alternatively, we could choose to report all viable resulting ensembles in $P^\sigma$, \AR{potentially allowing for} more informed decisions \AR{to} be made by relevant stakeholders. We leave this to future work.}

The following example demonstrates how quickly the problem, when preferences are included, can become complex. This is the case even with only five models, far fewer than usual in MM. \JJ{For illustrative purposes, we instantiate the argumentative ensembling solutions with \AR{the} s-preferred semantics.}

\begin{example}
\label{ex:arg}
        The  BAF corresponding to input, models and CEs as in Example \ref{
        ex:CEs} is $\langle \Args, \Atts, \Supps \rangle$ with (see  Fig.~\ref{fig:example_BAF}):
    $\Args = $
    $\{ \model_1,$ 
    $\model_2,$ 
    $\model_3,$ 
    $\model_4,$ 
    $\model_5,$ 
    $\cfx_1,$ 
    $\cfx_2,$ 
    $\cfx_3,$ 
    $\cfx_4,$ 
    $\cfx_5 \}$; 
    $\Atts = $ 
    $\{(\model_2,\model_4), $
    $(\model_2,\model_5),$
    $(\model_2,\cfx_1),$ 
    $(\model_2,\cfx_3),$ 
    $(\model_3,\model_4\!),$
    $(\model_3,\cfx_4\!),$ 
    $(\model_4,\model_1\!),$
    $(\model_5,\model_1\!),$
    $(\model_5,\model_2\!),$
    $(\model_5,\model_3\!),$
    $(\cfx_2,\model_1\!)\!\}$; and
    $\Supps = $ 
    $\{(\model_1,\cfx_1),$ 
    $(\model_2,\cfx_2),$ 
    $(\model_3,\cfx_3),$ 
    $(\model_4,\cfx_4),$ 
    $(\model_5,\cfx_5),$  
    $(\cfx_1, \model_1),$ 
    $(\cfx_2, \model_2),$  
    $(\cfx_3, \model_3),$ 
    $(\cfx_4, \model_4),$ 
    $(\cfx_5, \model_5)\}$.
    \JJ{With \AR{the} s-preferred semantics, t}his leads to 
    $P^s = $
    $\{ \{ \model_2,$ 
    $\cfx_2 \},$ 
    $\{ \model_4,$ 
    $\model_5,$ 
    $\cfx_4,$ 
    $\cfx_5 \} \} $, and thus 
    $S^{a,s} = $
    $\{ \model_4,$ 
    $\model_5,$ 
    $\cfx_4,$ 
    $\cfx_5 \}$, \JJ{the aggregated prediction} $\model_4(\inn) = \model_5(\inn) = 1$.
\end{example}

\begin{figure}[h]
    \centering
    \includegraphics[width=0.55\textwidth,trim={0.7cm 0.7cm 0.7cm 0.7cm},clip]{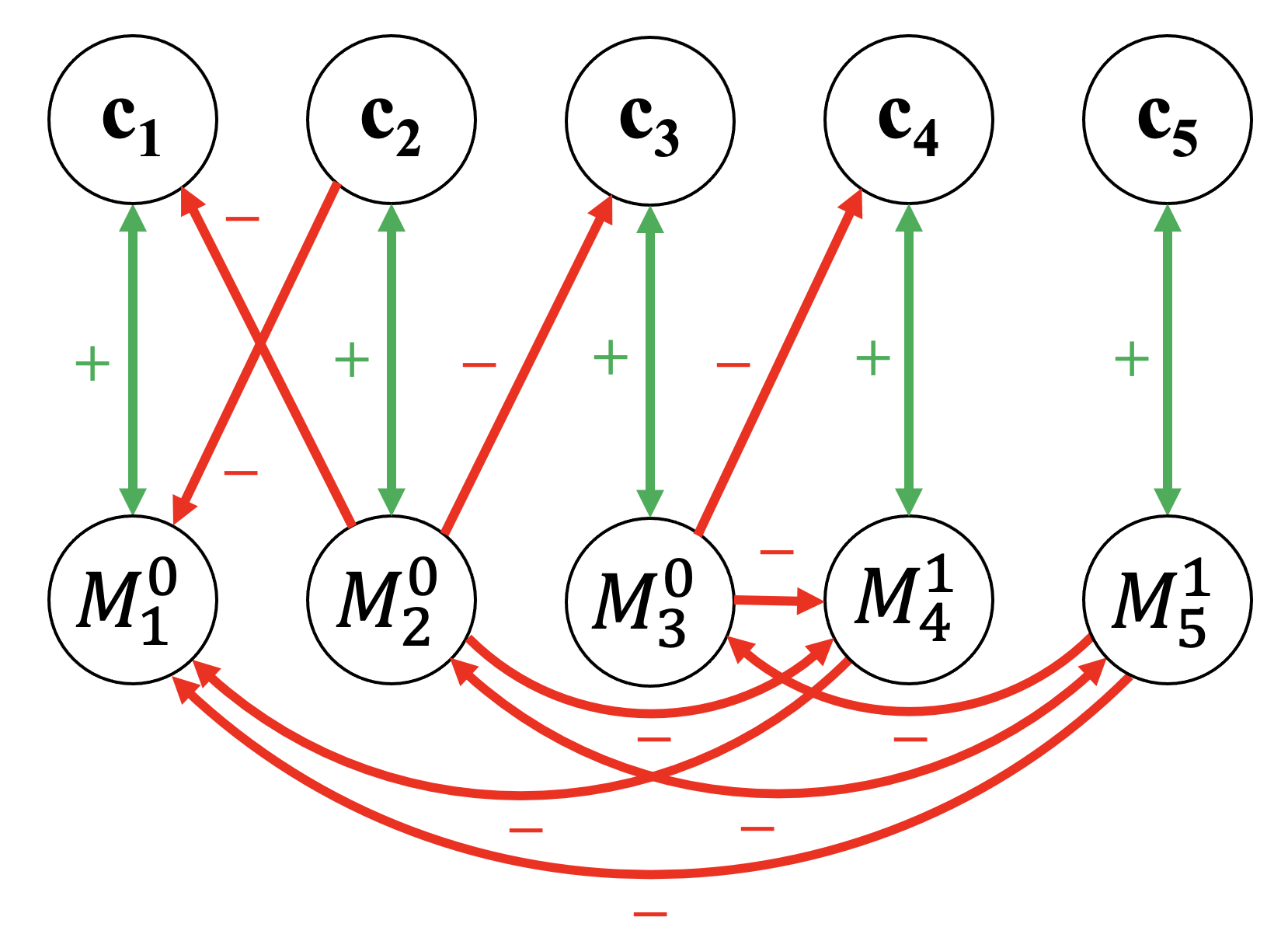}
	\caption{BAF for Example \ref{ex:arg} where: models' predictions for the input $\inn$ are given as superscripts, e.g., $\model_1(\inn) = 0$ but $\model_4(\inn) = 1$; reciprocal supports are represented by dual-headed green arrows labelled with $+$ and standard (reciprocal) attacks are represented by single-headed (dual-headed, \resp) red arrows labelled with $-$. 
    \label{fig:example_BAF}}
\end{figure}

This example shows how the use of  CEs in the ensembling directly results in the prediction being reversed, relative to the other ensembling methods, violating majority vote. This is due to the fact that, while the preferences over the two sets of models are roughly similar, the validity of the CEs for the models selected by the augmented and robust ensemblings is very poor. This means that when a CE 
{that} is valid for all models is required, some compromise must be made on the model selection, as we demonstrated.

\JJ{Definition \ref{def:argensembling} remains agnostic to the argumentative semantics applied to the BAF. In this work, we consider the acceptability-based semantics in \cite{Cayrol_05}, and illustrate the similarities and differences of \emph{d-preferred}, \emph{stable}, \emph{s-preferred}, and \emph{c-preferred} semantics with Examples \ref{ex:representative_example_1} and \ref{ex:representative_example_2}, two simple yet representative scenarios where three models agree on the prediction result but there are CE validity-related conflicts between them.}

\begin{figure}
     \centering
     \begin{subfigure}{0.35\textwidth}
         \centering
         \includegraphics[width=\textwidth]{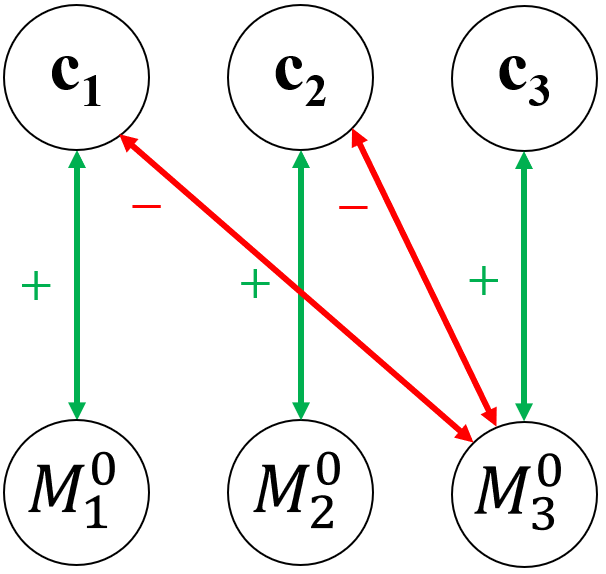}
         \caption{Graphical illustration for Example \ref{ex:representative_example_1}}
         \label{fig:rep1}
     \end{subfigure}
     \hfill
     \begin{subfigure}{0.35\textwidth}
         \centering
         \includegraphics[width=\textwidth]{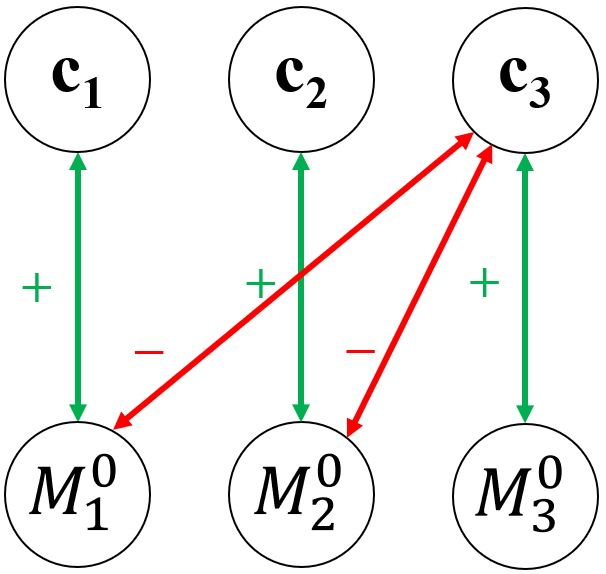}
         \caption{Graphical illustration for Example \ref{ex:representative_example_2}}
         \label{fig:rep2}
     \end{subfigure}
        \caption{\JJ{Two illustrative examples.}}
        \label{fig:rep_example}
\end{figure}

\begin{example}
\label{ex:representative_example_1}
    \JJ{Consider $\Models = \{ \model_1, \model_2, \model_3\}$ and an input $\inn$ such that $\model_1(\inn) = \model_2(\inn) = \model_3(\inn) = 0$. Let $\CFXs = \{\cfx_1, \cfx_2, \cfx_3\}$ be the set of CEs generated for $\inn$, i.e., $\model_1(\cfx_1) = \model_2(\cfx_2) = \model_3(\cfx_3) = 1$. The three models are equally preferred. Assume the CEs $\cfx_1, \cfx_2$ are not valid on $\model_3$, the BAF corresponding to this scenario is: $\langle \Args, \Atts, \Supps \rangle$ with: $\Args = $
    $\{ \model_1,$ 
    $\model_2,$ 
    $\model_3,$ 
    $\cfx_1,$ 
    $\cfx_2,$ 
    $\cfx_3\}$; 
    $\Atts = $ 
    $\{(\cfx_1,\model_3), $
    $(\model_3,\cfx_1), $
    $(\cfx_2,\model_3), $
    $(\model_3,\cfx_2)\}$; and
    $\Supps = $ 
    $\{(\model_1,\cfx_1),$ 
    $(\model_2,\cfx_2),$ 
    $(\model_3,\cfx_3),$ 
    $(\cfx_1, \model_1),$ 
    $(\cfx_2, \model_2),$  
    $(\cfx_3, \model_3)\}$. See Figure \ref{fig:rep1} for a graphical illustration.}
\end{example}

\begin{example}
\label{ex:representative_example_2}
    \JJ{Consider $\Models = \{ \model_1, \model_2, \model_3\}$ and an input $\inn$ such that $\model_1(\inn) = \model_2(\inn) = \model_3(\inn) = 0$. Let $\CFXs = \{\cfx_1, \cfx_2, \cfx_3\}$ be the set of CEs generated for $\inn$, i.e., $\model_1(\cfx_1) = \model_2(\cfx_2) = \model_3(\cfx_3) = 1$. The three models are equally preferred. Assume the CE $\cfx_3$ is not valid on $\model_1$ and $\model_2$, the BAF corresponding to this scenario is: $\langle \Args, \Atts, \Supps \rangle$ with: $\Args = $
    $\{ \model_1,$ 
    $\model_2,$ 
    $\model_3,$ 
    $\cfx_1,$ 
    $\cfx_2,$ 
    $\cfx_3\}$; 
    $\Atts = $ 
    $\{(\cfx_3,\model_1), $
    $(\model_1,\cfx_3), $
    $(\cfx_3,\model_2), $
    $(\model_2,\cfx_3)\}$; and
    $\Supps = $ 
    $\{(\model_1,\cfx_1),$ 
    $(\model_2,\cfx_2),$ 
    $(\model_3,\cfx_3),$ 
    $(\cfx_1, \model_1),$ 
    $(\cfx_2, \model_2),$  
    $(\cfx_3, \model_3)\}$. See Figure \ref{fig:rep2} for a graphical illustration.}
\end{example}

\begin{table}[h]
    \centering
    \begin{tabular}{c|c|c}

        & Example \ref{ex:representative_example_1} & Example \ref{ex:representative_example_2} \\
        \hline
        \multirow{2}{*}{stable} & $S^{a,stable}=\{\model_1, \model_2, \cfx_1, \cfx_2\}$, & $S^{a,stable}=\{\model_1, \model_2, \cfx_1, \cfx_2\}$, \\
        
         & $\{\model_1, \model_2, \cfx_3\}$, $\{\model_3, \cfx_3\}$ & $\{\model_3, \cfx_1, \cfx_2\}$, $\{\model_3, \cfx_3\}$ \\

        \hline
        
        \multirow{2}{*}{d-preferred} & $S^{a,d}=\{\model_1, \model_2, \cfx_1, \cfx_2\}$, & $S^{a,d}=\{\model_1, \model_2, \cfx_1, \cfx_2\}$, \\
        
         & $\{\model_1, \model_2, \cfx_3\}$, $\{\model_3, \cfx_3\}$ & $\{\model_3, \cfx_1, \cfx_2\}$, $\{\model_3, \cfx_3\}$ \\

         \hline
         
        \multirow{2}{*}{s-preferred} & $S^{a,s}=\{\model_1, \model_2, \cfx_1, \cfx_2\}$, & $S^{a,s}=\{\model_1, \model_2, \cfx_1, \cfx_2\}$, \\
        
         & $\{\model_3, \cfx_3\}$ & $\{\model_3, \cfx_3\}$ \\

         \hline
         
        \multirow{2}{*}{c-preferred} & $S^{a,c}=\{\model_1, \model_2, \cfx_1, \cfx_2\}$, & $S^{a,c}=\{\model_1, \model_2, \cfx_1, \cfx_2\}$, \\
        
         & $\{\model_3, \cfx_3\}$ & $\{\model_3, \cfx_3\}$ \\

        \hline
    \end{tabular}
    \caption{\JJ{Different semantics applied to Examples \ref{ex:representative_example_1} and \ref{ex:representative_example_2}}. $P^{stable}, P^d, P^s, P^c$ are in each table entry, with argumentative ensembling \AR{solutions} highlighted within. }
    \label{tab:semantics_representative_example}
\end{table}

\JJ{We list $P^{stable}$, $P^d$, $P^s$ \AR{and} $P^c$ according to each semantics, as well as the corresponding argumentative ensembling results in Table \ref{tab:semantics_representative_example}. We observe that all semantics return the same set $\{\model_1, \model_2, \cfx_1, \cfx_2\}$ as the ensembling solution. Thus, argumentative ensembling methods based on these semantics would potentially satisfy all the requirements described in the desirable properties (Section \ref{ssec:desirable_properties}). Therefore, intuitively, all semantics can lead to meaningful solutions. However, the accepted sets, which are the candidates for the final ensembling solution, are different. It should be noted that, since the stable and d-preferred extensions accept $\{\model_1, \model_2, \cfx_3\}$, argumentative ensembling methods computed using these extensions would satisfy all (requirements of the) properties except counterfactual coherence.}


\JJ{In the following \AR{sub}sections, we perform \AR{a} theoretical analysis \AR{examining} 
four argumentative semantics applied to Definition \ref{def:argensembling} \AR{against the desirable properties from Section \ref{ssec:desirable_properties}}.}

\subsection{\JJ{D-preferred and Stable Argumentative Ensembling}}
\label{ssec:d_extension_theoretical}

\JJ{We start with the least strict d-preferred argumentative ensembling. We show that although it satisfies important properties of model agreement and counterfactual validity, \AR{but that} it does not satisfy the rest.}

\begin{theorem}
\label{thm:d_preferred_results}
    \JJ{D-preferred argumentative ensembling satisfies model agreement \AR{and} counterfactual validity. 
    It does not satisfy non-emptiness, non-triviality, majority vote \AR{or} counterfactual coherence.}
\end{theorem}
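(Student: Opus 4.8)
The plan is to split the statement into the two properties that hold, which follow uniformly from conflict-freeness, and the four that fail, each of which I would settle with a tailored counterexample in which the cardinality-maximal d-preferred extension is forced to misbehave.

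\textbf{Positive part.} Any solution $S^{a,d}$ is a cardinality-maximal element of $P^d$, hence d-admissible, hence conflict-free in the BAF of Definition~\ref{def:BAF}. For \emph{model agreement}, assume $\model_i,\model_j\in S^{a,d}$ with $\model_i(\inn)\neq\model_j(\inn)$. Since $\mpreceq$ is total, $\model_i\msucceq\model_j$ or $\model_j\msucceq\model_i$, so the model--model clause of Definition~\ref{def:BAF} places a direct attack between them; then $\{\model_i\}$ or $\{\model_j\}$ set-attacks the other, contradicting conflict-freeness. For \emph{counterfactual validity}, assume $\model_i\in S^{a,d}$ and $\cfx_j\in S^{a,d}$ with $\model_i(\cfx_j)=\model_i(\inn)$; totality of $\mpreceq$ together with the model--CE clause forces a direct attack $(\model_i,\cfx_j)$ or $(\cfx_j,\model_i)$, again contradicting conflict-freeness. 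Both arguments use only conflict-freeness and totality of $\mpreceq$, so I would reuse them verbatim for the stable, s-preferred and c-preferred cases later.

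\textbf{Negative part: non-triviality and majority vote.} First I would take $\model_1,\model_2,\model_3$ all predicting $0$, equally preferred, with each $\cfx_i$ invalid on the other two models. Then for $i\neq j$ the chain $\model_i\to\cfx_i\to\model_j$ is a supported attack, so no two models are jointly conflict-free, and the $\subseteq$-maximal d-admissible sets are exactly the pairs $\{\model_i,\cfx_i\}$. Hence every $S^{a,d}$ has a single model, violating non-triviality although $|\Models|>2$. For \emph{majority vote} I keep this conflicting ``majority'' of three $0$-models and add two $1$-models $\model_4,\model_5$ whose CEs are mutually valid and defendable; then the coherent block $\{\model_4,\model_5,\cfx_4,\cfx_5\}$ is d-admissible of cardinality four, beating every cardinality-two $0$-extension, so it is the cardinality-maximal d-preferred extension, and its models predict the minority label $1$ (cf.\ Example~\ref{ex:arg}).

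\textbf{Negative part: coherence.} Here I make $\model_0$ strictly most preferred with $\model_0\msucc\model_1,\model_2$, all predicting $0$, let $\cfx_0$ be valid on every model, and let $\cfx_1,\cfx_2$ be invalid on $\model_0$. Being the strict maximum, $\model_0$ is unattacked, so no d-admissible set can defend $\cfx_1$ or $\cfx_2$ (attacked by $\model_0$) and none can add them; moreover $\model_0$ indirectly attacks $\model_1,\model_2$ through $\cfx_1,\cfx_2$, so $\model_0$ cannot be joined to them. This leaves $\{\model_1,\model_2,\cfx_0\}$ as the cardinality-maximal d-preferred extension, which contains $\cfx_0$ without $\model_0$ (and $\model_1,\model_2$ without $\cfx_1,\cfx_2$), violating counterfactual coherence; this is precisely the ``$\{\model_1,\model_2,\cfx_3\}$-type'' phenomenon flagged after Example~\ref{ex:representative_example_1}.

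\textbf{Expected obstacle: non-emptiness.} For non-emptiness I would search for a BAF whose every cardinality-maximal d-preferred extension omits all CEs, or all models. I expect this to be the hardest step, because the bidirectional support $\model_i\leftrightarrow\cfx_i$ tightly couples a model to its CE: once a model is accepted its CE tends to be both conflict-free with, and defended by, the extension (and symmetrically for a CE and its model), so a degenerate maximal extension can only arise by exploiting the preference order to leave an accepted argument's partner permanently undefended. Pinning down such a configuration, and then proving that no larger non-degenerate d-preferred extension exists so that the degenerate one is genuinely cardinality-maximal, is where I would concentrate the effort; the remaining five properties I regard as routine once the conflict-freeness lemma and the three counterexample templates above are established.
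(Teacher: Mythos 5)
Your positive part (model agreement and counterfactual validity via conflict-freeness plus totality of $\mpreceq$) matches the paper's argument, and your non-triviality and majority-vote counterexamples are sound (the latter is essentially the paper's, which reuses Example \ref{ex:arg}). However, your counterfactual-coherence counterexample is broken. With $\model_0 \msucc \model_1, \model_2$ and $\cfx_1, \cfx_2$ invalid on $\model_0$, the attacks $(\model_0,\cfx_1)$ and $(\model_0,\cfx_2)$ are one-directional, and $\model_0$ is entirely unattacked. You correctly note that $\model_0$ indirectly attacks $\model_1$ and $\model_2$ (through the supports $\cfx_1 \to \model_1$, $\cfx_2 \to \model_2$), but you only draw the conflict-freeness consequence. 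The defense requirement bites harder: a d-admissible set containing $\model_1$ must set-attack $\model_0$, and nothing does, so $\model_1, \model_2, \cfx_1, \cfx_2$ can never appear in any d-admissible set. Your candidate $\{\model_1,\model_2,\cfx_0\}$ is therefore not d-admissible; the unique d-preferred extension is $\{\model_0,\cfx_0\}$, which is perfectly coherent. The paper avoids this by keeping all models equally preferred, so every attack is reciprocated and conflict-free sets are self-defending; with $\cfx_1$ invalid on $\model_3$ and $\cfx_2$ invalid on $\model_1$ and $\model_3$, one obtains incoherent cardinality-maximal d-preferred extensions such as $\{\model_1,\cfx_3\}$ and $\{\model_2,\cfx_1\}$.

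The second gap is that non-emptiness is left entirely open — you flag it as the hardest step but supply no construction. The paper's counterexample is in fact the simplest in the proof: two equally preferred models with $\model_1(\inn)=0$, $\model_2(\inn)=1$, where each CE is valid for \emph{both} models (so neither CE attacks nor is directly attacked by any model). Then $\{\cfx_1,\cfx_2\}$ is conflict-free, defends itself against the models' indirect attacks via supported attacks through its own supporters, and is $\subseteq$-maximal with the same cardinality as $\{\model_1,\cfx_1\}$ and $\{\model_2,\cfx_2\}$; the tie-break can therefore return a solution containing no model at all. Your intuition that the bidirectional support couples a model to its CE is what misled you: the coupling lets a CE be \emph{defended} without its model being \emph{included}, and cross-validity across disagreeing models is exactly the mechanism that produces a model-free extension.
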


\begin{proof}


    \JJ{Let us 
    prove model agreement by contradiction. Assume that $\exists \model_i, \model_j \!\!\in\! \Models$ such that $\model_i(\inn) \!\neq\! \model_j(\inn)$ and $\exists P^d_{i} \!\!\in\! P^d$ such that $\model_i, \model_j \!\in\! P^d_{i}$. By Definition \ref{def:BAF}, it follows that $\exists (\model_i, \model_j) \!\in\! \Atts$ or $\exists (\model_j, \model_i) \in \Atts$, which cannot be the case in a d-preferred set, which must be conflict-free (see Section \ref{sec:preliminaries}), and so we have the contradiction. }

    \JJ{Let us prove counterfactual validity by contradiction. Assume that $\exists \model_i \!\in\! \Models$ and $\exists \cfx_j \!\in\! \CFXs$ such that $\model_i(\cfx_j) \!=\! \model_i(\inn)$ and $\exists P^d_{i} \!\in\! P^d$ such that $\model_i, \cfx_j \!\in\! P^d_{i}$. By Definition \ref{def:BAF}, it can be seen that $\exists (\model_i, \cfx_j) \!\in\! \Atts$ or $\exists (\cfx_j, \model_i) \!\in\! \Atts$, which cannot be the case in a d-preferred set, which, again, must be conflict-free, and so we have the contradiction.}

    \JJ{We show that non-emptiness is not satisfied by providing the following counterexample. Consider $\Models = \{ \model_1, \model_2\}$ and an input $\inn$ such that $\model_1(\inn) = 0, \model_2(\inn) = 1$. Let $\CFXs = \{\cfx_1, \cfx_2\}$ be the set of CEs generated for $\inn$, with $\model_1(\cfx_1) = 1, \model_2(\cfx_2) = 0$, but $\model_1(\cfx_2) = 1, \model_2(\cfx_1) = 0$. The two models are equally preferred. The BAF corresponding to this scenario is: $\langle \Args, \Atts, \Supps \rangle$ with: $\Args = $
    $\{ \model_1,$ 
    $\model_2,$ 
    $\cfx_1,$ 
    $\cfx_2\}$; 
    $\Atts = $ 
    $\{(\model_1,\model_2), (\model_2,\model_1)\}$; and
    $\Supps = $ 
    $\{(\model_1,\cfx_1),$ 
    $(\model_2,\cfx_2),$ 
    $(\cfx_1, \model_1),$ 
    $(\cfx_2, \model_2)\}$. Then, $P^d=\{\{\cfx_1,\cfx_2\}, \{\model_1,\cfx_1\}, \{\model_2,\cfx_2\}\}$. \AR{The} d-preferred argumentative ensembling solution $S^{a,d}$ can therefore be $\{\cfx_1,\cfx_2\}$, which, by Definition \ref{def:non-emptiness}, violates non-emptiness as no model is included.}

    \AR{Non-triviality is violated with the same counterexample as counterfactual coherence.}

    \JJ{We show that majority vote is not satisfied by providing the following counterexample. Consider the BAF in Example \ref{ex:arg}. With d-preferred extension, $S^{d}=\{\model_4, \cfx_4, \model_5, \cfx_5\}$, which is also the d-preferred argumentative ensembling solution. In this case, $\forall \model_i \in S^d \cap \Models$, $\model_i(x) = 1$. However, $| \{ \model_j \in \Models | \model_j(\inn) = 0 \} | > | \{ \model_i\in \Models | \model_
    {i}(\inn) = 1 \} |$. Therefore, by Definition \ref{def:majority_vote}, majority vote is violated.
    }

    \JJ{We show that counterfactual coherence is not satisfied by providing the following counterexample. Consider Example \ref{ex:representative_example_1} but additionally $\cfx_2$ is not valid on $\model_1$. The BAF corresponding to this scenario is: $\langle \Args, \Atts, \Supps \rangle$ with: $\Args = $
    $\{ \model_1,$ 
    $\model_2,$ 
    $\model_3,$ 
    $\cfx_1,$ 
    $\cfx_2,$ 
    $\cfx_3\}$; 
    $\Atts = $ 
    $\{(\cfx_2,\model_1), $
    $(\model_1,\cfx_2), $
    $(\cfx_1,\model_3), $
    $(\model_3,\cfx_1), $
    $(\cfx_2,\model_3), $
    $(\model_3,\cfx_2)\}$; and
    $\Supps = $ 
    $\{(\model_1,\cfx_1),$ 
    $(\model_2,\cfx_2),$ 
    $(\model_3,\cfx_3),$ 
    $(\cfx_1, \model_1),$ 
    $(\cfx_2, \model_2),$  
    $(\cfx_3, \model_3)\}$.
    Then, $P^d=\{\{\model_1,\cfx_1\}, \{\model_2,\cfx_2\}, \{\model_3,\cfx_3\}, \{\model_1,\cfx_3\}, \{\model_2,\cfx_1\}, \\ \{\model_2,\cfx_3\} \}$. \AR{The} d-preferred argumentative ensembling solution $S^{a,d}$ can therefore be \AR{any of} $\{\model_1,\cfx_3\}, \{\model_2,\cfx_1\}, \{\model_2,\cfx_3\}$, \AR{all of} which 
    violate counterfactual coherence (Definition \ref{def:explanation-coherence}).}

\end{proof}

\JJ{The \AR{violation} of counterfactual coherence and non-emptiness hinders the reliability of d-preferred argumentative ensembling. Although practically, any CE and model can be included in the solution set as long as the CE is valid on that model, this might create imbalances in the number of models and CEs, causing \AR{the} exclusion of any model (\AR{e.g., as in }the counterexample for non-emptiness \AR{in the} proof \AR{of Theorem \ref{thm:d_preferred_results}}), or fewer CEs, causing the set of explanations to be less diverse.
However, our empirical evaluations in Section \ref{sec:evaluation} find that these properties are almost always satisfied in practice.}


\JJ{For an extension to be stable, in addition to 
\AR{being conflict-free}, it also needs to set-attack all arguments that \AR{it does not contain}. Next, we show that under our definition of BAF, this stricter requirement is automatically satisfied by \AR{the} d-preferred semantics. Therefore, the results in Theorem \ref{thm:d_preferred_results} also apply to stable argumentative ensembling.}

\begin{theorem}
\label{thm:stable_results}
    \JJ{Stable argumentative ensembling is equivalent to d-preferred argumentative ensembling. }
\end{theorem}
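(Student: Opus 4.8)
The plan is to prove the stronger claim that the two semantics induce the same set of extensions, i.e.\ $P^{stable} = P^d$ for the BAF of Definition~\ref{def:BAF}; since $S^{a,\sigma}$ in Definition~\ref{def:argensembling} is a cardinality-maximal element of $P^{\sigma}$, equal extension sets yield identical argumentative ensembling solutions. The inclusion $P^{stable} \subseteq P^d$ is the generic fact that every stable extension is $\subseteq$-maximal d-admissible: a stable $E$ is conflict-free and set-attacks every argument outside it, so every set-attacker of a member of $E$ lies outside $E$ and is counter-attacked, which gives defence and hence d-admissibility; maximality then follows because any d-admissible (conflict-free) proper superset would contain some $a \notin E$ that $E$ set-attacks, destroying conflict-freeness.

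The substance lies in the converse, $P^d \subseteq P^{stable}$. I would fix $E \in P^d$ and call $a \notin E$ \emph{undecided} if $E$ does not set-attack $a$, the goal being to show that no undecided argument exists, so that $E$ set-attacks everything outside it and is therefore stable. First I would record that a $\subseteq$-maximal d-admissible set is \emph{complete} with respect to set-attack, i.e.\ it contains every argument it defends: adapting the standard argument to set-attacks, if $E$ defended some $a \notin E$ then $E \cup \{a\}$ would be conflict-free and d-admissible, contradicting maximality. From completeness, an undecided $a$ cannot be defended by $E$, so at least one of its set-attackers is itself undecided; and an undecided $a$ cannot set-attack $E$ either, since defending the attacked member would force $E$ to set-attack $a$. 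Hence the undecided arguments $U$ form a subframework with no set-attacks to or from $E$, in which every node has an incoming set-attack from within $U$.

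The decisive step, which I expect to be the main obstacle, is a structural lemma on the set-attack relation that exploits the totality of $\mpreceq$ and the pairing by $\Supps$. I would assign each argument a \emph{preference level}, letting both $\model_i$ and its partner $\cfx_i$ take the level of $\model_i$ under $\mpreceq$. Inspecting Definition~\ref{def:BAF} and unfolding supported and indirect attacks (which route through a partner via $\Supps$, leaving the level unchanged), one shows that any set-attack from $x$ to $y$ forces $\mathrm{level}(x) \msucceq \mathrm{level}(y)$, and that when the levels are equal a set-attack from $x$ to $y$ is matched by one from $y$ to $x$ (direct attacks at equal level are reciprocal by the $\msucceq$ conditions, while supported and indirect attacks invert into indirect and supported ones). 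Totality and transitivity then collapse any set-attack cycle to a single level. Taking $u^{*} \in U$ of maximal level, every $U$-internal set-attacker of $u^{*}$ has equal level and is thus set-attacked back by $u^{*}$, so $\{u^{*}\}$ defends itself; since every external attacker of $u^{*}$ lies in $E$'s set-attacked region, $E \cup \{u^{*}\}$ is conflict-free and d-admissible, contradicting the maximality of $E$. Therefore $U = \emptyset$, $E$ is stable, and $P^d = P^{stable}$ as required.
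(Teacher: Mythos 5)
Your proposal is correct, and it reaches the result by a genuinely different route at the decisive step. The paper proves only the substantive inclusion ($P^d \subseteq P^{stable}$), by contradiction on $\subseteq$-maximality: given an argument $\alpha \notin P^d_i$ that $P^d_i$ does not set-attack, it case-splits on the attackers of $\alpha$, and in the case of an attacker $x$ with no attack relation to $P^d_i$ it simply asserts that $P^d_i$ fails maximality ``because it does not contain the arguments in $X$'' --- precisely the recursive situation (an attacker that is itself undecided, possibly attacked by further undecided arguments) that requires care. Your preference-level lemma is designed exactly to close that gap: by showing that every set-attack in the BAF of Definition~\ref{def:BAF} is level-non-increasing with respect to $\mpreceq$ (supports pair $\model_i$ with $\cfx_i$ and preserve level) and that equal-level set-attacks are reciprocal, you can pick a maximal-level undecided argument $u^{*}$ that defends itself against all undecided attackers, while $E$ handles the rest, yielding the admissible proper superset $E \cup \{u^{*}\}$. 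You also make explicit the easy converse $P^{stable} \subseteq P^d$ (and the fundamental-lemma-style completeness step), both of which the paper leaves implicit. The net effect is that your argument is longer but watertight where the paper's is hand-wavy; the paper's is shorter but relies on an unjustified assertion in its final case. One presentational caveat: your level lemma and the reciprocity claim lean on every $\cfx_i$ being a valid CE for its own $\model_i$ (so that no attacks, hence no indirect self-attacks, arise within a support pair); this holds under the paper's definition of a CE in Section~\ref{sec:preliminaries}, but it is worth stating, since Section~\ref{ssec:aaf} flags that the assumption can fail in practice.
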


\begin{proof}
    \JJ{Let us prove by contradiction that d-preferred extensions are also stable under our BAF $\langle \Args, \Atts, \Supps \rangle$. Let $P^{d}_i \in P^d$ be a d-preferred extension\AR{, which, by definition (Section \ref{sec:preliminaries}), means it is conflict-free}. Assume that $P^{d}_i$ is not stable, i.e., it does not attack all elements that \AR{it does not contain}: $\exists \arg \in \Args$ s.t. $\arg \notin P^{d}_i$, and $P^{d}_i$ does not set-attack $\arg$.
    \AR{By} Definition \ref{def:BAF}, no argument can self-attack, so $(\arg, \arg) \notin \Atts$.} 
    
    \JJ{Let us first consider the case where $\{\arg\}$ set-attacks at least one argument in $P^d_{i}$. Because $P^d_{i}$ is d-preferred, it defends all its elements. Therefore, $P^d_{i}$ must set-attack $\arg$, so we have the contradiction.}
    
    \JJ{Let us consider the case where $\{\arg\}$ does not set-attack any argument in $P^{d}_i$. Because $P^{d}_i$ does not set-attack $\arg$,
    $
    P^{d}_i \cup \{\arg\}
    $ is conflict-free. If $\arg$ has no attacker, \AR{we know that} $P^{d}_i \cup \{\arg\}$ 
    defends all its elements\AR{, and so it must contain $\arg$}. \AR{Thus,} $P^{d}_i$ \AR{cannot} be a d-preferred extension because it is not $\subseteq$-maximal. Then we have the contradiction. Contrastively, let us consider when $\arg$ has one or more attackers, written as $X \subseteq \Args$. \AR{If we consider any $x\in X$, we} 
    need to consider three scenarios depending on the relation between $x$ and $P^{d}_i$. When there exist attacks from $x$ to any argument in $P^{d}_i$, 
    there must also be an attack from $P^{d}_i$ to $x$ (self-defence). Therefore, $P^{d}_i \cup \{\arg\}$ defends all its elements. $P^{d}_i$ \AR{cannot} be a d-preferred extension because it is not $\subseteq$-maximal. Then we have the contradiction. 
    When there are attacks from $P^{d}_i$ to $x$, \AR{the d-preferred extension must} defend all its elements \AR{and so w}e have the same contradiction as the previous case. 
    Finally, when there are no attack relations between $x$ and $P^{d}_i$, 
    $P^{d}_i$ is not \AR{a d-preferred extension because it does not contain the arguments in $X$ and so, again, it is not $\subseteq$-maximal and} we have the contradiction. }

    \JJ{We can therefore conclude that d-preferred extensions are also stable under our BAF definitions and that stable argumentative ensembling is equivalent to d-preferred argumentative ensembling.}
\end{proof}

\subsection{\JJ{S-preferred and c-preferred Argumentative Ensembling}}
\label{ssec:s_extension_theoretical}

We 
now undertake a theoretical analysis of 
\JJ{s-preferred} argumentative ensembling, demonstrating some of the desirable behaviours 
thereof via properties 
introduced in Section \ref{ssec:desirable_properties}.

\begin{theorem}
\label{thm:main}
    \JJ{S-preferred} argumentative ensembling satisfies non-emptiness, model agreement, counterfactual validity and counterfactual coherence. 
    It satisfies non-triviality if for some $\model_i \in \Models$, where $\nexists \model_j \in \Models \setminus \{ \model_i \}$ such that $\model_j \msucc \model_i$, $\exists \model_k \in \Models$ such that $\model_k(\inn) = \model_i(\inn)$, $\model_k(\cfx_i) \!\!\neq\!\! \model_k(\inn)$ and $\model_i(\cfx_k) \!\!\neq\!\! \model_i(\inn)$.
    It does not satisfy majority vote.
\end{theorem}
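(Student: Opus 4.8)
The plan is to treat the four unconditional properties, the conditional non-triviality, and the failure of majority vote separately, reusing the conflict-freeness arguments of Theorem~\ref{thm:d_preferred_results} wherever possible and isolating counterfactual coherence as the genuinely new difficulty. For \emph{model agreement} and \emph{counterfactual validity} I would argue exactly as in the d-preferred case: an s-preferred extension is safe and hence conflict-free, so it cannot contain two arguments joined by an attack. Since $\mpreceq$ is total, any $\model_i,\model_j$ with $\model_i(\inn)\neq\model_j(\inn)$ are joined by an attack in at least one direction, and any pair $\model_i,\cfx_j$ with $\model_i(\cfx_j)=\model_i(\inn)$ is likewise joined by an attack (Definition~\ref{def:BAF}); conflict-freeness rules both out. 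For \emph{majority vote} I would simply invoke Example~\ref{ex:arg}, where the unique cardinality-maximal s-preferred solution is $\{\model_4,\model_5,\cfx_4,\cfx_5\}$, predicting $1$, whereas the majority of $\Models$ predicts $0$, giving an explicit counterexample.

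The heart of the proof is \emph{counterfactual coherence}, which is where safety, rather than mere conflict-freeness, must be exploited. My plan is to show that every s-preferred $S$ is partner-closed, i.e.\ $\model_i\in S$ iff $\cfx_i\in S$, by a completion argument. First I would record that, for every index $i$ occurring in $S$ (as a model or as a CE), $S$ set-attacks neither $\model_i$ nor $\cfx_i$: indeed $S$ either contains the argument or set-supports it through the mutual support $\model_i\leftrightarrow\cfx_i$, and a safe set may not set-attack anything it set-supports or contains. Then, letting $I=\{i\mid \model_i\in S \text{ or } \cfx_i\in S\}$ and $S^{+}=\{\model_i,\cfx_i\mid i\in I\}$, I would prove $S^{+}$ is s-admissible, so that maximality of $S$ forces $S^{+}=S$ and hence coherence. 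Conflict-freeness and safety of $S^{+}$ follow from the recorded fact once one observes that, because $\Supps$ only forms the two-cycles $\model_i\leftrightarrow\cfx_i$, any direct attack by $\model_i$ induces a supported attack by $\cfx_i$ on the same target, and symmetrically; hence if any element of $S^{+}$ attacked $\model_j$ or $\cfx_j$ with $j\in I$, then $S$ itself would set-attack $\model_j$ or $\cfx_j$, a contradiction. Defence of the newly added partners then transfers from $S$ using the equivalence that a single argument set-attacks $\model_i$ exactly when it set-attacks $\cfx_i$.

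The main obstacle I anticipate is precisely this coherence step, and within it the bookkeeping of supported versus indirect attacks: I must verify that the mutual support converts every direct attack on $\model_i$ into a set-attack on $\cfx_i$ and conversely, and that no longer support chains can arise (guaranteed here, since $\Supps$ links an index only to itself). With coherence in hand, \emph{non-emptiness} is short: taking a most-preferred model $\model_i$ (which exists as $\mpreceq$ is a finite total order), the set $\{\model_i,\cfx_i\}$ is s-admissible — it is conflict-free and safe, and it defends itself because any model attacking $\model_i$ must be equally preferred and disagreeing (hence attacked back by $\model_i$) and any model $\model_j$ attacking $\cfx_i$ satisfies $\model_j(\cfx_i)=\model_j(\inn)$ with $\model_i\msucceq\model_j$ (hence attacked by $\cfx_i$). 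Thus the cardinality-maximal s-preferred extension is non-empty, and coherence guarantees it meets both $\Models$ and $\CFXs$.

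Finally, for the conditional \emph{non-triviality} I would use the hypothesis to build an explicit s-admissible set $\{\model_i,\cfx_i,\model_k,\cfx_k\}$, where $\model_i$ is most-preferred and $\model_k$ is the witness supplied by the condition. Conflict-freeness holds because $\model_k(\inn)=\model_i(\inn)$ removes the model-model attack while $\model_k(\cfx_i)\neq\model_k(\inn)$ and $\model_i(\cfx_k)\neq\model_i(\inn)$ remove the cross model-CE attacks; safety is then immediate as the set supports only its four members. Defence follows because every external attacker of the four arguments is counterattacked within the set, with the most-preferred $\model_i$ (which shares $\model_k$'s prediction, so that any model disagreeing with $\model_k$ also disagrees with $\model_i$ and is attacked by it) providing the crucial defence. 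Maximality of the s-preferred extension then yields $|S\cap\Models|>1$, completing the conditional claim.
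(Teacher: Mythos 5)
Your overall strategy coincides with the paper's: model agreement and counterfactual validity via conflict-freeness (the paper obtains these by noting that s-preferred extensions are d-preferred and citing Theorem~\ref{thm:d_preferred_results}), majority vote refuted by Example~\ref{ex:arg}, coherence from the fact that supports are exactly the partner two-cycles together with $\subseteq$-maximality, non-emptiness from the self-defending pair $\{\model_i,\cfx_i\}$ of a $\mpreceq$-maximal model, and conditional non-triviality from the witness pair supplied by the hypothesis (you build the four-element set directly where the paper runs the same case analysis as a contradiction). Your completion argument for coherence is in fact more explicit than the paper's.

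One caveat: the auxiliary claim you lean on for the defence-transfer step --- that a single argument set-attacks $\model_i$ exactly when it set-attacks $\cfx_i$ --- is false as literally stated under the BAF set-attack definitions. For instance, if $\model_j(\inn)=\model_i(\inn)$ and $\cfx_i$ is invalid on $\model_j$ with $\model_j\msucceq\model_i$, then $\{\cfx_j\}$ supported-attacks $\cfx_i$ (via $\cfx_j\to^{+}\model_j\to^{-}\cfx_i$) but need not set-attack $\model_i$: the path $\cfx_j\to^{+}\model_j\to^{-}\cfx_i\to^{+}\model_i$ is neither an indirect nor a supported attack. The step is repairable --- $\model_j$ does indirectly attack $\model_i$, so $S$ set-attacks $\model_j$, and a short case analysis on the form of that set-attack yields an element of $S^{+}$ set-attacking $\cfx_j$ --- but it requires this extra bookkeeping rather than the claimed equivalence. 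A similar looseness affects the defence claim in your non-triviality construction: a CE $\cfx_l$ of an agreeing but strictly more preferred model $\model_l$ may attack $\model_k$ (if $\cfx_l$ is invalid on $\model_k$) without being counterattacked by $\model_i$ unless $\cfx_l$ is also invalid on $\model_i$. The paper's own case analysis is equally terse at exactly that point, so this does not distinguish your proof from theirs, but it is the one place where both arguments would benefit from more care.
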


\begin{proof}
    Let us first prove counterfactual coherence. 
    By Definition~\ref{def:BAF},
    $\forall \model_i \in \Models$ and $\forall \cfx_j \in \CFXs$, $(\model_i, \cfx_j), (\cfx_j, \model_i) \in \Supps$. \JJ{These are the only \AR{support} relations in our BAF.} Thus, there exists an indirect attack on any $\model_i \in \Models$ iff 
    there exists a direct attack on $\cfx_i \in \CFXs$. Likewise, there exists an indirect attack on any $\cfx_i \in \CFXs$ iff  there exists a direct attack on $\model_i \in \Models$. Then, letting $\Models = \{ \model_1, \ldots, \model_m \}$ and $\CFXs = \{ \cfx_1, \ldots, \cfx_m \}$, since we know any $P^s_i \in P^s$ is $\subseteq$-maximal by Definition \ref{def:argensembling}, $P^s_i$ must be such that 
    $\forall i \in \{ 1, \ldots, m \}$, 
    $\model_i \in S$ iff $\cfx_i \in S$.

    Let us prove non-emptiness by contradiction. 
    Assume that \JJ{$\exists P^s_i \in P^s$} such that $P^s_i \cap \Models = \emptyset$ or $P^s_i \cap \CFXs = \emptyset$.
    We know from the above proof that, $\forall P^s_i \in P^s$, $P^s_i \cap \Models \neq \emptyset$ iff $P^s_i \cap \CFXs \neq \emptyset$.
    Then, by the definition of s-preferred extensions (see Section \ref{sec:preliminaries}), it must be the case that $\forall P^s_i \in P^s$, $P^s_i = \emptyset$. 
    Based on the fact that, by Definition
    . \ref{def:modelpref}, $\mpreceq$ is a total ordering and thus transitive, this is not possible as it will always be the case that $\exists \model_i \in \Models$ such that $\nexists \model_j \in \Models$ where $\model_j \msucc \model_i$. Thus, by Definition \ref{def:BAF}, either $\Atts(\model_i) \cup \Atts(\cfx_i) = \emptyset$ or $\forall \arg_k \in \Atts(\model_i) \cup \Atts(\cfx_i)$, $\model_i \in \Atts(\arg_k)$ or $\cfx_i \in \Atts(\arg_k)$, meaning $\{ \model_i, \cfx_i \}$ is either unattacked or is able to defend itself, therefore would be acceptable in at least one s-preferred extension, and we have the contradiction. 

    \JJ{By definition, s-preferred extensions are also d-preferred (see Section \ref{sec:preliminaries}). Therefore, model agreement and counterfactual validity are satisfied (Theorem \ref{thm:d_preferred_results}).}


    Let us prove the \AR{conditional} satisfaction of non-triviality by contradiction. 
    From Definition~\ref{def:argensembling} and the proof for counterfactual coherence above
    , for \JJ{$|S^{a,s} \cap \Models| = 1$}, it must be 
    that $\forall P^s_i \in P^s$, $| P^s_i \cap \Models| = 1$.
    However, from the above assumptions we can see that for some $\model_i \in \Models$, where $\nexists \model_j \in \Models \setminus \{ \model_i \}$ such that $\model_j \msucc \model_i$, $\exists \model_k \in \Models$ such that $\model_k(\inn) = \model_i(\inn)$, $\model_k(\cfx_i) \neq \model_k(\inn)$ and $\model_i(\cfx_k) \neq \model_i(\inn)$.
    Then, to avoid a contradiction, it must be 
    that $\exists \arg_l \in \Models \cup \CFXs$ such that $(\arg_l,\model_k) \in \Atts$ or 
    \AR{$(\arg_l,\cfx_k) \in \Atts$}, and 
    $(\model_i, \arg_l), (\cfx_i, \arg_l) \nin \Atts$.
    \AR{We know that i}f $\arg_l \in \Models$, and $\model_l(\inn) \neq \model_k(\inn) = \model_i(\inn)$, then it \AR{cannot} be the case that \AR{$(\model_l, \model_i) \in \Atts$ and $(\model_i, \model_l) \nin \Atts$} \AR{since,} by 
    Definition~\ref{def:BAF}, $\model_i \msucceq \model_l$.
    \AR{Likewise, i}f $\arg_l \in \CFXs$ and $\model_l(\inn) \neq \model_k(\inn) = \model_i(\inn)$, then \AR{it cannot be the case that $(\model_l, \model_i) \in \Atts$ and $(\model_i, \model_l) \nin \Atts$} by the same reasoning.
    Finally, if $\arg_l \in \CFXs$ and $\model_l(\inn) = \model_k(\inn) = \model_i(\inn)$, then \JJ{it must be} \AR{the case that} $\model_i(\cfx_l) = \model_i(\inn)$ or $\model_l(\cfx_i) = \model_l(\inn)$, then $(\model_i, \cfx_l) \in \Atts$ (a contradiction), and we have the contradiction in all cases.
    

    Finally, Example \ref{ex:arg} provides a counterexample which shows that majority vote is not satisfied.
\end{proof}



\JJ{S-preferred argumentative ensembling 
improves upon augmented, robust, and d-preferred argumentative ensembling by satisfying non-emptiness, counterfactual validity and counterfactual coherence. 
Majority vote is sacrificed in order to achieve this behaviour. In Section \ref{sec:evaluation} we will assess the impact of not guaranteeing majority vote on argumentative ensembling's 
accuracy, along with other metrics. }

\JJ{Next, we show that under our definition of BAF, c-preferred and s-preferred extensions are equivalent. Therefore, the theoretical analysis of s-preferred argumentative ensembling above also applies to c-preferred.}

%
%


\begin{theorem}
\label{thm:c_preferred_results}
    \JJ{C-preferred argumentative ensembling is equivalent to s-preferred argumentative ensembling.}
\end{theorem}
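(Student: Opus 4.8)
The plan is to prove equivalence by showing that the two semantics induce exactly the same family of extensions, i.e.\ $P^s = P^c$. Once this is established, equivalence of the ensembling methods is immediate: by Definition~\ref{def:argensembling} both $S^{a,s}$ and $S^{a,c}$ are selected as cardinality-maximal elements of the respective family, so if the families coincide the two methods range over identical candidates (agreeing up to the random tie-break already built into Definition~\ref{def:argensembling}). To compare the families I would note that s-admissibility requires \emph{safety} plus defence, while c-admissibility requires \emph{conflict-freeness}, closure under $\Supps$, and defence. Since defence is common to both, the whole argument reduces to relating ``safe'' with ``conflict-free and closed for $\Supps$'' in our particular BAF.

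The first key step is a lemma: in the BAF of Definition~\ref{def:BAF}, any conflict-free set $X$ that is closed for $\Supps$ is safe, so that every c-admissible set is s-admissible. The proof exploits the fact that the only support links are the bidirectional pairs between $\model_i$ and $\cfx_i$. Hence if $X$ set-supports some argument $z$, then $z$ is the unique support-partner of an element of $X$ and, by closure, $z \in X$; a violation of safety would then require $X$ to set-attack this $z \in X$, which contradicts conflict-freeness (some single element of $X$ would set-attack $z \in X$). The ``$z \in X$'' disjunct in the definition of safety is dismissed identically. The second key step uses the counterfactual coherence already established in Theorem~\ref{thm:main}: every $X \in P^s$ satisfies $\model_i \in X$ iff $\cfx_i \in X$, which is precisely closure for $\Supps$; being s-admissible, such $X$ is also safe (hence conflict-free) and defends all its elements, so $X$ is in fact c-admissible.

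Combining the two steps, I would lift these inclusions to the $\subseteq$-maximal families. For $P^s \subseteq P^c$: any $X \in P^s$ is c-admissible, and since the c-admissible sets form a subfamily of the s-admissible sets over which $X$ is already $\subseteq$-maximal, $X$ is $\subseteq$-maximal c-admissible as well, i.e.\ $X \in P^c$. For $P^c \subseteq P^s$: any $X \in P^c$ is s-admissible by the lemma; were it not $\subseteq$-maximal s-admissible, some s-admissible $Y \supsetneq X$ and hence (taking a maximal such $Y$) some $Y \in P^s \subseteq P^c$ would strictly contain $X$, contradicting $X \in P^c$. This yields $P^s = P^c$ and completes the argument.

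I expect the main obstacle to be the careful bookkeeping in this last maximality argument---turning the two admissibility inclusions into an equality of the $\subseteq$-maximal families without circularity---together with making the safety-from-closure lemma watertight against the precise definitions of set-support and set-attack. In particular, I would need to verify that no indirect or supported attack can reach a support-partner of $X$ without a single element of $X$ already set-attacking an element of $X$, so that the reduction to conflict-freeness is genuinely exhaustive over all three attack modes.
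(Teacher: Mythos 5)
Your proposal is correct, but it takes a genuinely different route from the paper's proof. The paper argues in one direction only at the level of admissible sets: it notes that c-admissible sets are d-admissible, then invokes the equivalence of d-preferred and stable extensions (Theorem~\ref{thm:stable_results}) together with Property~2 of \cite{Cayrol_05} to conclude that s-admissible sets are closed for $\Supps$ and hence c-admissible; the lifting to the $\subseteq$-maximal families and the converse inclusion are left implicit. You instead prove two self-contained structural facts: (i) in this particular BAF, where the only supports are the reciprocal pairs between $\model_i$ and $\cfx_i$, any conflict-free set closed for $\Supps$ is automatically safe (so c-admissible implies s-admissible), and (ii) every s-preferred extension is closed for $\Supps$ because closure here is literally counterfactual coherence, already established in Theorem~\ref{thm:main}, so it is c-admissible. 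Your reduction of the safety check to conflict-freeness is sound: if $X$ set-supports or contains some $z$ that $X$ also set-attacks, closure forces $z \in X$, and then some single element of $X$ set-attacks $z \in X$, which conflict-freeness forbids regardless of whether the attack is direct, indirect or supported. Your explicit maximality bookkeeping ($P^s \subseteq P^c$ via maximality over a subfamily, $P^c \subseteq P^s$ by extending to a maximal s-admissible superset and deriving a contradiction) fills a step the paper glosses over. What the paper's proof buys is brevity and reuse of earlier results; what yours buys is independence from the external Property~2 of \cite{Cayrol_05} and from the stable/d-preferred equivalence, plus an explicit treatment of both inclusions at the level of preferred extensions rather than just admissible sets.
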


\begin{proof}
    \JJ{By definition, each c-admissible set is also d-admissible (see Section \ref{sec:preliminaries}). S-admissible sets are d-admissible, and we show in Theorem \ref{thm:stable_results} that d-preferred sets are equivalent to stable sets, therefore, by Property 2 of \cite{Cayrol_05}, our s-admissible sets are closed for support relations $\Supps$, therefore are c-admissible.}
\end{proof}

\subsection{\JJ{Discussion on Argumentative Ensembling Behaviours}}
\label{ssec:ensembling_behaviours}

\JJ{In this section, we consider more formal results of argumentative ensembling describing their behaviours. As 
stable and c-preferred argumentative ensemblings are respectively equivalent to their d-preferred and s-preferred counterparts, we will only discuss the latter two onwards. Any results will apply to the former two. For the remainder of the section, we assume as given an input $\inn$, a set $\Models$ of models, a set $\CFXs$ of CEs, a model preference $\mpreceq$ and
a corresponding BAF $\langle \Args, \Atts, \Supps \rangle$.}


\begin{theorem}
\label{thm:nairobarg}
    If $\forall \model_i, \model_j \in \Models$, $\model_i \msimeq \model_j$, and $\forall \model_k \in \Models$ and $\cfx_l \in \CFXs$ where $\model_k(\inn) = \model_l(\inn)$, $\model_k(\cfx_l) \neq \model_k(\inn)$, then \JJ{augmented, robust, and s-preferred argumentative ensembling are equivalent.} 
\end{theorem}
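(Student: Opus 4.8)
The plan is to prove the three-way equivalence by first collapsing robust ensembling onto augmented ensembling, and then showing that the s-preferred extensions are exactly the prediction-groups, whose cardinality-maximal member reproduces the augmented solution. First I would dispatch $S^v = S^n$. Recall $\CFXs^n = \{ \cfx_i \mid \model_i \in \AggModels \}$, while $\CFXs^v$ additionally demands $\model_j(\cfx_i) \neq \model_j(\inn)$ for all $\model_j \in \AggModels$. Since every $\model_j \in \AggModels$ has $\model_j(\inn) = \AggModels(\inn) = \model_i(\inn)$, the second hypothesis (applied with $k = j$, $l = i$) grants $\model_j(\cfx_i) \neq \model_j(\inn)$ automatically. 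Thus the validity constraint is vacuous on $\CFXs^n$, giving $\CFXs^v = \CFXs^n$ and hence $S^v = S^n$. This reduces the claim to showing that $S^{a,s}$ coincides with $S^n$.

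Next I would analyse the corresponding BAF. For each label $c$ set $E_c = \{ \model_i \mid \model_i(\inn) = c \} \cup \{ \cfx_i \mid \model_i(\inn) = c \}$. Under equal preferences, Definition~\ref{def:BAF} makes two models attack each other (reciprocally) exactly when they disagree on $\inn$, and makes $\model_i$ and $\cfx_j$ attack each other (reciprocally) exactly when $\model_i(\cfx_j) = \model_i(\inn)$; by the second hypothesis the latter forces $\model_i(\inn) \neq \model_j(\inn)$. Consequently there is no attack of any kind (direct, indirect or supported) inside a single $E_c$, since supports only ever link $\model_i$ with $\cfx_i$. Meanwhile any two pairs $\{\model_i,\cfx_i\}$ and $\{\model_j,\cfx_j\}$ drawn from distinct groups conflict through the reciprocal model--model attack between $\model_i$ and $\model_j$. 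Combining this with counterfactual coherence (Theorem~\ref{thm:main}, which forces $\model_i \in S$ iff $\cfx_i \in S$), every conflict-free---hence every safe---extension is a union of pairs lying in a single $E_c$.

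Then I would verify that each $E_c$ is s-preferred, so that $P^s = \{ E_c \mid \{\model_i \mid \model_i(\inn) = c\} \neq \emptyset \}$. Safety is immediate: $E_c$ set-supports precisely its own elements (supports being confined to $\model_i$--$\cfx_i$), and it is conflict-free. For defence, every set-attacker of an element of $E_c$ ultimately originates from some $\model_l$ with $\model_l(\inn) \neq c$, either directly or via the support $\model_l$--$\cfx_l$; each such $\model_l$ is counter-attacked by the reciprocal model--model attack from any $\model_i \in E_c$, and each offending CE $\cfx_l$ is set-attacked by $\model_i$ either directly (when $\model_i(\cfx_l) = c$) or indirectly through $\model_i$'s attack on $\model_l$ followed by the support $\model_l \to \cfx_l$. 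Any strictly larger safe set would have to admit a pair from some $E_{c'}$ and thereby introduce a model--model conflict, so $E_c$ is $\subseteq$-maximal. Since $|E_c| = 2\,|\{\model_i \mid \model_i(\inn) = c\}|$, the cardinality-maximal element of $P^s$ corresponds to a most-populous prediction group, i.e. a top label of $\Classes_{max}$, yielding exactly $\AggModels \cup \CFXs^n = S^n$; as the random tie-breaking in Definitions~\ref{def:naive_ensembling} and \ref{def:argensembling} both range over the same collection of largest groups, the three methods have identical possible outputs.

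I expect the defence argument for $E_c$ to be the main obstacle, because it is the only place where indirect and supported attacks matter rather than direct ones. The crucial observation that closes it is that $\model_i$'s attack on a disagreeing model $\model_l$, together with the fixed $\model_l$--$\cfx_l$ support, induces an indirect set-attack on the offending CE $\cfx_l$, even when $\model_i$ does not attack $\cfx_l$ directly; everything else is bookkeeping on the group structure and on the (now vacuous) validity constraint.
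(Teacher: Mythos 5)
Your proposal is correct and follows essentially the same route as the paper's proof: collapse $S^v$ onto $S^n$ by noting the validity constraint is vacuous under the second hypothesis, observe that all attacks in the BAF are reciprocal and confined to pairs from disagreeing prediction groups, conclude that the s-preferred extensions are exactly the (model, CE) prediction groups, and match the cardinality-maximal one to the majority-vote outcome. Your write-up is somewhat more careful than the paper's (explicitly verifying safety, defence against indirect and supported attacks, and the multi-label case), but the decomposition and key observations are the same.
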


\begin{proof}
    
    If $\forall \model_i, \model_j \in \Models$, $\model_i \msimeq \model_j$, then it can be seen from Definition \ref{def:BAF} that $(\model_i,\model_j), (\model_j,\model_i) \in \Atts$ iff $\model_i(\inn) \neq \model_j(\inn)$. 
    Note that, by Definition \ref{def:baselinecfxs}, augmented and robust ensembling are equivalent since $\forall \model_k \in \Models$ and $\cfx_l \in \CFXs$, where $\model_k(\inn) = \model_l(\inn)$, $\model_k(\cfx_l) \neq \model_k(\cfx_l)$.
    Also by the assumptions, it can be seen from Definition \ref{def:BAF} that $\forall \model_k \in \Models$ and $\forall \cfx_l \in \CFXs$, $(\model_k,\cfx_l), (\cfx_l,\model_k) \in \Atts$ iff $\model_k(\cfx_l) = \model_k(\inn)$ and $\model_k(\inn) \neq \model_l(\inn)$ due to the assumptions in the theorem, meaning any attack is reciprocated and all arguments defend themselves. 
    Then, $\forall \model_m, \model_n \in \Models$ such that $\model_m(\inn) = \model_n(\inn)$, $(\model_m,\model_n) \nin \Atts$. \JJ{Also, any s-preferred extension satisfies counterfactual coherence.} 
    Thus, $P^s = \{ \{ \model_o \in \Models, \cfx_o \in \CFXs | \model_o(\inn) = 0 \} , \{ \model_p \in \Models, \cfx_p \in \CFXs | \model_p(\inn) = 1 \} \AR{\}}$. 
    By Definitions \ref{def:naive_ensembling}, \ref{def:baselinecfxs} and \ref{def:argensembling}, all forms of ensembling select from the same two sets of models and CEs in the same manner and are thus equivalent. 
\end{proof}

\JJ{Note that this result does not hold for d-preferred argumentative ensembling as it does not satisfy non-emptiness. Specifically, its solution can contain only CEs and no models. Due to the same reason, most theoretical findings in this section do not apply to d-preferred argumentative ensembling. We therefore omit relevant discussion unless otherwise mentioned.}

We also provide a number of theoretical 
results concerning the behaviour of 
\AR{s-preferred} argumentative ensembling, first relating to the preferences. 
The first 
result demonstrates how a completely dominant model wrt the preferences will be present in all s-preferred sets.
    
\begin{proposition}
\label{prop:dominant_model}
    If $\exists \model_i \in \Models$ such that $\forall \model_j \in \Models$, $\model_i \msucc \model_j$, then $\forall \JJ{P^{s}_i} \in P^s$, $\model_i \in \JJ{P^{s}_i}$.
\end{proposition}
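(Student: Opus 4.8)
The plan is to show that the dominant model $\modeli$, together with its supporting counterfactual $\cfxi$, is completely unattacked in the corresponding BAF, and then to close the argument by a $\subseteq$-maximality contradiction. First I would establish the key lemma that no argument set-attacks $\modeli$ (and none set-attacks $\cfxi$). For a model $\modelj$ to attack $\modeli$ we would need $\modelj \msucceq \modeli$ by Definition \ref{def:BAF}, but $\modeli \msucc \modelj$ forces $\modelj \not\msucceq \modeli$; likewise a counterfactual $\cfxj$ attacks $\modeli$ only if $\modelj \msucceq \modeli$, again impossible. Since $\cfxi$ is a valid CE for $\modeli$ (so $\modeli(\cfxi) \neq \modeli(\inn)$), there is no attack between them, and the same preference reasoning shows $\cfxi$ has no direct attacker. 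Invoking the facts from Section \ref{sec:preliminaries} that a supported attack implies a direct attack, and that the only supports in our BAF are the reciprocal $\modelk \leftrightarrow \cfxk$ links, I would conclude that neither $\modeli$ nor $\cfxi$ is the target of any direct, supported or indirect attack.

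Next I would argue by contradiction: suppose some $P^s_i \in P^s$ does not contain $\modeli$. By counterfactual coherence, which the proof of Theorem \ref{thm:main} establishes for every s-preferred extension, $\cfxi \notin P^s_i$ as well. I would then claim that $X' = P^s_i \cup \{\modeli, \cfxi\}$ is s-admissible and strictly larger than $P^s_i$, contradicting its $\subseteq$-maximality. Defence of $X'$ is immediate: $\modeli$ and $\cfxi$ have no attackers by the lemma, and all remaining elements are already defended by $P^s_i \subseteq X'$.

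The main obstacle is verifying \emph{safety} of $X'$, i.e.\ that $X'$ does not set-attack any argument it contains or set-supports. The crux is to rule out that $\modeli$ or $\cfxi$ attacks (directly, indirectly, or via a supported attack) any member of $P^s_i$. Here I would reuse the lemma: if $\modeli$ or $\cfxi$ set-attacked some $\arg \in P^s_i$, then, because $\modeli$ and $\cfxi$ are themselves unattacked, $P^s_i$ could not defend $\arg$, contradicting the admissibility of $P^s_i$. Concretely, any model disagreeing with $\modeli$ and any CE invalid on $\modeli$ is attacked by $\modeli$ and hence cannot lie in $P^s_i$; by coherence the same exclusion applies to their paired arguments. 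Since supports only connect $\modelk \leftrightarrow \cfxk$, every indirect or supported attack emanating from the pair $\{\modeli,\cfxi\}$ can only reach such disagreeing or invalid arguments and their partners, all of which lie outside $X'$ and outside the set of arguments $X'$ supports. This shows $X'$ creates no conflict with its own members or supported elements, so $X'$ is safe, completing the contradiction and yielding $\modeli \in P^s_i$ for every $P^s_i \in P^s$.
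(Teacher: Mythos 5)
Your proposal is correct and follows essentially the same route as the paper's proof: both hinge on showing that $\model_i$ (and $\cfx_i$) have no attackers under strict dominance, that $\model_i$ set-attacks every disagreeing model and invalid CE, and then conclude via $\subseteq$-maximality of s-preferred extensions. You merely make explicit the final maximality step (constructing $P^s_i \cup \{\model_i, \cfx_i\}$ and verifying its safety and self-defence) that the paper leaves implicit.
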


\begin{proof}
    If $\forall \model_j \in \Models$, $\model_i \msucc \model_j$ then, by Definition \ref{def:BAF}, $\Atts(\model_i) = \Atts(\cfx_i) =\emptyset$ and $\forall \model_j \in \Models$ where $\model_j(\inn) \neq \model_i(\inn)$, $\model_i \in \Atts(\model_j)$. Similarly, $\forall \cfx_k \in \CFXs$  
    where $\model_i(\cfx_k)=\model_i(\inn)$, $\model_i \in \Atts(\cfx_k)$, and so $\model_i$ indirectly attacks $\model_k$. Then, $\nexists \JJ{P^{s}_i} \in P^s$ such that $\model_j \in \JJ{P^{s}_i}$ or $\model_k \in \JJ{P^{s}_i}$ and thus, $\forall \JJ{P^{s}_i} \in P^s$, $\model_i \in \JJ{P^{s}_i}$.
\end{proof}

We also show how, for any two s-preferred sets, there exists some trade-off between their models wrt the preferences.

\begin{lemma}
    Given two s-preferred sets $\JJ{P^{s}_i}, \JJ{P^{s}_j} \in P^s$, $\nexists \model_i \in (\JJ{P^{s}_i} \cap \Models) \setminus \JJ{P^{s}_j}$ such that $\model_i \msucc \model_j$ {for all} $
    \model_j \in \JJ{P^{s}_j}$. 
\end{lemma}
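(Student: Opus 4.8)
The plan is to argue by contradiction: suppose some $\model_i \in (P^{s}_{i} \cap \Models) \setminus P^{s}_{j}$ satisfies $\model_i \msucc \model_j$ for every model $\model_j \in P^{s}_{j} \cap \Models$. Crucially, I would not work with an arbitrary such $\model_i$ but pick one that is $\mpreceq$-maximal among all models with this property (possible since $\mpreceq$ is a total order on the finite set $\Models$). Since s-preferred argumentative ensembling satisfies model agreement (Theorem \ref{thm:main}), all models in $P^{s}_{j}$ share a single prediction on $\inn$, so either $\model_i$ disagrees with all of them or agrees with all of them, and I would treat these two cases separately.

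The disagreement case is immediate. If $\model_i(\inn) \neq \model_j(\inn)$ for each $\model_j \in P^{s}_{j} \cap \Models$, then by Definition \ref{def:BAF}, since $\model_i \msucceq \model_j$, $\model_i$ attacks every such $\model_j$. As $P^{s}_{j}$ defends its elements, it would have to set-attack $\model_i$; but no model in $P^{s}_{j}$ can (each is strictly less preferred, so $\model_j \not\msucceq \model_i$), and by the same preference obstruction no CE in $P^{s}_{j}$ can directly, indirectly or supported-attack $\model_i$ either. This contradicts $P^{s}_{j}$ being s-admissible.

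For the agreement case I would show that $P^{s}_{j} \cup \{\model_i, \cfx_i\}$ is s-admissible, contradicting the $\subseteq$-maximality of $P^{s}_{j}$. First, the bookkeeping: because $\model_i \msucc \model_j$ for all $\model_j \in P^{s}_{j}$ and the predictions agree, every potential attacker in $P^{s}_{j}$ fails the attack conditions of Definition \ref{def:BAF} (models would need a disagreement, CEs would need preference $\msucceq \model_i$), so $P^{s}_{j}$ cannot set-attack $\model_i$ or $\cfx_i$; consequently $\{\model_i,\cfx_i\}$ cannot set-attack $P^{s}_{j}$ either, since otherwise $P^{s}_{j}$ would be forced to counter-attack $\model_i$ or $\cfx_i$. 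As the enlarged set is coherent (Theorem \ref{thm:main} plus the pair $\{\model_i,\cfx_i\}$), safety here coincides with conflict-freeness, which is thus established.

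The crux, and what I expect to be the main obstacle, is defending $\model_i$ and $\cfx_i$. The preference assumption rules out model-attackers of $\model_i$ (such a model would disagree with and dominate all of $P^{s}_{j}$, reproducing the disagreement-case contradiction), so every attacker stems from a CE $\cfx_k$ (with its supported counterpart $\model_k$) where $\model_k \msucceq \model_i$ and $\model_k$ agrees with $\model_i$. Here I would exploit the maximality of $\model_i$: since $\model_i \in P^{s}_{i}$, $P^{s}_{i}$ defends it and hence set-attacks $\cfx_k$, which forces a model $\model' \in P^{s}_{i}$ with $\model' \msucceq \model_k \msucceq \model_i$ that also dominates all of $P^{s}_{j}$ and lies outside $P^{s}_{j}$; were $\model_k \msucc \model_i$, this $\model'$ would be a dominating model strictly above $\model_i$, contradicting maximality. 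Therefore $\model_k \msimeq \model_i$, which makes the attack between $\model_i$ and $\cfx_k$ reciprocal and, through the support $(\cfx_k,\model_k)$, simultaneously yields an indirect attack from $\model_i$ on $\model_k$. Thus $\{\model_i,\cfx_i\}$ counter-attacks all of its attackers, the enlarged set defends all its elements and is s-admissible, giving the desired contradiction. The delicate point throughout is this maximality selection, which converts the one-directional ``$\msucceq$'' into the reciprocal ``$\msimeq$'' needed for self-defence; by contrast the disagreement case and the safety/conflict-freeness checks are routine consequences of Definition \ref{def:BAF} and Theorem \ref{thm:main}.
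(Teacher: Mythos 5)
The paper states this lemma without supplying any proof (no proof environment follows it in Section 5.4), so there is nothing to compare your argument against line by line; judged on its own, your proof is correct and complete. The case split on whether $\model_i$ agrees with the (mutually agreeing, by model agreement) models of $P^{s}_{j}$ is the right decomposition: the disagreement case is immediate because every direct, indirect or supported attack on $\model_i$ in Definition \ref{def:BAF} requires an attacker whose underlying model is $\msucceq \model_i$, which strict dominance forbids, so $P^{s}_{j}$ cannot defend itself against $\model_i$. The agreement case genuinely needs the extra idea you supply, namely choosing $\model_i$ to be $\mpreceq$-maximal among the offending models: without it one only gets $\model_k \msucceq \model_i$ for an attacker $\cfx_k$ (or $\model_k$), which is not enough to reciprocate the attack, whereas bouncing the defence of $\model_i$ through $P^{s}_{i}$ to produce $\model' \msucceq \model_k$ and invoking maximality correctly forces $\model_k \msimeq \model_i$ and hence self-defence of $P^{s}_{j} \cup \{\model_i, \cfx_i\}$, contradicting $\subseteq$-maximality of $P^{s}_{j}$. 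Two small points worth tightening if you write this out in full: (i) you phrase the remaining attackers as ``a CE $\cfx_k$ with its counterpart $\model_k$'', but the attack may instead be $\model_k$ on $\cfx_i$ (an indirect attack on $\model_i$), in which case $P^{s}_{i}$ is only guaranteed to set-attack $\model_k$ rather than $\cfx_k$ --- the same preference bound $\model' \msucceq \model_k$ still falls out, so nothing breaks; and (ii) the reduction of safety to conflict-freeness uses that the enlarged set is closed under the model/CE support pairs, which holds by the counterfactual-coherence argument of Theorem \ref{thm:main} applied to $P^{s}_{j}$ together with adding $\model_i$ and $\cfx_i$ as a pair, as you indicate.
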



Meanwhile, in the non-strict case, a model which is not 
outperformed by any other model wrt the preferences will be present in at least one s-preferred set.

\begin{proposition}
\label{prop:the_best_model_is_in_m_preferred_set}
    If $\exists \model_i \in \Models$ such that $\forall \model_j \in \Models$, $\model_i \msucceq \model_j$, then $\exists \JJ{P^{s}_i} \in P^s$ such that $\model_i \in \JJ{P^{s}_i}$. 
\end{proposition}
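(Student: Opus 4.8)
The plan is to exhibit one concrete s-admissible set that contains $\model_i$ and then enlarge it to a maximal one. Concretely, I would show that the pair $X=\{\model_i,\cfx_i\}$ is s-admissible, and then argue that any $\subseteq$-maximal s-admissible superset of $X$ is s-preferred and still contains $\model_i$. This mirrors the structure of Proposition \ref{prop:dominant_model} but must cope with the fact that, in the non-strict case, $\model_i$ and $\cfx_i$ may actually be attacked (by arguments that are equally preferred, i.e.\ $\msimeq \model_i$), rather than being unattacked.

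First I would check that $X$ is safe, hence conflict-free. By Definition \ref{def:BAF} the only supports link index-matched pairs, so $X$ set-supports exactly $\model_i$ and $\cfx_i$; safety therefore reduces to showing $X$ does not set-attack $\model_i$ or $\cfx_i$. Since $\cfx_i$ is a CE for $\model_i$ we have $\model_i(\cfx_i)\neq\model_i(\inn)$, so neither $(\model_i,\cfx_i)$ nor $(\cfx_i,\model_i)$ is an attack, and no argument self-attacks. Tracing the possible indirect and supported attacks issued by $X$ (which must route through the single support edge $\model_i\leftrightarrow\cfx_i$) shows that none terminate at $\model_i$ or $\cfx_i$, so $X$ is safe.

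The crux is the defence step: I would take an arbitrary $\argb$ with $\{\argb\}$ set-attacking $\model_i$ or $\cfx_i$ and show that $X$ set-attacks $\argb$. Every such set-attack reduces, through the support edges, to a direct attack on $\model_i$ or on $\cfx_i$, with $\argb$ being either that direct attacker or one of its supporters. The non-strict dominance hypothesis $\model_i\msucceq\model_j$ for all $j$ then forces every attacker to be equally preferred to $\model_i$, so the attack is reciprocated: if a disagreeing model $\model_j$ attacks $\model_i$, then $\model_i\msucceq\model_j$ gives $(\model_i,\model_j)\in\Atts$; if $\cfx_j$ attacks $\model_i$, then $\model_i\msucceq\model_j$ together with $\model_i(\cfx_j)=\model_i(\inn)$ gives $(\model_i,\cfx_j)\in\Atts$; and symmetrically $\cfx_i$ counter-attacks the direct attackers of $\cfx_i$. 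When instead $\argb$ is a supporter (a supported or indirect attack), the required counter-attack on $\argb$ is obtained by sending $\model_i$'s or $\cfx_i$'s direct attack along the matching support edge, yielding an indirect attack from $X$ onto $\argb$. Carefully discharging these supported/indirect cases — where the source of the attack is a supporter rather than the direct attacker — is the main obstacle, and it is exactly where the dominance condition is used.

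Finally, with $X$ shown to be s-admissible (and nonempty), I would conclude by a finiteness argument. Since $\Args$ is finite, the collection of s-admissible supersets of $X$ is finite and nonempty, so it has a $\subseteq$-maximal element $\JJ{P^{s}_i}$. Any s-admissible set strictly containing $\JJ{P^{s}_i}$ would also be a superset of $X$, contradicting maximality; hence $\JJ{P^{s}_i}$ is $\subseteq$-maximal s-admissible, i.e.\ $\JJ{P^{s}_i}\in P^s$, and $\model_i\in X\subseteq \JJ{P^{s}_i}$, as required.
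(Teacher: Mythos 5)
Your proposal is correct and follows essentially the same route as the paper's proof: the paper likewise observes that, under non-strict dominance, every attacker of $\model_i$ or $\cfx_i$ is counter-attacked by $\model_i$ or $\cfx_i$ (i.e.\ $\{\model_i,\cfx_i\}$ defends itself), and concludes membership in some s-preferred extension. You merely make explicit the safety check and the extension of $\{\model_i,\cfx_i\}$ to a $\subseteq$-maximal s-admissible set, which the paper leaves implicit.
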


\begin{proof}
    If $\forall \model_j \in \Models$, $\model_i \msucceq \model_j$ then, by Definition \ref{def:BAF}, $\nexists \arg_k \in \Atts(\model_i) \cup \Atts(\cfx_i)$ such that $\model_i \nin \Atts(\arg_k)$ and $\cfx_i \nin \Atts(\arg_k)$. Thus, it must be the case that $\exists \JJ{P^{s}_i} \in P^s$ such that $\model_i \in \JJ{P^{s}_i}$.
\end{proof}

We now show that if a model is outperformed wrt the preferences by all other models, then the outperformed model cannot exist in an s-preferred set unless it is defended by a more preferred model.

\begin{proposition}
\label{prop:worst_model_must_be_defended}
    For any $\model_i \in \Models$, if $\model_j \msucc \model_i$ {for all} $
    \model_j \in \Models \setminus \{ \model_i \}$ and $\exists \JJ{P^{s}_i} \in P^s$ such that $\model_i \in \JJ{P^{s}_i}$, then, $\forall \model_k \in \Models$ where $\model_k(\inn) \neq \model_i(\inn)$,  
    $\exists \model_l \in \JJ{P^{s}_i}$ such that $\model_l \msucceq \model_k$.
\end{proposition}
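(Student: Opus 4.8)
The plan is to exploit the s-admissibility of $P^s_i$ together with a structural feature of the BAF: every attack that lands on a model originates from an argument associated with a model that is, preference-wise, at least as good as the target. First I would show that $\model_k$ attacks $\model_i$. Since $\model_k \in \Models \setminus \{\model_i\}$, the hypothesis gives $\model_k \msucc \model_i$, hence $\model_k \msucceq \model_i$; combined with $\model_k(\inn) \neq \model_i(\inn)$, Definition~\ref{def:BAF} yields $(\model_k, \model_i) \in \Atts$, so $\{\model_k\}$ set-attacks $\model_i$. Because $P^s_i$ is s-admissible and $\model_i \in P^s_i$, the set $P^s_i$ defends $\model_i$, and so there is some $\arg \in P^s_i$ such that $\{\arg\}$ set-attacks $\model_k$.

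The core step is to analyse what such an $\arg$ can be. I would argue that whenever an argument associated with a model $\model_l$ (i.e.\ $\arg = \model_l$ or $\arg = \cfx_l$) set-attacks $\model_k$, it must hold that $\model_l \msucceq \model_k$. Inspecting Definition~\ref{def:BAF}, every direct attack onto a model or a CE carries this preference requirement: a model--model attack $(\model_l, \model_k)$ requires $\model_l \msucceq \model_k$, a CE--model attack $(\cfx_l, \model_k)$ requires $\model_l \msucceq \model_k$, and a model--CE attack $(\model_l, \cfx_k)$ requires $\model_l \msucceq \model_k$. Since the only supports in the BAF are the reciprocal pairs between $\model_m$ and $\cfx_m$, any support chain remains confined to a single index $m$; consequently every indirect or supported attack from $\arg$ onto $\model_k$ decomposes so that its single constituent attack is performed by $\model_l$ or $\cfx_l$ directly onto $\model_k$ (or onto $\cfx_k$), and hence still forces $\model_l \msucceq \model_k$.

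Finally I would close the argument using coherence. The defender $\arg$ is either $\model_l$ or $\cfx_l$, and in either case the associated model satisfies $\model_l \msucceq \model_k$. If $\arg = \model_l$ then trivially $\model_l \in P^s_i$; if $\arg = \cfx_l$ then, by the coherence property established in Theorem~\ref{thm:main} (which in fact holds for every $P^s_i \in P^s$, namely $\model_l \in P^s_i$ iff $\cfx_l \in P^s_i$), we again obtain $\model_l \in P^s_i$. Thus $\model_l \in P^s_i$ with $\model_l \msucceq \model_k$, as required.

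The main obstacle I anticipate is the bookkeeping in the core step: one must verify carefully that the trivial, length-two support cycles between $\model_m$ and $\cfx_m$ genuinely force every indirect and supported attack onto $\model_k$ to reduce to a direct attack performed by $\model_l$ or $\cfx_l$, so that no attacker slips in whose governing preference is decoupled from $\model_k$. Once this reduction is confirmed, the remaining reasoning is a routine application of the defense condition and coherence.
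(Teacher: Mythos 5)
Your proof is correct and follows essentially the same route as the paper's: derive the attack $(\model_k,\model_i)\in\Atts$ from the preference hypothesis, invoke the defence condition of s-admissibility, and read off $\model_l \msucceq \model_k$ from the preference constraints on attacks in Definition~\ref{def:BAF}. You are in fact more careful than the paper, which silently assumes the defender is a model $\model_l$ attacking $\model_k$ directly, whereas you explicitly handle defenders of the form $\cfx_l$ and the indirect/supported attack cases, closing the gap via counterfactual coherence.
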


\begin{proof}
    By Definition \ref{def:BAF}, $\model_k \in \Atts(\model_i)$ and $\{ \model_m \in \Models | \model_i \in \Atts (\model_m) \vee \cfx_i \in \Atts (\model_m) \} = \emptyset$. 
    Then, given that $\exists \JJ{P^{s}_i} \!\in\! P^s$ such that $\model_i \!\in\! \JJ{P^{s}_i}$, we know that, $\forall \model_k \!\in\! \Atts(\model_i)$ $\exists \model_l \!\in\! X$ where $\model_l(\inn) \!=\! \model_i(\inn)$ and $\model_l \!\in\! \Atts(\model_k)$. Thus, by Definition \ref{def:BAF}, $\model_l \msucceq \model_k \msucc \model_i$.
\end{proof}

\JJ{Propositions \ref{prop:the_best_model_is_in_m_preferred_set} and \ref{prop:worst_model_must_be_defended} also apply to d-preferred \AR{argumentative ensembling} with the same proof.}

We 
also consider the behaviour of 
argumentative ensembling wrt the selected CEs, demonstrating 
that those from disagreeing models are guaranteed not to be included in any s-preferred set.

\begin{proposition}
    Any s-preferred set $\JJ{P^{s}_i} \in P^s$ is such that $\nexists \cfx_i, \cfx_j \in \JJ{P^{s}_i} \cap \CFXs$ where $\model_i(\inn) \neq \model_j(\inn)$.
\end{proposition}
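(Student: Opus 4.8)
The plan is to prove the statement by contradiction, showing that if two CEs from disagreeing models were both in the same s-preferred set, then that set would fail to be conflict-free (or safe), contradicting its membership in $P^s$.

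First I would assume, for contradiction, that there exists an s-preferred set $P^{s}_i \in P^s$ containing two CEs $\cfx_i, \cfx_j \in P^{s}_i \cap \CFXs$ whose generating models disagree, i.e., $\model_i(\inn) \neq \model_j(\inn)$. By counterfactual coherence (already established in Theorem \ref{thm:main} for s-preferred ensembling), the presence of $\cfx_i$ and $\cfx_j$ in $P^{s}_i$ forces $\model_i, \model_j \in P^{s}_i$ as well. Now I have two models with differing predictions both sitting in the same set.

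The key step is then to invoke Definition \ref{def:BAF}: whenever $\model_i(\inn) \neq \model_j(\inn)$, there is an attack between them in at least one direction (specifically, $(\model_i, \model_j) \in \Atts$ if $\model_i \msucceq \model_j$, and symmetrically, so at least one such attack exists since $\mpreceq$ is total). This means $\{\model_i\}$ set-attacks $\model_j$ (or vice versa). But any s-preferred set is safe, hence conflict-free, so it cannot contain two arguments where one set-attacks the other. This yields the contradiction directly.

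The main obstacle, such as it is, lies in carefully confirming that Definition \ref{def:BAF} guarantees an attack in \emph{some} direction between any two disagreeing models: because $\mpreceq$ is a total order, either $\model_i \msucceq \model_j$ or $\model_j \msucceq \model_i$ holds, so at least one of $(\model_i,\model_j)$ or $(\model_j,\model_i)$ is in $\Atts$. Once this is pinned down, the conflict-freeness of s-preferred sets closes the argument immediately. A cleaner alternative that avoids even invoking coherence would be to note that the models need not be explicitly introduced at all: since the CEs $\cfx_i, \cfx_j$ support $\model_i, \model_j$ respectively, and the disagreeing models attack each other, there is a supported attack between $\cfx_i$ and $\cfx_j$, so $\{\cfx_i, \cfx_j\}$ is not conflict-free on its own, directly contradicting the conflict-freeness of $P^{s}_i$. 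I would likely present the coherence-based version for consistency with the preceding proofs, but the supported-attack version is the more economical route.
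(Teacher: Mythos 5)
Your proposal is correct and follows essentially the same route as the paper's own proof: assume both CEs are in the set, use counterfactual coherence from Theorem \ref{thm:main} to pull in $\model_i$ and $\model_j$, and then observe that Definition \ref{def:BAF} together with the totality of $\mpreceq$ forces an attack in at least one direction between the disagreeing models, contradicting conflict-freeness. Your remark that the totality of the preference order is what guarantees the attack exists is a point the paper leaves implicit, and your alternative supported-attack argument is also sound, but the main line of reasoning matches the paper's.
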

\begin{proof}
    Let us prove by contradiction, assuming that $\exists \cfx_i, \cfx_j \in \JJ{P^{s}_i} \cap \CFXs$.
    Counterfactual coherence (Theorem \ref{thm:main}) requires that $\model_i, \model_j \in \JJ{P^{s}_i}$. 
    However, by Definition \ref{def:BAF}, $\model_i(\inn) \neq \model_j(\inn)$ requires that $\model_i \in \Atts(\model_j)$ or  $\model_j \in \Atts(\model_i)$, and so we have the contradiction.
\end{proof}




\subsection{\JJ{Additional Discussion on Abstract vs. Bipolar Argumentation}}
\label{ssec:aaf}

\JJ{In this section, we discuss the similarities and differences between our proposed argumentative ensembling approach instantiated using bipolar argumentation and an alternative approach instantiated using abstract argumentation. In our previous definition of a CE (Section \ref{sec:preliminaries}), we required \AR{that} $\model(\cfx) \neq \model(\inn)$, i.e., the CE for a model prediction is \emph{valid} on the model. Under this assumption, we obtain the following equivalence result:}

\begin{theorem}
\label{thm:abstract_argumentation_equivalence}
\JJ{Consider input x, set $\Models$ of models, set $\CFXs$ of CEs such that $\forall \cfx_i \in \CFXs$, $\forall \model_i \in \Models$, $\model_i(\cfx_i) \neq \model_i(\inn)$.
Let $\mpreceq\!$ be the model preferences, and 
$\langle \Args, \Atts, \Supps \rangle$ be the corresponding BAF with $P^s$ the set of all s-preferred extensions.
Let $\langle \Args', \Atts' \rangle$ be an AAF with:}
    \begin{itemize}
        \item \JJ{$\Args' = 
        \Models \times \CFXs$;}
        \item $\Atts' \subseteq \Args' \times \Args'$ where
            $\forall (\model_i, \cfx_i), (\model_j, \cfx_j) \in \Args', ((\model_i, \cfx_i), (\model_j, \cfx_j)) \in \Atts'$ iff $\model_i \!\msucceq \!\model_j$ and:
            \begin{itemize}
                \item $\model_i(\inn) \neq \model_j(\inn)$; or
                \item $\model_i(\cfx_j) \!= \!\model_i(\inn)$; or
                \item $\model_j(\cfx_i) \!= \!\model_j(\inn)$.
            \end{itemize}
        \end{itemize}
Let $P$ be the set of all preferred extensions of $\langle \Args', \Atts' \rangle$. 
Then, 
\begin{itemize}
    \item $P^s=\{f(P_i)| P_i \in P \}$
    \item $P=\{f^{-1}(P^s_i)| P^s_i \in P^s \}$
\end{itemize}

\AR{where} $f$ \AR{is} a function such that $f(P_i) = \{ \model_i, \cfx_i | (\model_i, \cfx_i) \in P_i \}$

\AR{and} $f^{-1}$ \AR{is} such that $f^{-1}(P^s_i) = \{ (\model_i, \cfx_j) \in \Models \times \CFXs | \model_i, \cfx_j \in P^s_i, i = j \}$.





\end{theorem}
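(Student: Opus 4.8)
The plan is to leverage counterfactual coherence (Theorem~\ref{thm:main}) to show that every extension of the BAF that we care about is a union of ``glued'' pairs $\{\model_i,\cfx_i\}$, and then to verify that $f$ and $f^{-1}$ transport the acceptability structure of $\langle\Args',\Atts'\rangle$ onto that of the BAF restricted to such glued sets. First I would record the structural role of the hypothesis $\model_i(\cfx_i)\neq\model_i(\inn)$: by Definition~\ref{def:BAF} it forbids any attack between a model and its own CE, so each diagonal pair $\{\model_i,\cfx_i\}$ is internally consistent, while Theorem~\ref{thm:main} gives $\model_i\in P^s_i$ iff $\cfx_i\in P^s_i$ for every $P^s_i\in P^s$. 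Hence each s-preferred extension is a union of glued pairs, and $f,f^{-1}$ are mutually inverse bijections between glued subsets of $\Args$ and subsets of $\Args'$. This reduces the theorem to showing that the correspondence preserves admissibility and $\subseteq$-maximality.

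The core step is an \emph{attack correspondence}: for distinct $i,j$, the glued set $\{\model_i,\cfx_i\}$ set-attacks $\model_j$ (equivalently $\cfx_j$, since $\model_j$ and $\cfx_j$ support each other and any attack on one propagates to the other by an indirect attack) iff $((\model_i,\cfx_i),(\model_j,\cfx_j))\in\Atts'$. I would prove this by enumerating, from Definition~\ref{def:BAF}, the direct, supported and indirect attacks originating in $\{\model_i,\cfx_i\}$: a direct model--model attack realises $\model_i(\inn)\neq\model_j(\inn)$, an attack through $\cfx_j$ realises $\model_i(\cfx_j)=\model_i(\inn)$, and the direct $\cfx_i$--$\model_j$ attack realises $\model_j(\cfx_i)=\model_j(\inn)$, each carrying exactly the side-condition $\model_i\msucceq\model_j$ demanded by $\Atts'$. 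The delicacy is that no single element witnesses all three $\Atts'$-conditions — $\model_i(\cfx_j)=\model_i(\inn)$ is seen by $\model_i$ but only indirectly by $\cfx_i$, whereas $\model_j(\cfx_i)=\model_j(\inn)$ is seen by $\cfx_i$ but only indirectly by $\model_i$ — so I must track \emph{which} element of the pair performs the attack; the point is that, since the pair is always present as a whole, at least one of its elements always does.

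Next I would translate the acceptability conditions. For a glued set $X$ the set-support relation coincides with membership (supports live only inside pairs), so the ``safe'' clause of s-admissibility collapses to conflict-freeness, and combined with the attack correspondence this shows $X$ is safe iff $f^{-1}(X)$ is conflict-free in $\langle\Args',\Atts'\rangle$. For defence I would reuse the single-argument analysis: every single-argument attacker of $\model_k$ or $\cfx_k$ comes from some pair $l$ with $((\model_l,\cfx_l),(\model_k,\cfx_k))\in\Atts'$, and conversely a defending pair $l'\in f^{-1}(X)$ supplies, through one of its two elements, a single-argument counter-attack; hence $X$ defends its elements iff $f^{-1}(X)$ does in the AAF. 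Together these give that $f$ restricts to a bijection between the admissible extensions of $\langle\Args',\Atts'\rangle$ and the glued s-admissible sets of the BAF.

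Finally I would lift the bijection to the maximal level. Since $f,f^{-1}$ are $\subseteq$-preserving, they map $\subseteq$-maximal admissible sets to $\subseteq$-maximal glued s-admissible sets. It then remains to argue that the $\subseteq$-maximal glued s-admissible sets are exactly the s-preferred extensions: each s-preferred extension is glued by coherence, and any maximal glued s-admissible set that failed to be s-preferred could be enlarged to an s-admissible set and thence (extending to a maximal one) to an s-preferred, necessarily glued, set strictly containing it, a contradiction. As AAF-preferred extensions are by definition the $\subseteq$-maximal admissible sets, this yields $P^s=\{f(P_i)\mid P_i\in P\}$ and $P=\{f^{-1}(P^s_i)\mid P^s_i\in P^s\}$. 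I expect the attack correspondence of the second step, with its bookkeeping of which pair-element witnesses each attack direction, to be the main obstacle; once the glued-pair reduction is in place the remaining steps are essentially routine.
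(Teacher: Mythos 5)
Your proof is correct and rests on the same underlying strategy as the paper's: both reduce the equivalence to the observation that attacks between the glued pairs $\{\model_i,\cfx_i\}$ in the BAF correspond exactly to the three disjuncts defining $\Atts'$ (each realised by a different element of the pair, with the same precedence side-condition $\model_i \msucceq \model_j$), and both transfer conflict-freeness/safety and defence across this correspondence in the two directions. The differences are mainly organisational, but two are worth noting. First, you isolate the attack correspondence as an explicit lemma and derive the glued-pair structure of s-preferred extensions from counterfactual coherence (Theorem~\ref{thm:main}) up front, whereas the paper performs the same case analysis inline within two proofs by contradiction; your remark that, on glued sets, set-support coincides with membership so that safety collapses to conflict-freeness is a cleaner way to see why the ``safe'' clause causes no extra difficulty. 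Second, and more substantively, you explicitly handle the $\subseteq$-maximality step --- arguing that $f$ and $f^{-1}$ are inclusion-preserving mutual inverses between glued subsets of $\Args$ and subsets of $\Args'$, and that maximal glued s-admissible sets coincide with s-preferred extensions --- whereas the paper's proof establishes only admissibility in each direction and then concludes ``preferred'' without addressing maximality; your treatment closes that gap, so on this point your argument is the more complete of the two.
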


\begin{proof}

\JJ{First, we prove $\forall P^s_i \in P^s$, $f^{-1}(P^s_i) \in P$. 
We show $f^{-1}(P^s_i)$ is conflict-free by contradiction. Assume $f^{-1}(P^s_i)$ is not conflict-free, then $\exists (\model_i, \cfx_i), (\model_j, \cfx_j) \in f^{-1}(P^s_i)$, such that $((\model_i, \cfx_i), (\model_j, \cfx_j)) \in \Atts'$. Then, by the AAF definition in Theorem \ref{thm:abstract_argumentation_equivalence}, it must be that $\model_i \!\msucceq \!\model_j$ and, either $\model_i(\inn) \neq \model_j(\inn)$, or $\model_i(\cfx_j) \!= \!\model_i(\inn)$, or $\model_j(\cfx_i) \!= \!\model_j(\inn)$. By Definition \ref{def:BAF}, in the first case, $(\model_i, \model_j) \in \Atts$; in the second and third cases, $(\model_j, \cfx_j) \in \Atts$. Then, $P^s_i$ cannot be conflict-free, but it is an s-preferred extension, so we have the contradiction. 
We now prove $f^{-1}(P^s_i)$ defends all its elements 
by contradiction. Assume $f^{-1}(P^s_i)$ does not defend all its elements, then, $\exists(\model_i, \cfx_i) \in f^{-1}(P^s_i)$, such that $\exists (\model_k, \cfx_k) \in \Args'\setminus f^{-1}(P^s_i)$ \AR{where} $((\model_k, \cfx_k), (\model_i, \cfx_i)) \in \Atts'$, and $\nexists (\model_j, \cfx_j) \in f^{-1}(P^s_i)$ s.t. $((\model_j, \cfx_j), (\model_k, \cfx_k)) \in \Atts'$. Mapping this back to the BAF by Definition \ref{def:BAF}, we know $\model_i, \cfx_i \in P^s_i$, $\model_k, \cfx_k \in \Args \setminus P^s_i$, and either $(\model_k$, $\model_i) \in \Atts$, $(\model_k, \cfx_i) \in \Atts$, or $(\cfx_k, \model_i) \in \Atts$, and $\nexists \model_j, \cfx_j \in P^s_i$ s.t. $(\model_j$, $\model_k) \in \Atts$, $(\model_j, \cfx_k) \in \Atts$, or $(\cfx_j, \model_k) \in \Atts$. 
This cannot be the case since $P^s_i$ is safe 
and so we have the contradiction. Therefore, $f^{-1}(P^s_i)$ is conflict-free and it defends all its elements in AAF, i.e., it is a preferred extension in the AAF.}

\JJ{Next, we prove $\forall P_i \in P, f(P_i) \in P^s$. 
We first show $f(P_i)$ is safe by contradiction. Assume $f(P_i)$ is not safe, then $\exists \arga \in \Args, f(P_i)$ set-attacks $\arga$, and either $\arga \in f(P_i)$, or $f(P_i)$ set-supports $\arga$. For the former case, we know $P_i$ is conflict-free in \AR{the} AAF. Mapping the conditions of attacks in \AR{the} AAF (Theorem \ref{thm:abstract_argumentation_equivalence} above) to the conditions of attacks in BAF (Definition \ref{def:BAF}), it means $\nexists \model_i, \model_j \in f(P_i)$ s.t. $(\model_i, \model_j) \in \Atts$. Also, $\nexists \model_i, \cfx_j \in f(P_i)$ s.t. $(\model_i, \cfx_j) \in \Atts$ or $(\cfx_j, \model_i) \in \Atts$. Therefore, we have a contradiction in the former case. \AR{There is also a contradiction in} the second case, because by Definition \ref{def:BAF}, support relations can only exist between a model and its CE. Therefore, $f(P_i)$ is safe. Next, we prove $f(P_i)$ defends all its elements. We know that $P_i$ defends all its elements in the AAF because it is a preferred extension. This means $\forall (\model_k, \cfx_k) \in \Args\setminus P_i$, if $(\model_k, \cfx_k)$ attacks an element in $P_i$, then $\exists (\model_i, \cfx_i) \in P_i$ s.t. $((\model_i, \cfx_i), (\model_k, \cfx_k)) \in \Atts'$. Translating the conditions of attacks in the AAF to the conditions of attacks in the BAF, this indicates that in the BAF, one of $(\model_k, \cfx_k)$ must be directly attacked and the other one is supported attacked, because they support each other. Therefore, any attacker of $f(P_i)$ is also attacked by $f(P_i)$ in the BAF. Therefore, $f(P_i)$ is an s-preferred extension in the BAF because it defends all its elements is safe.}
\end{proof}

\begin{example}
\label{ex:AAF}
        \JJ{The AAF corresponding to input, models and CEs as in Example \ref{
        ex:CEs} is $\langle \Args', \Atts'\rangle$ with (see  Fig.~\ref{fig:example_AAF}):
    $\Args' = $
    $\{ (\model_1, \cfx_1),$ 
    $(\model_2, \cfx_2),$
    $(\model_3, \cfx_3),$
    $(\model_4, \cfx_4),$
    $(\model_5, \cfx_5)\}$; 
    $\Atts' = $ 
    $\{((\model_2, \cfx_2), $ 
    $(\model_1, \cfx_1)), $ 
    $ ((\model_2, \cfx_2), $ 
    $ (\model_3, \cfx_3)), $ 
    $ ((\model_2, \cfx_2), $ 
    $ (\model_4, \cfx_4)), $ 
    $ ((\model_2, \cfx_2), $ 
    $ (\model_5, \cfx_5)), $ 
    $ ((\model_3, \cfx_3), $ 
    $ (\model_4, \cfx_4)), $ 
    $ ((\model_4, \cfx_4), $ 
    $ (\model_1, \cfx_1)), $ 
    $ ((\model_5, \cfx_5), $ 
    $ (\model_1, \cfx_1)), $ 
    $ ((\model_5, \cfx_5), $ 
    $ (\model_2, \cfx_2)), $ 
    $ ((\model_5, \cfx_5), $ 
    $ (\model_3, \cfx_3))\}$,  
    this leads to 
    $P = $
    $\{ \{ (\model_2,$ 
    $\cfx_2) \},$ 
    $\{ (\model_4,$ 
    $\cfx_4),$ 
    $(\model_5,$ 
    $\cfx_5) \} \} $, and the cardinality-maximal preferred extension, which becomes the solution to the RAE problem, is 
    $\{ (\model_4,$ 
    $\cfx_4),$ 
    $(\model_5,$ 
    $\cfx_5) \} $,  \JJ{the aggregated prediction is} $\model_4(\inn) = \model_5(\inn) = 1$.}
\end{example}

\begin{figure}[h]
    \centering
    \includegraphics[width=0.66\textwidth]{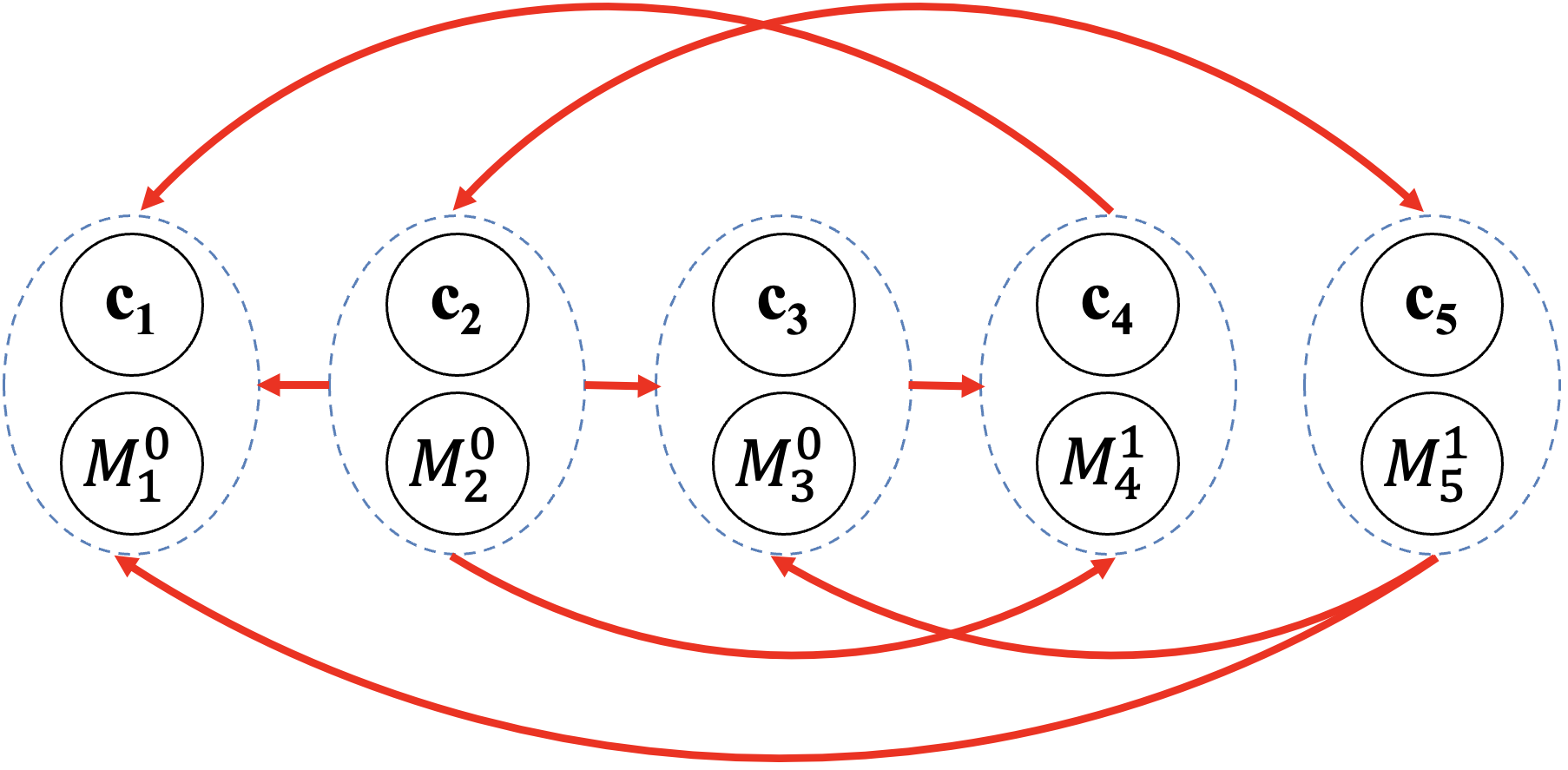}
	\caption{\JJ{AAF for Example \ref{ex:AAF} where: models' predictions for the input $\inn$ are given as superscripts, e.g., $\model_1(\inn) = 0$ \AR{and} $\model_4(\inn) = 1$; standard (reciprocal) attacks are represented by single-headed (\AR{double}-headed, \resp) red arrows. }
    \label{fig:example_AAF}}
\end{figure}

\JJ{Example \ref{ex:AAF} (Figure \ref{fig:example_AAF}), the AAF equivalent of Example \ref{ex:arg} (Figure \ref{fig:example_BAF}), demonstrates the intuitions behind the equivalence between BAF and AAF.}

\JJ{In practice, however, CEs may not always be valid on the model they are generated for. This is especially the case for gradient-based methods because no validity guarantees are given \cite{Wachter_17,guidotti2022counterfactual}. When this happens, grouping each model and the corresponding CE can become problematic because of the invalidity. Then, \AR{the} BAF provides a more expressive way to represent the conflicts than \AR{the} AAF. For example, the support relations between a model and its CE can be removed or changed into attack relations. We leave the investigation of such scenarios to future work.}

\section{Empirical Evaluation}
\label{sec:evaluation}





We now examine the effectiveness of our approach using three real-world datasets. Specifically, we empirically 
evaluate the extent to which each of the ensembling 
methods introduced in Section \ref{ssec:naive} and Section \ref{sec:main} satisfy the desirable properties defined in Section \ref{ssec:desirable_properties}. \JJ{We instantiate four variations of 
argumentative ensembling with different model preferences over two types of model properties 
$\Props$ and demonstrate the usefulness of incorporating model preferences. We also show that the task performance (accuracy) by our methods remain comparable to the baselines. Finally, we report the compute time for solving BAFs against increasing model sizes in the problem setting.}


\subsection{Experiment Setup} 

\subsubsection{\JJ{Datasets and Models}}
We apply all ensembling methods on three datasets in the legal and financial contexts: loan approval (heloc) \cite{heloc}, recidivism prediction (compas) \cite{compas}, and credit risk (credit) \cite{Dua2019}. Due to 
neural networks' sensitivity to randomness at training time, 
they suffer severely from MM and are frequently targeted when investigating this research topic (as discussed in Section \ref{sec:related}). \JJ{Therefore, we focus on neural networks for our experiments. Note that the problem setting focuses on the conflicts between classifiers' predictions and CE validities and is agnostic to the types of models used. Our method is also model agnostic and can be applied to model sets consisting of even different types of classifiers.}

\begin{table}[h]
\centering
\begin{tabular}{cccccc}
\hline
 & (15, 15) & (20, 15) & (20, 20) & (30, 20) & (30, 25) \\ \hline
heloc & .730$\pm$.015 & .738$\pm$.009 & .741$\pm$.008 & .734$\pm$.007 & .736$\pm$.009 \\
compas & .844$\pm$.013 & .841$\pm$.013& .845$\pm$.011& .840$\pm$.012& .845$\pm$.012 \\ 
credit & .709$\pm$.028 & .699$\pm$.007& .705$\pm$.018& .697$\pm$.011& .696$\pm$.021 \\ \hline
\end{tabular}
\caption{\JJ{Accuracies of trained neural networks of different sizes on each dataset}}
\label{tab:model_accuracies}
\end{table}

For each dataset, we train-test 150 classifiers with five different hidden layer sizes using 80\% of the dataset (this 80\% is train-test split for training each model). \JJ{The neural networks \AR{have} two hidden layers of sizes \{(15, 15), (20, 15), (20, 20), (30, 20), (30, 25)\}. The five different sizes are for obtaining five levels of model simplicity, which serves as a toy property evaluation to demonstrate the usefulness of supporting model preferences in our framework. Their 5-fold cross-validation accuracies on the training set are reported in Table \ref{tab:model_accuracies}. We observe no significant changes in test accuracies across the different model sizes, as the accuracy variations fall within the range of standard deviation.} 
\AR{The classifiers} are trained using different random seeds for parameter initialisation and 
different train-test splits (within the train-test 80\% of the dataset), forming a pool of possible models under MM from which we sample multiple sets  $\Models$ of models to which we apply our ensembling methods. 
We use the remaining 20\% of each dataset as test inputs for the ensembling methods (limited to 500 inputs test set if larger).

\subsubsection{\JJ{Experimental Procedure}}

\JJ{We instantiate the \Problem{} problem by randomly sampling sets $\Models$ with 10, 20 or 30 models from the model pool of the 150 trained classifiers.} 
We then  
feed each input to the models to receive their predicted labels and generate one CE from each model using the nearest neighbour CEs approach of \cite{NiceNNCE}, \JJ{due to its effectiveness and guaranteed CE validity}. We finally apply the ensembling methods. For each size $|\Models| = (10, 20, 30)$, we \JJ{ repeat the above process five times and record the average 
 results.}

As concerns model preferences, we 
focus on \JJ{two measurable model properties:} 
accuracy of the trained classifiers over the (20\%) test inputs and model structure simplicity. \JJ{For accuracy, we manually made sure that every model in $\Models$ has a different test accuracy and thus no pairs of models can make identical predictions to every test input.} 
For the latter, we assign the models, from the most complex to the simplest (depending on the number of neurons in the hidden layers\AR{)}, scores of \{0, 0.25, 0.5, 0.75, 1\} such that higher values imply simpler models. Note that 
multiple models in $\Models$ may have the same simplicity scores 
as we adopt only five different model structures to obtain 
150 neural networks for each dataset. \JJ{Using the two metrics, we specify four types of model preferences to be used in our method, discussed more later in Section  \ref{sec:experiment_setup_ensembling_methods}.}


\subsubsection{\JJ{Evaluation Metrics}}

\JJ{Each ensembling method is evaluated against the following metrics: aggregated prediction accuracy over the test set, average model simplicity in the ensemble, and the percentage of test points for which each property is satisfied. 
We also report the average test set accuracies of the models in $\Models$. Finally, we report the average computation time needed for argumentative ensembling when the BAF is built to show the scalability of the method.}

\subsubsection{\JJ{Ensembling Methods}}
\label{sec:experiment_setup_ensembling_methods}

We use augmented ensembling $S^n$ 
and robust ensembling $S^v$ as 
baselines. \JJ{We experiment with both d-preferred and s-preferred argumentative ensembling ($S^{a,d}$ and $S^{a,s}$ respectively).  
For 
each type of} argumentative ensembling, we use four variations with different
preferences
: $S^{a, \sigma}$ ($\Props\!=\!\emptyset$), $S^{a, \sigma}$-A ($\Props\!=\!\{accuracy\}$), $S^{a, \sigma}$-S ($\Props\!=\!\{simplicity\}$) and $S^{a, \sigma}$-AS ($\Props\!=\!\{accuracy$, $simplicity\}$, 
{with} $accuracy \!\psimeq\! simplicity$). 

\JJ{In our implementation of argumentative ensembling, we add simple mechanisms for handling edge cases. Firstly, when there are multiple cardinality-maximum extensions, we return the one that gives the same prediction as majority vote. If there are still multiple sets, we then randomly choose one. Additionally, for d-preferred argumentative ensembling, when there are multiple cardinality-maximum extensions, where possible, we select from the ones that contain at least one model and one CE. The implementations are adapted from \cite{DBLP:conf/atal/JiangL0T24,jiang2025robustx}.}

\subsection{\JJ{Results on the Usefulness of Preferences}}

\begin{table}[h!]
\centering
\resizebox{0.75\textwidth}{!}{
\begin{tabular}{cl|cc|cc|cc} 
\hline
\textbf{Dataset} & \textbf{Method} &  \multicolumn{2}{c|}{\textbf{$|\Models|=10$}} & \multicolumn{2}{c|}{\textbf{$|\Models|=20$}} & \multicolumn{2}{c}{\textbf{$|\Models|=30$}} \\

 & & \textbf{Acc.} & \textbf{Simp.} & \textbf{Acc.} & \textbf{Simp.}& \textbf{Acc.} & \textbf{Simp.}\\ \hline

\multirow{11}{*}{heloc} & avg & .710 & .485 & .710 & .500 & .710 & .510 \\
                          & $S^n$ & .714 & .503 & .716 & .508 & .716 & .508\\
                          & $S^v$ & .714 & .503& .716 & .508& .716&  .508\\
                          & $S^{a,d}$ & .714 & .507& .709 & .505 & .710& .501\\
                          & $S^{a,d}-A$ & .720 & .478& .726 & .547 & .728 & .507\\
                          & $S^{a,d}-S$ & .710 &.606 & .712&.542 & .711 & .520\\
                          & $S^{a,d}-AS$ & .716 &.529 & .711&.520 & .709 & .506\\
                          & $S^{a,s}$ & .714 &.511 & .709 &.500 & .710 & .502\\
                          & $S^{a,s}-A$ & .720 &.478 & .726 &.547 & .728& .507\\
                          & $S^{a,s}-S$ & .710 &.607 & .712 & .541& .711& .520\\
                          & $S^{a,s}-AS$ & .716 & .529& .711 & .520& .709& .507\\ \hline
\multirow{11}{*}{compas} & avg & .855 & .570 & .855 & .558 & .855 & .533 \\
                          & $S^n$ & .856 & .495 & .858 & .490 & .859 & .493 \\
                          & $S^v$ & .856 & .495& .858& .490 & .859 & .493\\
                          & $S^{a,d}$ & .860 &.492 & .858 & .479 & .858& .480\\
                          & $S^{a,d}-A$ & .863 &.436 & .866& .619 & .869 & .583\\
                          & $S^{a,d}-S$ & .861& .699 & .857& .586 & .856 & .551 \\
                          & $S^{a,d}-AS$ & .864 & .567 & .856& .519 & .856 & .518 \\
                          & $S^{a,s}$ & .860 & .492 & .858 & .476 & .858 & .479\\
                          & $S^{a,s}-A$ & .863 & .436 & .866 & .619 & .869& .583\\
                          & $S^{a,s}-S$ & .861 & .698& .857 & .587 & .856& .545\\
                          & $S^{a,s}-AS$ & .864 &.585 & .856 & .529& .856& .532 \\ \hline
\multirow{11}{*}{credit} & avg & .660 & .505 & .660 & .510 & .660 & .520 \\
                          & $S^n$ & .696 & .509 & .705 & .533 & .709 & .545 \\
                          & $S^v$ & .696 & .509 & .705 & .533 & .709 & .545\\
                          & $S^{a,d}$ & .685 & .525 & .683 & .556 & .682 & .563\\
                          & $S^{a,d}-A$ & .700 & .457 & .706 & .522 & .710 & .554\\
                          & $S^{a,d}-S$ & .670 & .610 & .686 & .564& .685 & .568\\
                          & $S^{a,d}-AS$ & .683 & .537 & .685 & .559& .684 & .566\\
                          & $S^{a,s}$ & .685 & .522 & .683 & .555 & .682 & .563\\
                          & $S^{a,s}-A$ & .700 & .437 & .706& .521 & .710& .552\\
                          & $S^{a,s}-S$ & .670 & .611 & .686 & .563 & .685& .568\\
                          & $S^{a,s}-AS$ & .683 & .535 & .685 & .559 & .684 & .566\\ \hline
\end{tabular}
}
\caption{Average accuracy and simplicity of the models in the ensembling result by each method. ``avg'' rows report these scores for the models in $\Models$.}
\label{tab:result_accuracy}
\end{table}


\JJ{We first report the accuracy and average model simplicity of the ensembles by each method in Table \ref{tab:result_accuracy} (note that these are obtained on subsets different than the ones we used to obtain cross-validation accuracies in Table \ref{tab:model_accuracies}).} With test accuracy specified as model preference, $S^{a,\sigma}$-A shows the best accuracy in all experiments (across d- and s-preferred). This validates Proposition \ref{prop:dominant_model}, because, assuming that the accuracy for every model in $\Models$ is different, for $S^{a,\sigma}$-A, there exists a model in 
$\Models$ that is the most preferred 
and is included in the ensemble. Similarly, $S^{a,\sigma}$-S shows the best simp. scores in all experiments. However, since simplicity scores are not unique for each model, usually a single most preferred model does not exist, therefore an optimal simp. evaluation is not guaranteed. 
When specifying both properties as model preferences ($S^{a,\sigma}$-AS), at least one of the two metrics is improved compared with $S^{a,\sigma}$.

\JJ{It can be concluded that any ensembling method improves prediction accuracy when compared with the average single-model accuracies, as better evaluations can almost always be observed. While $S^{a,\sigma}-A$ configurations always give the highest accuracy, the majority vote accuracy remains competitive. For four out of nine sets of experiments (heloc $|\Models|=10$, compas $|\Models|=10, 20, 30$), argumentative ensembling accuracies when no preferences are specified are higher or comparable with $S^n$ and $S^v$, while they are marginally lower in other cases.}

\subsection{\JJ{Results on Desirable Properties Satisfaction}}


\begin{figure}[h!]
     \centering
     \begin{subfigure}{1\textwidth}
         \centering
         \includegraphics[width=\textwidth]{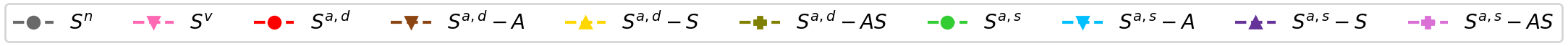}
     \end{subfigure}
     \begin{subfigure}{1\textwidth}
         \centering\includegraphics[width=0.8\textwidth]{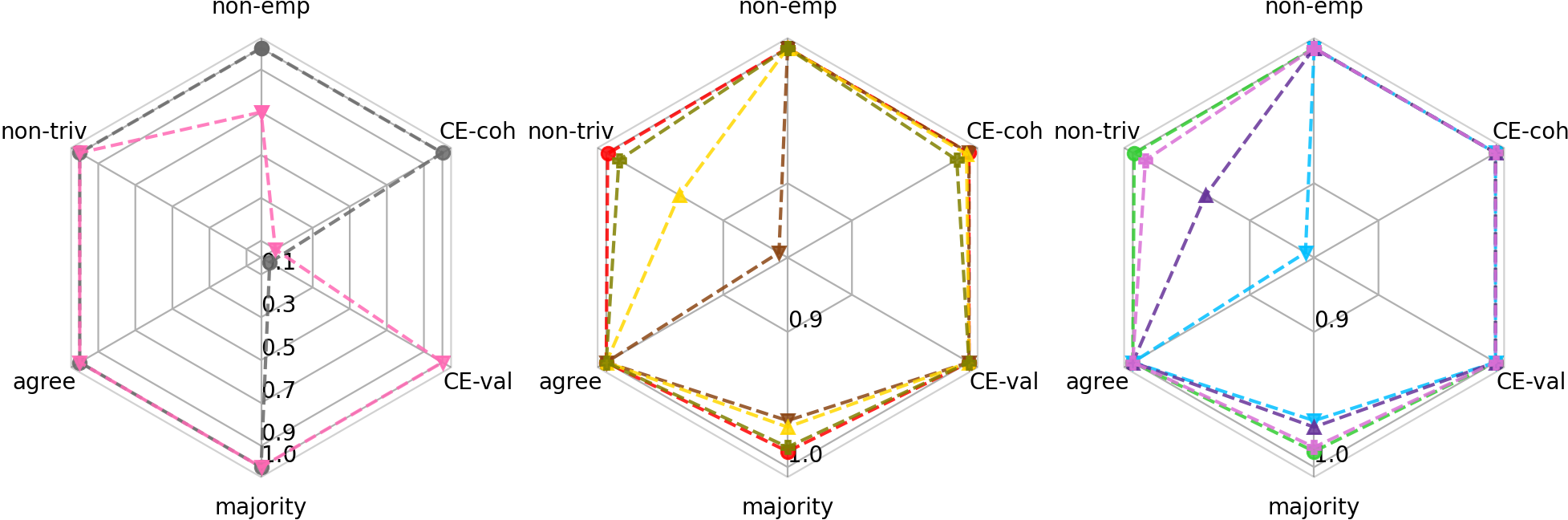}
         \caption{heloc dataset, $|\Models|=10$}
         \label{fig:vis1_heloc}
     \end{subfigure}
     \begin{subfigure}{1\textwidth}
         \centering
         \includegraphics[width=0.8\textwidth]{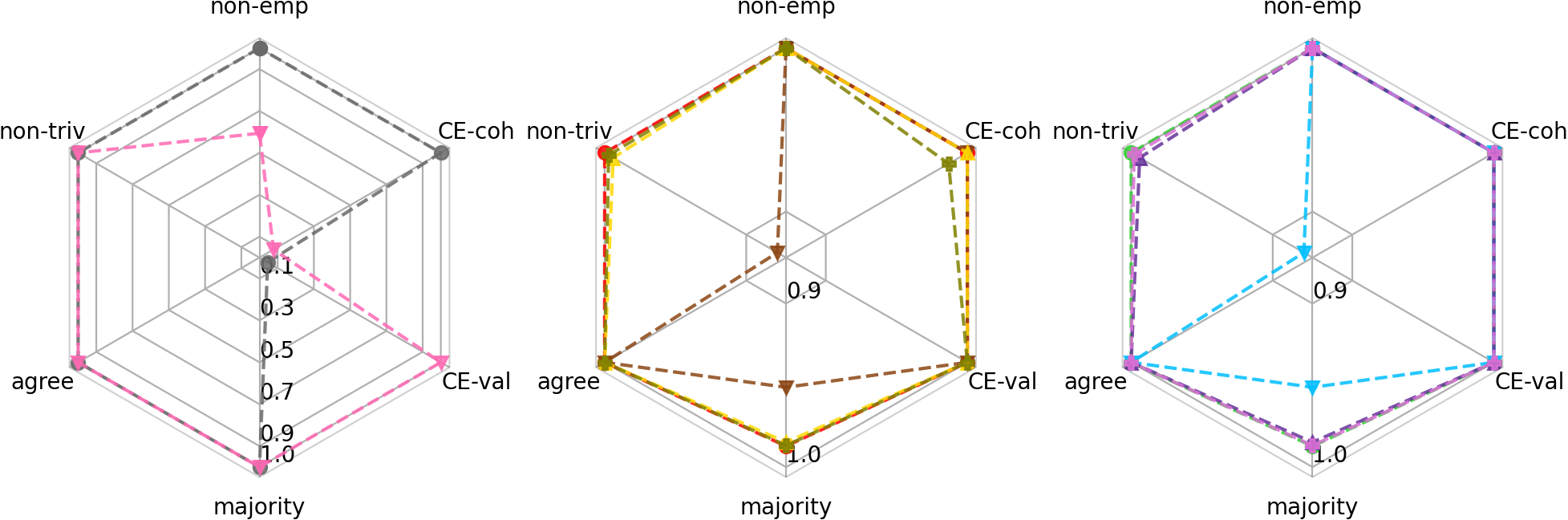}
         \caption{heloc dataset, $|\Models|=20$}
         \label{fig:vis1_heloc}
     \end{subfigure}
     \hfill
     \begin{subfigure}{1\textwidth}
         \centering
         \includegraphics[width=0.8\textwidth]{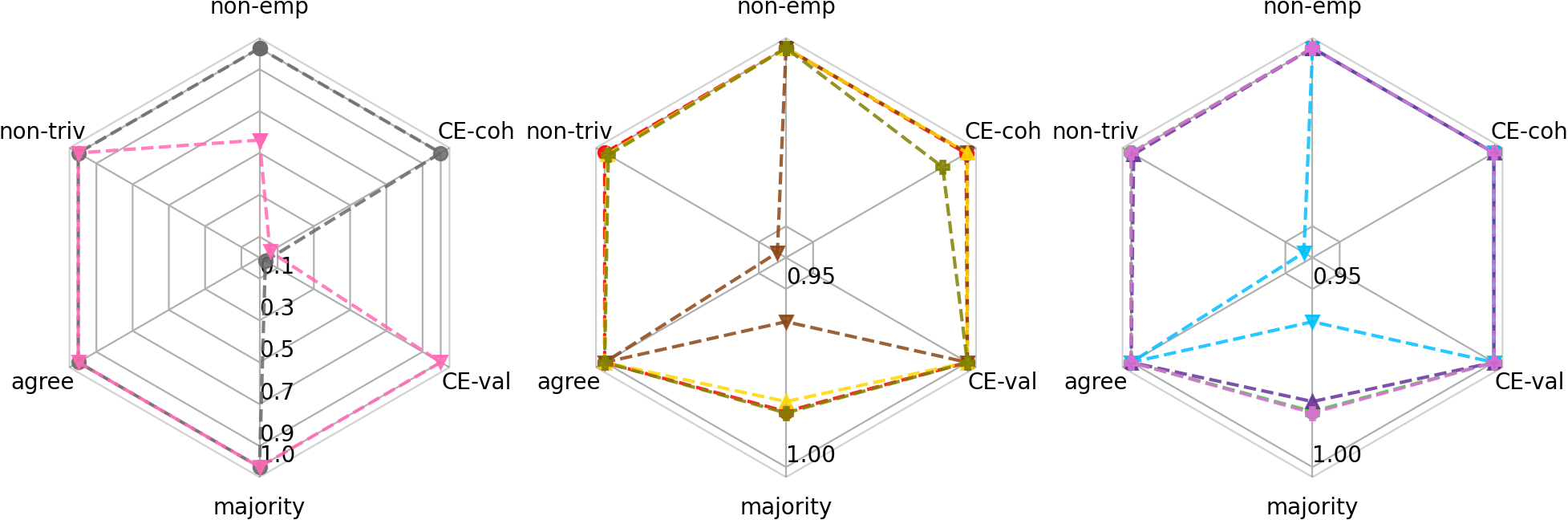}
         \caption{heloc dataset, $|\Models|=30$}
         \label{fig:vis3_heloc}
     \end{subfigure}
        \caption{\JJ{Satisfaction of desirable properties on heloc dataset. The three subplots show results respectively for the two baselines, d-preferred argumentative ensembling, and s-preferred argumentative ensembling.}}
        \label{fig:property_satisfaction_heloc}
\end{figure}

\begin{figure}[h!]
     \centering
     \begin{subfigure}{1\textwidth}
         \centering
         \includegraphics[width=\textwidth]{imgs/legend_only.png}
     \end{subfigure}
     \begin{subfigure}{1\textwidth}
         \centering\includegraphics[width=0.8\textwidth]{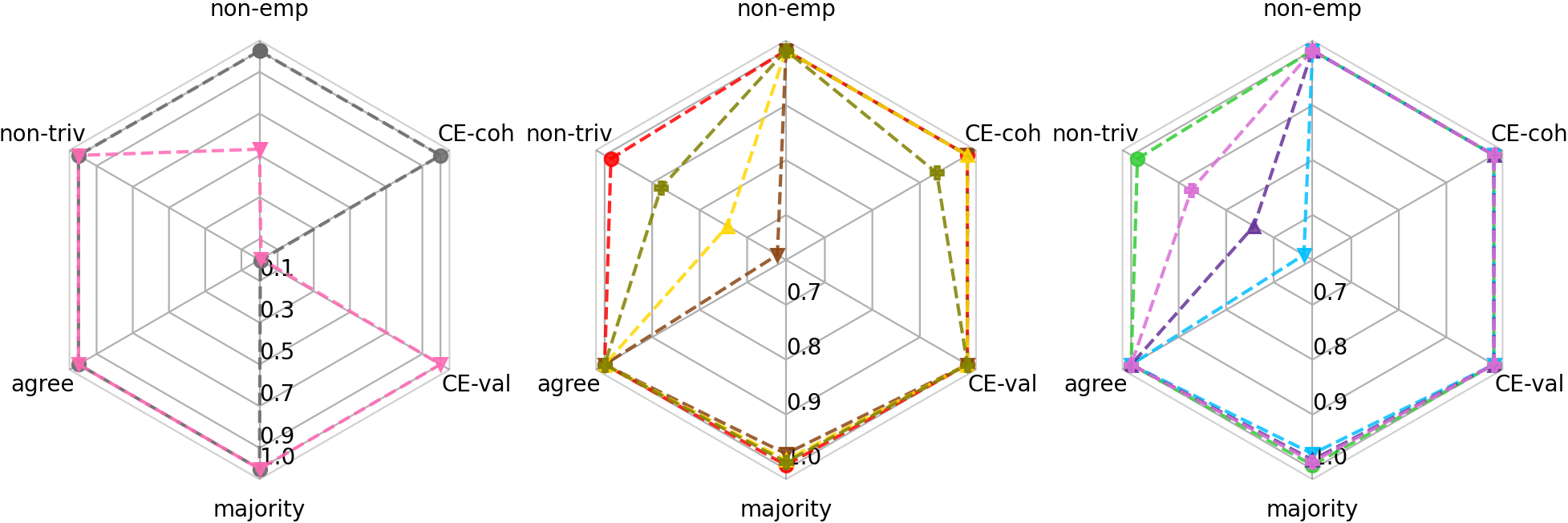}
         \caption{compas dataset, $|\Models|=10$}
         \label{fig:vis1_compas}
     \end{subfigure}
     \begin{subfigure}{1\textwidth}
         \centering
         \includegraphics[width=0.8\textwidth]{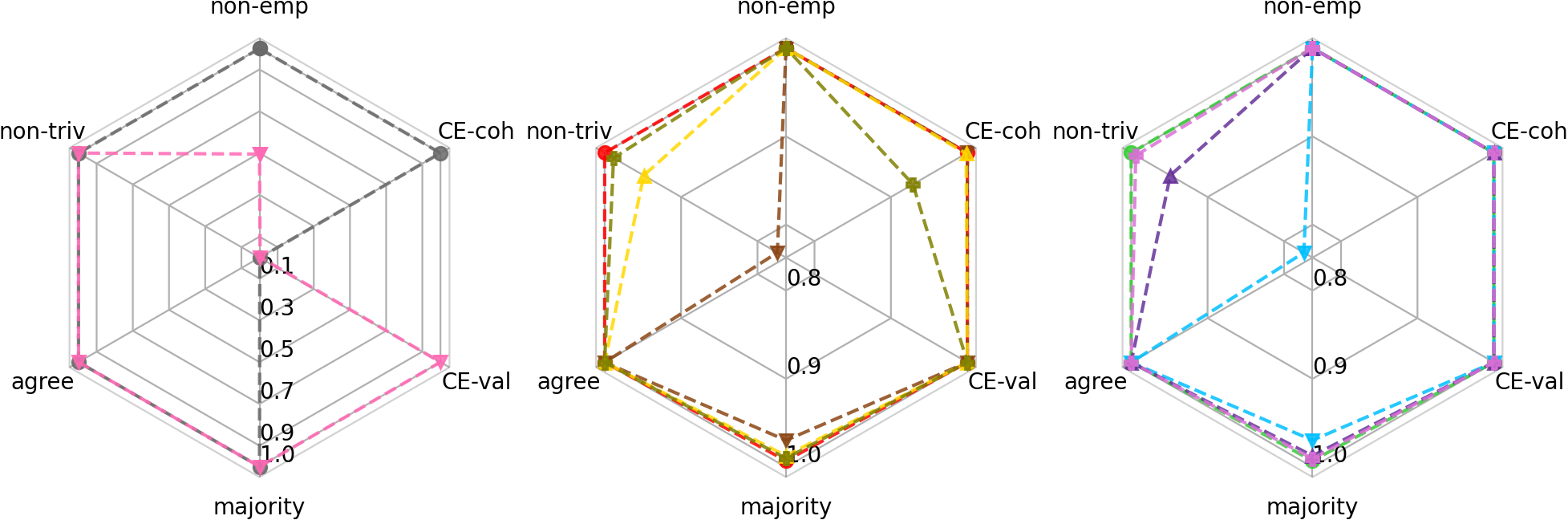}
         \caption{compas dataset, $|\Models|=20$}
         \label{fig:vis2_compas}
     \end{subfigure}
     \hfill
     \begin{subfigure}{0.8\textwidth}
         \centering
         \includegraphics[width=\textwidth]{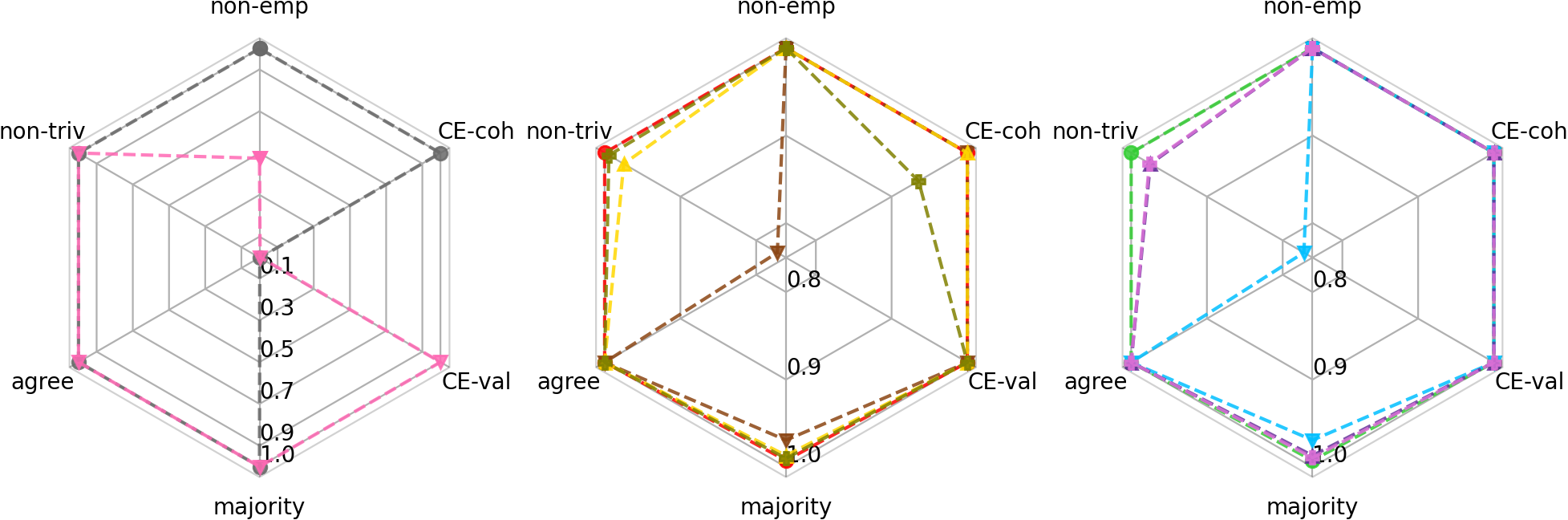}
         \caption{compas dataset, $|\Models|=30$}
         \label{fig:vis3_compas}
     \end{subfigure}
        \caption{\JJ{Satisfaction of desirable properties on compas dataset. The three subplots show results respectively for the two baselines, d-preferred argumentative ensembling, and s-preferred argumentative ensembling.}}
        \label{fig:property_satisfaction_compas}
\end{figure}

\begin{figure}[h!]
     \centering
     \begin{subfigure}{1\textwidth}
         \centering
         \includegraphics[width=\textwidth]{imgs/legend_only.png}
     \end{subfigure}
     \begin{subfigure}{1\textwidth}
         \centering\includegraphics[width=0.8\textwidth]{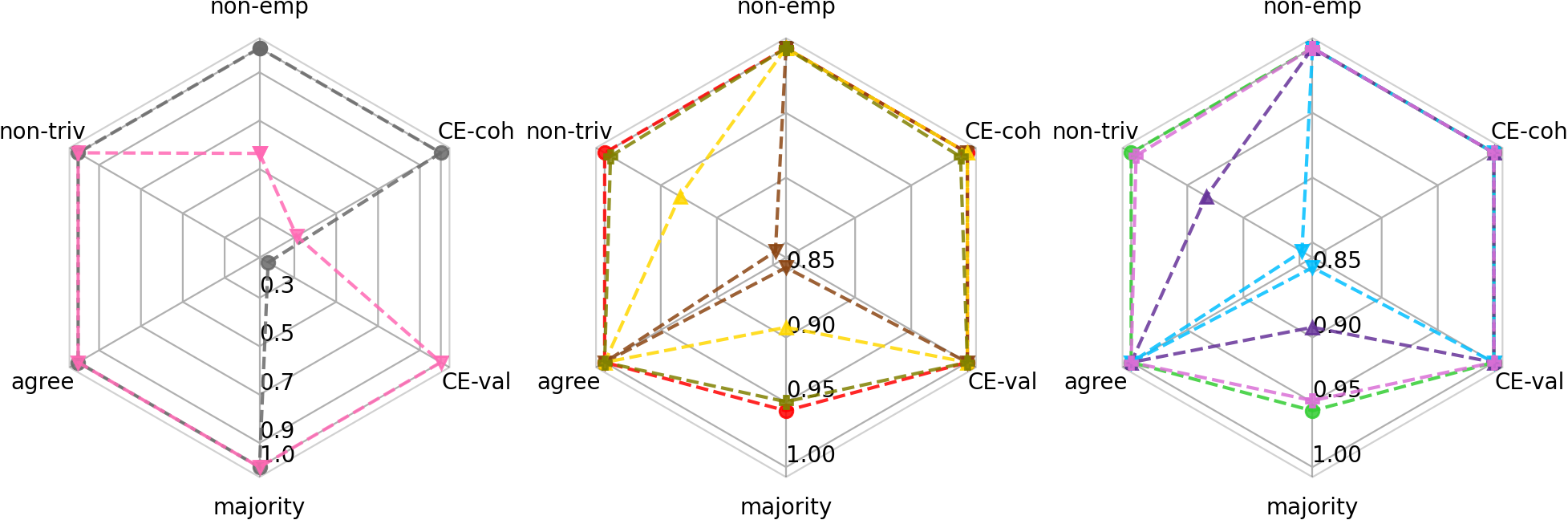}
         \caption{credit dataset, $|\Models|=10$}
         \label{fig:vis1_credit}
     \end{subfigure}
     \begin{subfigure}{1\textwidth}
         \centering
         \includegraphics[width=0.8\textwidth]{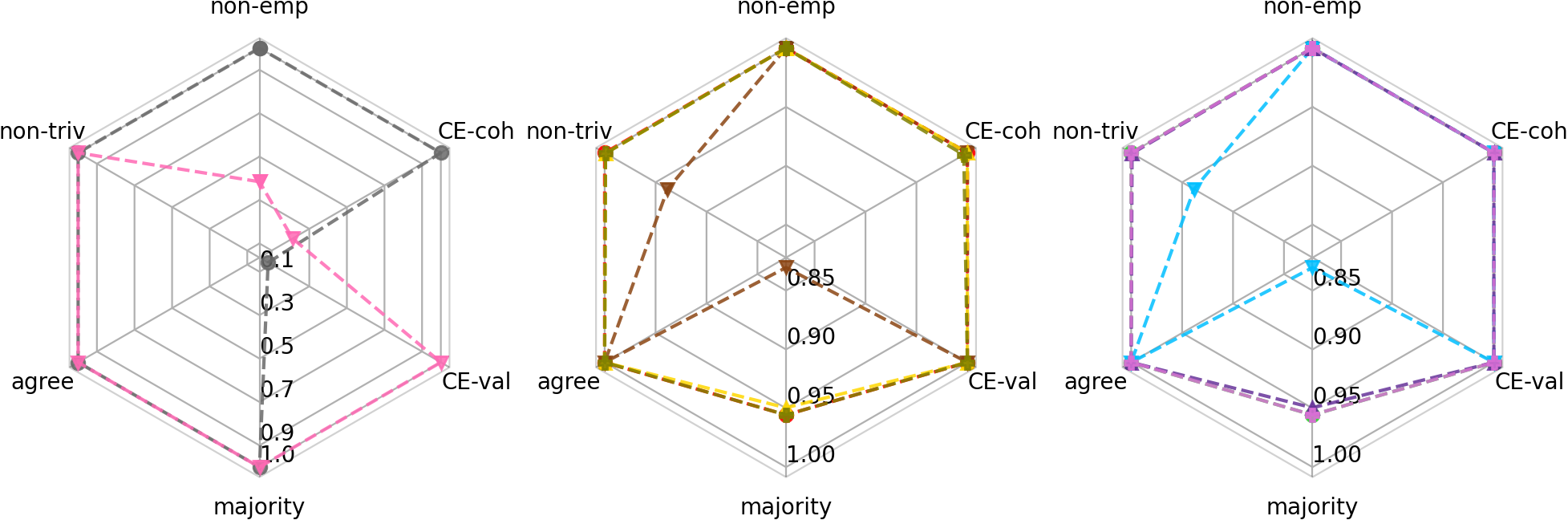}
         \caption{credit dataset, $|\Models|=20$}
         \label{fig:vis2_credit}
     \end{subfigure}
     \hfill
     \begin{subfigure}{1\textwidth}
         \centering
         \includegraphics[width=0.8\textwidth]{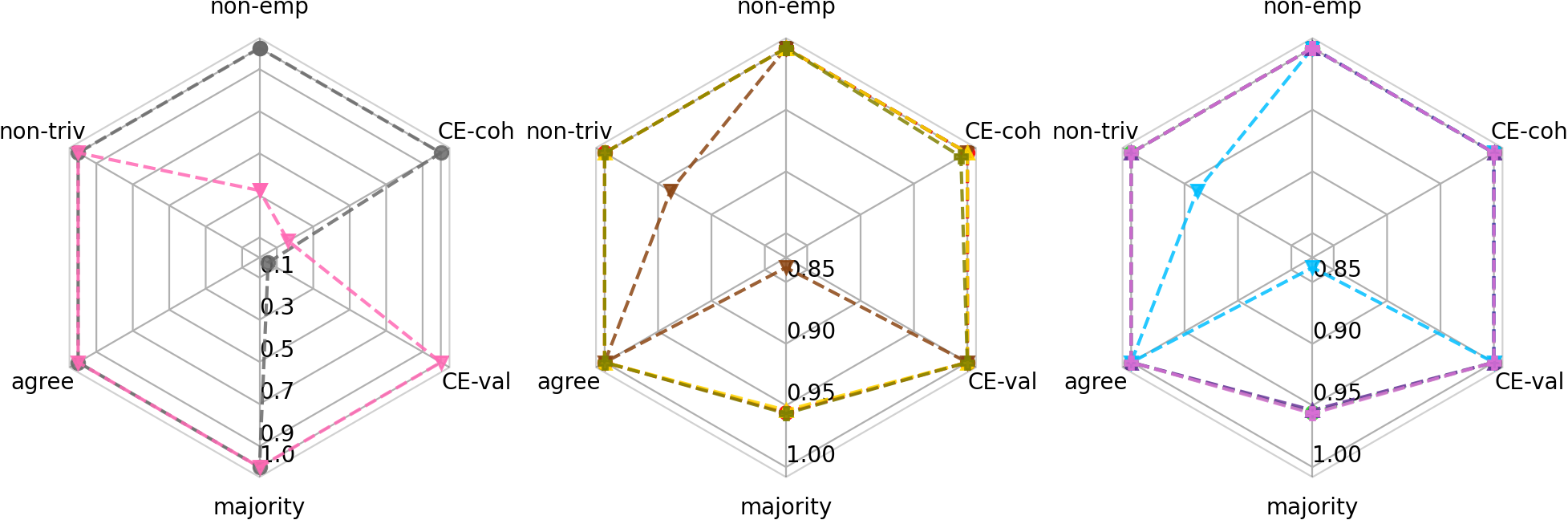}
         \caption{credit dataset, $|\Models|=30$}
         \label{fig:vis3_credit}
     \end{subfigure}
        \caption{\JJ{Satisfaction of desirable properties on credit dataset. The three subplots show results respectively for the two baselines, d-preferred argumentative ensembling, and s-preferred argumentative ensembling.}}
        \label{fig:property_satisfaction_credit}
\end{figure}

\JJ{Property satisfaction rates for the three datasets are visualised in Figures \ref{fig:property_satisfaction_heloc} to \ref{fig:property_satisfaction_credit}. As characterised by our earlier theoretical analysis, both baselines, $S^n$ and $S^v$, have significant limitations in terms of satisfying important properties (see Table \ref{table:props}) for them to be used in practice. $S^n$ demonstrates near-0 counterfactual validity, indicating that for most of the time, there are some CEs in the ensemble solution which are not valid on other models. $S^v$ produces valid CEs over models in the ensemble (100\% counterfactual validity), {but} they do not always exist, as shown by the low non-emptiness scores. In fact, for up to 70\% of test inputs, robust ensembling does not find any 
CEs ($S^v \cap \CFXs=\emptyset$), 
confirming its violation of non-emptiness.}
As $|\Models|$ increases, $S^v$ would require finding CEs which are valid for more models, and the number of CEs found would drop. 

\JJ{Argumentative ensembling, on the other hand, can always return non-empty ensembles while guaranteeing the validity of CEs. We observe that although theoretically d-preferred argumentative ensembling only satisfies model agreement and counterfactual validity, it practically always satisfies non-emptiness, and also achieves counterfactual coherence for over 95\% test points. Contrastively, all $S^{a,s}$ instantiations always satisfies model these four properties. Both d- and s-preferred argumentative ensembling satisfy non-triviality for most of the time, and more frequently so with larger $|\Models|$. They do not satisfy majority vote, although this has \AR{little} effect in terms of task accuracy. The four specific instantiations with different model preferences of $S^{a,d}$ and $S^{a,s}$ perform respectively very similarly to each other. $S^{a,\sigma}-A$ tends to have the lowest non-triviality and majority vote satisfaction, because if there exists a single most accurate model, it will be included in the ensemble (Proposition \ref{prop:dominant_model}), which may behave more differently than the other models. This is most obvious for the credit dataset.}


\subsection{\JJ{Results on Scalability}}
\label{ssec:scalability}

\begin{figure}[h!]
     \centering
     \begin{subfigure}{1\textwidth}
         \centering
         \includegraphics[width=\textwidth]{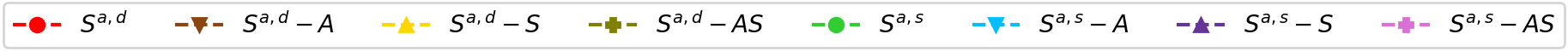}
     \end{subfigure}
     \begin{subfigure}{0.32\textwidth}
     \centering\includegraphics[width=\textwidth]{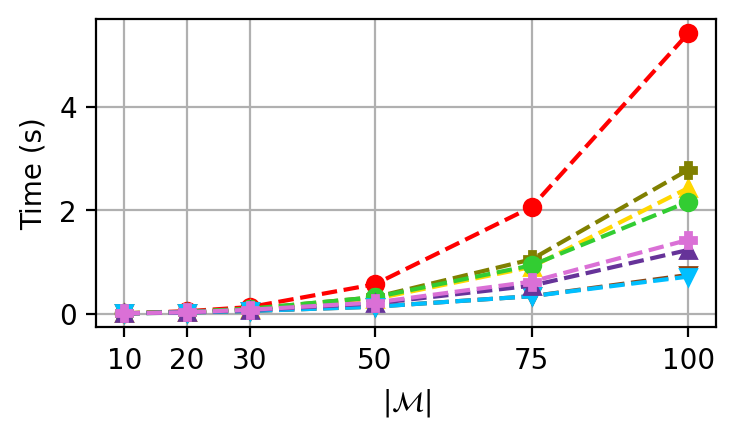}
         \caption{heloc dataset}
         \label{fig:time_heloc}
     \end{subfigure}
     \begin{subfigure}{0.32\textwidth}
         \centering
    \includegraphics[width=\textwidth]{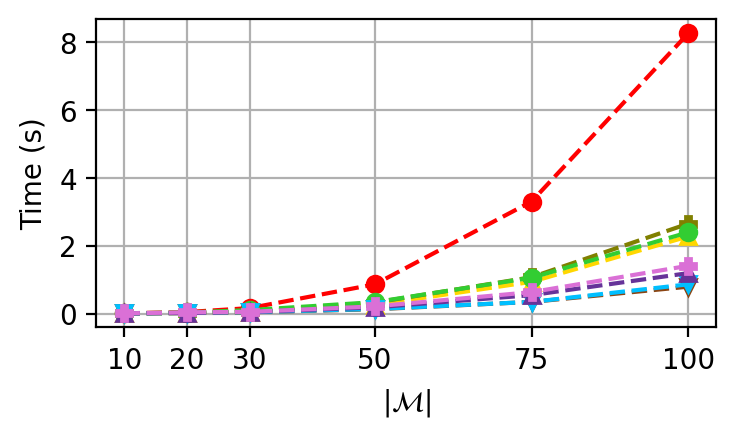}
         \caption{compas dataset}
         \label{fig:time_compas}
     \end{subfigure}
     \hfill
     \begin{subfigure}{0.32\textwidth}
         \centering
    \includegraphics[width=\textwidth]{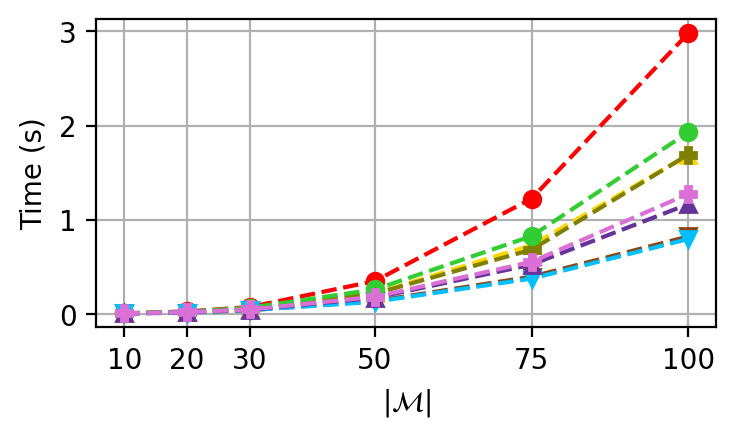}
         \caption{credit dataset}
         \label{fig:time_credit}
     \end{subfigure}
        \caption{\JJ{Average computation time of solving argumentative ensembling for each input.}}
        \label{fig:comp_time}
\end{figure}

\JJ{We further experiment with $|\Models|=50, 75, 100$ to illustrate the computation time required to solve the BAF in argumentative ensembling, although these numbers of models are less likely to occur in real-life applications. The computation time results are reported in Figure \ref{fig:comp_time}. It can be seen that $S^{a,s}$ instantiations are faster to solve than their d-preferred counterparts since they are more restrictive and thus have smaller search spaces. When experimenting with less than 30 models, the computation times are close to 0 for three datasets. Time differences between datasets for the same $|\Models|$, which are only governed by the number of conflicts in model predictions and CE validities, become more obvious as $|\Models|$ grows. Note that another part of the computation time of argumentative ensembling not reported in Figure \ref{fig:comp_time} is the time required to construct the BAF itself. This involves testing every CE on every model in $\Models$, and is $O(|\Models|^2)$ to the inference time of each model.}

\section{Conclusions and Future Work}
\label{sec:conclusion}


We
have presented a formal study of the problem of providing recourse under MM.
We defined several 
properties which are desirable in methods for solving this problem, highlighting deficiencies in 
extending conservatively the standard naive ensembling used for MM without recourse.
We have then introduced argumentative ensembling, a novel method for providing recourse under MM, which leverages computational argumentation to incorporate robustness guarantees and user preferences over models. 
We show, by means of a theoretical analysis, that argumentative ensembling hosts advantages over 
other methods, notably in non-emptiness of solutions and validity of CEs, notwithstanding its ability to handle user preferences.
This is, however, achieved by sacrificing the satisfaction of the property of majority vote
.
Our \emph{empirical} results, {however,} demonstrate that 
argumentative ensembling always finds valid CEs without compromising prediction accuracy, and shows the usefulness of specifying preferences over models.

\JJ{This work comes with some limitations. First, we assumed the availability of a valid CE for each model to instantiate the RAE problem. However, in practice, methods for finding CEs, especially the heuristic-based methods using gradient descent, may not always find valid CEs, limiting the range of CE generation methods to be used in our framework. It would be interesting to account for invalid CEs in the framework and leverage the power of argumentation to resolve inconsistencies and allow for more flexibility. Second, the desirable properties we proposed for RAE solutions are fundamental, binary properties. Additional metrics taking continuous values could potentially provide more fine-grained evaluations.}

This paper opens up several interesting directions for future work.
First, it would be interesting to examine whether considering attacks to or from \emph{sets} of arguments, rather than single arguments, as in \cite{Nielsen_06,Flouris_19,Dvorak_22,Dimopoulos_23}, 
may help in MM.
Further, extended AFs~\cite{Modgil_09} and value-based AFs \cite{value-based} may provide useful alternative ways to account for preferences.
We would also like to exploit the explanatory potential of argumentation to 
support explainable ensembling, e.g., using sub-graphs as in~\cite{Fan_14,Zeng_19}.
Moreover, in order to support experiments with a high number of models (beyond the 30 we considered),  large-scale argumentation solvers would be highly desirable. \JJ{In terms of CEs used, future research could also investigate how different types of CEs (e.g., plausible or not plausible, robust or not robust) interact with the property satisfaction in our context.}
Finally, it would be interesting to assess the effect which MM has on users' evaluations of CEs. 

\section*{Acknowledgements}
Jiang, Rago and Toni were partially funded by J.P. Morgan and by the Royal Academy of Engineering under the Research Chairs and Senior Research Fellowships scheme. 
Leofante was funded by Imperial College London through under the Imperial College Research Fellowship scheme. 
Rago and Toni were partially funded by the European Research Council (ERC) under the European Union’s Horizon 2020 research and innovation programme (grant agreement No. 101020934). 
Any views or opinions expressed herein are solely those of the authors listed.

\newpage

\bibliographystyle{elsarticle-num-names} 
\bibliography{refs}

\end{document}